\def\eqref#1{equation~\ref{#1}}
\def\1{\bm{1}}
\def\valpha{{\bm{\alpha}}}
\def\vell{{\bm{\ell}}}
\def\va{{\bm{a}}}
\def\vh{{\bm{h}}}
\def\mW{{\bm{W}}}
\DeclareMathAlphabet{\mathsfit}{\encodingdefault}{\sfdefault}{m}{sl}
\SetMathAlphabet{\mathsfit}{bold}{\encodingdefault}{\sfdefault}{bx}{n}
\def\gG{{\mathcal{G}}}
\def\sC{{\mathbb{C}}}
\def\sH{{\mathbb{H}}}
\def\sN{{\mathbb{N}}}
\newcommand{\R}{\mathbb{R}}
\newcommand{\sigmoid}{\sigma}
\newtheorem{proposition}{Proposition}
\def \hfillx {\hspace*{-\textwidth} \hfill}
\newcommand{\pvalue}[0]{\textit{p}-value}
\newcommand{\comment}[1]{}
\newcommand*{\scale}[2][4]{\scalebox{#1}{$#2$}}%
\newcommand*\bigcdot{\mathpalette\bigcdot@{.5}}
\newcommand*\bigcdot@[2]{\mathbin{\vcenter{\hbox{\scalebox{#2}{$\ \ \m@th#1\bullet\ \ $}}}}}
\newcommand\GO[0]{\text{GO}}
\newcommand\DP[0]{\text{DP}}
\newcommand\GATGO[0]{GAT\textsubscript{GO}}
\newcommand\GATDP[0]{GAT\textsubscript{DP}}
\newcommand\SuperGATSD[0]{SuperGAT\textsubscript{SD}}
\newcommand\SuperGATMX[0]{SuperGAT\textsubscript{MX}}
\newcommand\GATGOb[0]{GAT\textsubscript{GO} }
\newcommand\GATDPb[0]{GAT\textsubscript{DP} }
\newcommand\SuperGATGOb[0]{SuperGAT\textsubscript{GO} }
\newcommand\SuperGATDPb[0]{SuperGAT\textsubscript{DP} }
\newcommand\SuperGATSDb[0]{SuperGAT\textsubscript{SD} }
\newcommand\SuperGATMXb[0]{SuperGAT\textsubscript{MX} }
\definecolor{weakgray}{HTML}{EFEFEF}
\definecolor{weakblue}{HTML}{E3F2FD}
\definecolor{weakred}{HTML}{FFEBEE}
\definecolor{weakpurple}{HTML}{EDE7F6}
\definecolor{middlegray}{HTML}{BDBDBD}
\definecolor{middleblue}{HTML}{BBDEFB}
\definecolor{middlered}{HTML}{EF9A9A}
\definecolor{middlepurple}{HTML}{E1BEE7}
\definecolor{strongblue}{HTML}{1976D2}
\definecolor{strongred}{HTML}{D32F2F}
\definecolor{strongpurple}{HTML}{7B1FA2}
\newcommand\cellwr{\cellcolor{weakred}}
\newcommand\cellwb{\cellcolor{weakblue}}
\newcommand\cellwg{\cellcolor{weakgray}}
\newcommand\cellwp{\cellcolor{weakpurple}}
\newcolumntype{g}{>{\columncolor{weakgray}}l}
\newcolumntype{b}{>{\columncolor{weakblue}}l}
\newcolumntype{p}{>{\columncolor{weakpurple}}l}
\newcolumntype{e}{>{\columncolor{weakred}}l}
\title{
How to Find Your Friendly Neighborhood: \\
Graph Attention Design with Self-Supervision
}
\author{Dongkwan Kim \& Alice Oh \\
KAIST, Republic of Korea \\
\texttt{dongkwan.kim@kaist.ac.kr, alice.oh@kaist.edu} \\
}
\begin{document}

\maketitle
\vspace{-0.2cm}

\begin{abstract}
Attention mechanism in graph neural networks is designed to assign larger weights to important neighbor nodes for better representation. However, what graph attention learns is not understood well, particularly when graphs are noisy. In this paper, we propose a self-supervised graph attention network (SuperGAT), an improved graph attention model for noisy graphs. Specifically, we exploit two attention forms compatible with a self-supervised task to predict edges, whose presence and absence contain the inherent information about the importance of the relationships between nodes. By encoding edges, SuperGAT learns more expressive attention in distinguishing mislinked neighbors. We find two graph characteristics influence the effectiveness of attention forms and self-supervision: homophily and average degree. Thus, our recipe provides guidance on which attention design to use when those two graph characteristics are known. Our experiment on 17 real-world datasets demonstrates that our recipe generalizes across 15 datasets of them, and our models designed by recipe show improved performance over baselines.
\end{abstract}

\section{Introduction}
Graphs are widely used in various domains, such as social networks, biology, and chemistry. Since their patterns are complex and irregular, learning to represent graphs is challenging~\citep{bruna2013spectral, henaff2015deep, defferrard2016convolutional, duvenaud2015convolutional, atwood2016diffusion}.
Recently, graph neural networks (GNNs) have shown a significant performance improvement by generating features of the center node by aggregating those of its neighbors~\citep{zhou2018graph, wu2020comprehensive}.
However, real-world graphs are often noisy with connections between unrelated nodes, and this causes GNNs to learn suboptimal representations.
Graph attention networks (GATs)~\citep{velickovic2018graph} adopt self-attention to alleviate this issue. Similar to attention in sequential data~\citep{luong2015effective, bahdanau2015neural, vaswani2017attention}, graph attention captures the \textit{relational importance of a graph}, in other words, the degree of importance of each of the neighbors to represent the center node.
GATs have shown performance improvements in node classification, but they are inconsistent in the degree of improvement across datasets, and there is little understanding of what graph attention actually learns.

Hence, there is still room for graph attention to improve, and we start by assessing and learning the relational importance for each graph via self-supervised attention. 
We leverage edges that explicitly encode information about the importance of relations provided by a graph. If node $i$ and $j$ are linked, they are more relevant to each other than others, and if node $i$ and $j$ are not linked, they are not important to each other. Although conventional attention is trained without direct supervision, if we have prior knowledge about what to attend, we can supervise attention using them~\citep{knyazev2019understanding,yu2017supervising}. Specifically, we exploit a self-supervised task, using the attention value as input to predict the likelihood that an edge exists between nodes.

To encode edges in graph attention, we first analyze what graph attention learns and how it relates to the presence of edges. In this analysis, we focus on two commonly used attention mechanisms, GAT's original single-layer neural network (GO) and dot-product (DP), as building blocks of our proposed model, self-supervised graph attention network (SuperGAT). We observe that DP attention shows better performance than GO attention in the task to predict link with attention value. On the other hand, GO attention outperforms DP attention in capturing label-agreement between a target node and its neighbors. Based on our analysis, we propose two variants of SuperGAT, scaled dot-product (SD) and mixed GO and DP (MX), to emphasize the strength of GO and DP.

Then, which graph attention models the relational importance best and produces the best node representations? We find that it depends on the average degree and homophily of the graph. We generate synthetic graph datasets with various degrees and homophily, and analyze how the choice of attention affects node classification performance. Based on this result, we propose a recipe to design graph attention with edge self-supervision that works most effectively for given graph characteristics. We conduct experiments on a total of 17 real-world datasets and demonstrate that our recipe can be generalized across them. In addition, we show that models developed by our method improve performance over baselines.

We present the following contributions. First, we present models with self-supervised attention using edge information.
Second, we analyze the classic attention forms GO and DP using label-agreement and link prediction tasks, and this analysis reveals that GO is better at label agreement and DP at link prediction.
Third, we propose recipes to design graph attention concerning homophily and average degree and confirm its validity through experiments on real-world datasets. We make our code available for future research (\url{https://github.com/dongkwan-kim/SuperGAT}).

\section{Related Work}
Deep neural networks are actively studied in modeling graphs, for example the graph convolutional networks~\citep{kipf2017semi} which approximate spectral graph convolution~\citep{bruna2013spectral, defferrard2016convolutional}.
A representative work in a non-spectral way is the graph attention networks (GATs)~\citep{velickovic2018graph} which model relations in graphs using self-attention mechanism~\citep{vaswani2017attention}.
Similar to attention in sequence data~\citep{bahdanau2015neural, luong2015effective, vaswani2017attention}, variants of attention in graph neural networks~\citep{thekumparampil2018attention, Zhang2018GaANGA, wang2019improving, gao2019graphrepresentation, zhang2019adaptive, hou2019measuring} are trained without direct supervision. Our work is motivated by studies that improve attention's expressive power by giving direct supervision~\citep{knyazev2019understanding, yu2017supervising}. Specifically, we employ a self-supervised task to predict edge presence from attention value. This is in line with two branches of recent GNN research: self-supervision and graph structure learning.

Recent studies about self-supervised learning for GNNs propose tasks leveraging the inherent information in the graph structure: clustering, partitioning, context prediction after node masking, and completion after attribute masking~\citep{hu2019strategies, hui2020collaborative, sun2020multi, you2020does}. To the best of our knowledge, ours is the first study to analyze  self-supervised learning of graph attention with edge information. Our self-supervised task is similar to link prediction~\citep{liben2007link}, which is a well-studied problem and recently tackled by neural networks~\citep{zhang2017weisfeiler, zhang2018link}. Our DP attention to predict links is motivated by graph autoencoder (GAE)~\citep{kipf2016variational} and its extensions~\citep{pan2018adversarially, park2019symmetric} reconstructing edges by applying a dot-product decoder to node representations.

Graph structure learning is an approach to learn the underlying graph structure while jointly learning downstream tasks~\citep{jiang2019semi, franceschi2019learning, klicpera2019diffusion, stretcu2019graph, zheng2020robust}. Since real-world graphs often have noisy edges, encoding structure information contributes to learn better representation. However, recent models with graph structure learning suffer from high memory and computational complexity. Some studies target all spaces where edges can exist, so they require $O(|V|^2)$ space and computational complexity~\citep{jiang2019semi, franceschi2019learning}. Others using iterative training (or co-training) between the GNNs and the structure learning model are time-intensive in training~\citep{franceschi2019learning, stretcu2019graph}.
We moderate this problem using graph attention, which consists of parallelizable operations, and our model is built on it without additional parameters. Our model learns attention values that are predictive of edges, and this can be seen as a new paradigm of learning the graph structure.

\section{Model}

In this section, we review the original GAT~\citep{velickovic2018graph} and then describe our self-supervised GAT (SuperGAT) models.

\paragraph{Notation}

For a graph $\gG = (V, E)$, $N$ is the number of nodes and $F^{l}$ is the number of features at layer $l$. Graph attention layer takes a set of features $\sH^{l}=\{ \vh_{1}^{l}, \dots, \vh_{N}^{l} \}, \vh_{i}^{l} \in \R^{F^{l}}$ as input and produces output features $\sH^{l+1}=\{ \vh_{1}^{l+1}, \dots, \vh_{N}^{l+1} \}$. 
To compute $\vh_{i}^{l+1}$, the model multiplies the weight matrix $\mW^{l+1} \in \R^{ F^{l+1} \times F^{l} }$ to $\sH^{l}$, linearly combines the features of its first-order neighbors (including itself) $j \in \sN_{i} \cup \{ i \}$ by attention coefficients $\alpha_{ij}^{l+1}$, and finally applies a non-linear activation $\rho$. That is $\vh_{i}^{l+1} = \textstyle \rho \left(\sum_{j \in \sN_{i} \cup \{i\}} \alpha_{i j}^{l+1} \mW^{l+1} \vh_{j}^{l} \right)$. 
We can compute $\alpha_{i j}^{l+1} = \operatorname{softmax}_{j}( \text{LReLU} ( e_{ij}^{l+1} ))$ by normalizing $e_{ij}^{l+1} = a_{e} (\mW^{l+1} \vh_{i}^{l}, \mW^{l+1} \vh_{j}^{l})$ with softmax on $\sN_{i} \cup \{ i \}$ under leaky ReLU activation~\citep{maas2013rectifier}, where $a_{e}$ is a function of the form  $\R^{F^{l+1}} \times \R^{F^{l+1}} \rightarrow \R$.

\paragraph{Graph Attention Forms}

Among two widely used attention mechanisms, the original GAT (GO) computes the coefficients by single-layer feed-forward network parameterized by $\va^{l+1} \in \R^{2F^{l+1}}$. The other is the dot-product (DP) attention,~\citep{luong2015effective, vaswani2017attention} motivated by prior work on node representation learning, and it adopts the same mathematical expression for link prediction score~\citep{tang2015line, kipf2016variational}, 
\begin{align}
    e_{ij, \GO}^{l+1} =  (\va^{l+1})^{\top}\left[\mW^{l+1} \vh_{i}^{l} \| \mW^{l+1} \vh_{j}^{l} \right]
    \quad \text{and}\quad 
    e_{ij, \DP}^{l+1} = 
        (\mW^{l+1} \vh_{i}^{l})^{\top} \bigcdot
        \mW^{l+1} \vh_{j}^{l}.
\end{align}
From now on, we call GAT that uses GO and DP as \GATGOb and \GATDP, respectively.

\begin{figure*}[t]
  \centering
  \includegraphics[height=4.4cm]{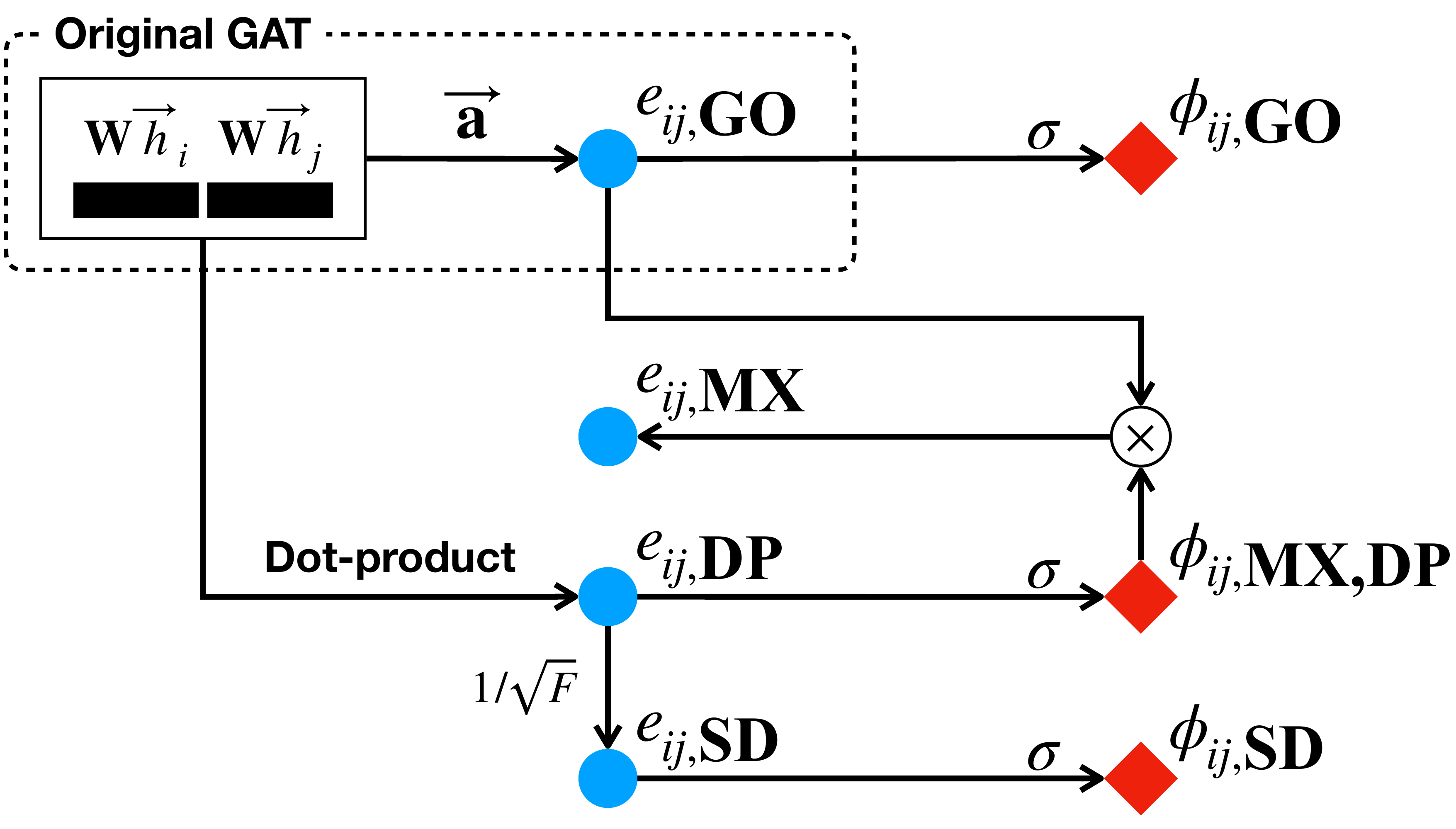}
  \includegraphics[height=4.4cm]{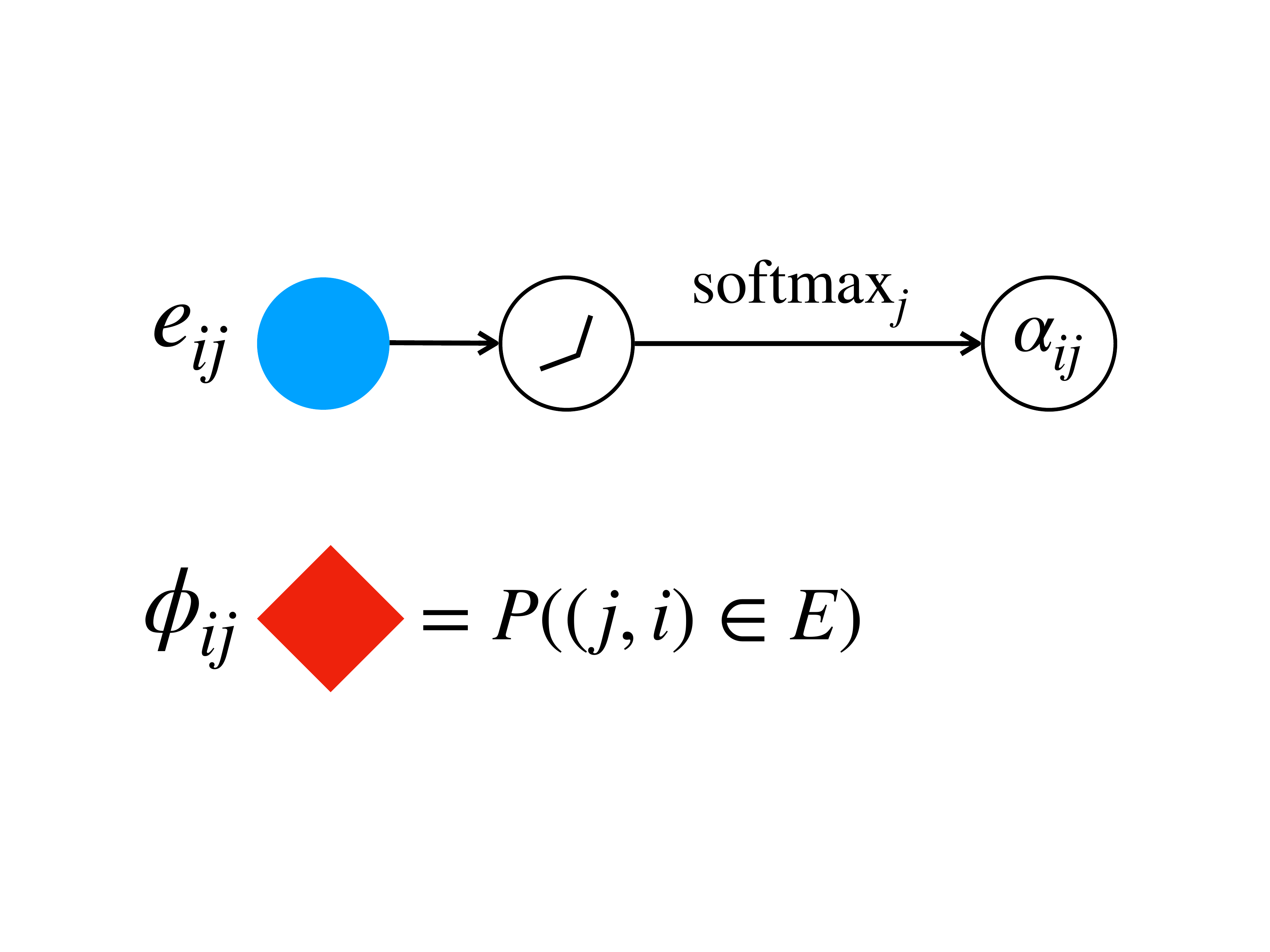}
  \caption{Overview of attention mechanism of SuperGATs: GO, DP, MX, and SD. Blue circles ($e_{ij}$) represent the unnormalized attention before softmax and red diamonds ($\phi_{ij}$) indicate the probability of edge between node $i$ and $j$. The attention mechanism of the original GAT~\citep{velickovic2018graph} is in the dashed rectangle.}
  \label{fig:model-figure}
\end{figure*}

\paragraph{Self-supervised Graph Attention Network}

We propose SuperGAT with the idea of guiding attention with the presence or absence of an edge between a node pair. We exploit the link prediction task to self-supervise attention with labels from edges: for a pair $i$ and $j$, 1 if an edge exists and 0 otherwise. We introduce $a_{\phi}$ with sigmoid $\sigmoid$ to infer the probability $\phi_{ij}$ of an edge between $i$ and $j$.
\begin{align}
    a_{\phi}: \R^{F} \times \R^{F} \rightarrow \R
    \quad \text{and}\quad 
    \phi_{ij} = P \left( (j, i) \in E \right) 
    = \sigmoid (
        a_{\phi} (\mW \vh_{i}, \mW \vh_{j})
    )
\end{align}
We employ four types (GO, DP, SD, and MX) of SuperGAT based on GO and DP attention. For $a_{\phi}$, the form of which is the same as $a_{e}$ in \GATGOb and \GATDP, we name them \SuperGATGOb and \SuperGATDPb respectively. For more advanced versions, we describe \SuperGATSDb (Scaled Dot-product) and \SuperGATMXb (Mixed GO and DP) by unnormalized attention $e_{ij}$ and probability $\phi_{ij}$ that an edge exist between $i$ and $j$.
\begin{align}
\text{\SuperGATSD: }&
e_{ij, \text{SD}} =  e_{ij, \text{DP}} / \sqrt{F},
\quad
\phi_{ij, \text{SD}} = \sigmoid(e_{ij, \text{SD}}). \\
\text{\SuperGATMX: }&
e_{ij, \text{MX}} =  e_{ij, \text{GO}} \cdot \sigmoid(e_{ij, \text{DP}}) ,
\quad
\phi_{ij, \text{MX}} = \sigmoid(e_{ij, \text{DP}}).
\end{align}
\SuperGATSDb divides the dot-product of nodes by a square root of dimension as Transformer~\citep{vaswani2017attention}. This prevents some large values to dominate the entire attention after softmax. \SuperGATMXb multiplies GO and DP attention with sigmoid. The motivation of this form comes from the gating mechanism of Gated Recurrent Units~\citep{learning2014cho}. Since DP attention with the sigmoid represents the probability of an edge, it can softly drop neighbors that are not likely linked while implicitly assigning importance to the remaining nodes.

Training samples are a set of edges $E$ and the complementary set $E^{c} = (V \times V) \setminus{E}$. However, if the number of nodes is large, it is not efficient to use all possible negative cases in $E^{c}$. So, we use negative sampling as in training word or graph embeddings~\citep{mikolov2013efficient, tang2015line, grover2016node2vec}, arbitrarily choosing a total of $p_{n} \cdot |E|$ negative samples $E^{-}$ from $E^{c}$ where the negative sampling ratio $p_{n} \in \R^{+}$ is a hyperparameter.
SuperGAT is capable of modeling graphs that are sparse with a sufficiently large number of negative samples (i.e., $|V \times V| \gg |E|$), but this is generally not a problem because most real-world graphs are sparse~\citep{chung2010graph}.

We define the optimization objective of layer $l$ as a binary cross-entropy loss $\mathcal{L}_{E}^{l}$,
\begin{alignat}{1}
\begin{aligned}
    \mathcal{L}_{E}^{l} = \textstyle
    - \frac{1}{|E \cup E^{-}|}
    \sum_{(j, i) \in E \cup E^{-} }
    \1_{(j, i) = 1} \cdot \log \phi_{ij}^{l}
    + \1_{(j, i) = 0} \cdot \log \left( 1 - \phi_{ij}^{l} \right),
\end{aligned}
\end{alignat}
where $\1_{\cdot}$ is an indicator function. We use a subset of $E \cup E^{-}$ sampled by probability $p_{e} \in (0, 1]$ (also a hyperparameter) at each training iteration for a regularization effect from randomness.

Finally, we combine cross-entropy loss on node labels ($\mathcal{L}_{V}$), self-supervised graph attention losses for all $L$ layers ($\mathcal{L}_{E}^{l}$), and L2 regularization loss, with mixing coefficients $\lambda_{E}$ and $\lambda_{2}$.
\begin{align}\label{eq:full_loss}
    \mathcal{L} \textstyle = \mathcal{L}_{V} + \lambda_{E} \cdot \sum_{l=1}^{L} \mathcal{L}_{E}^{l} + \lambda_{2} \cdot \| \mW \|_{2}.
\end{align}
We use the same form of multi-head attention in GAT and take the mean of each head's attention value before the sigmoid to compute $\phi_{ij}$. Note that SuperGAT has equivalent time and space complexity as GAT. To compute $\mathcal{L}_{E}^{l}$ for one head, we need additional operations in terms of $O(F^{l} \cdot |E \cup E^{-}|)$, and we do not need extra parameters.

\section{Experiments}\label{sec:experiments}
Our primary research objective is to design graph attentions that are effective with edge self-supervision. To do this, we pose four specific research questions. We first analyze what basic graph attentions (GO and DP) learn (RQ1 and 2) and how that can be improved with edge self-supervision (RQ3 and 4). We describe each research question and the corresponding experiment design below.

\paragraph{RQ1. Does graph attention learn label-agreement?}

First, we evaluate what the graph attentions of \GATGOb and \GATDPb learn without edge supervision. For this, we present ground-truth of relational importance and a metric to assess graph attention with ground-truth.
\citet{wang2019improving} showed that node representations in the connected component converge to the same value in deep GATs. If there is an edge between nodes with different labels, then it will be hard to distinguish the two corresponding labels with GAT of sufficiently many layers; that is, ideal attention should give all weights to label-agreed neighbors. In that sense, we choose label-agreement between nodes as ground-truth of importance.

We compare label-agreement and graph attention based on Kullback–Leibler divergence of the normalized attention $\valpha_{k} = \left[\alpha_{kk}, \alpha_{k1}, \dots, \alpha_{kJ} \right]$ with label agreement distribution for the center node $k$ and its neighbors $1$ to $J$. The label agreement distribution, $\vell_{k} = \left[ \ell_{kk}, \ell_{k1}, \dots, \ell_{kJ} \right]$ is defined by,
\begin{align}\textstyle
    \ell_{kj} = \hat{\ell}_{kj} / \sum_{s} \hat{\ell}_{ks} ,\quad
    \hat{\ell}_{kj} = 1 \ (\text{if}\ k\ \text{and}\ j\ \text{have the same label})\ \text{or}\ 0\ (\text{otherwise}).
\end{align}
We employ KL divergence in Eq.~\ref{eq:kld}, whose value becomes small when attention captures well the label-agreement between a node and its neighbors.
\begin{equation}\label{eq:kld}
    \text{KLD}(\valpha_{k}, \vell_{k}) = \textstyle
    \sum_{j \in \sN_{k} \cup \{ k \}}
    \alpha_{kj} \log (\alpha_{kj} / \ell_{kj})
\end{equation}

\paragraph{RQ2. Is graph attention predictive of edge presence?}

To evaluate how well edge information is encoded in SuperGAT, we conduct link prediction experiments with \SuperGATGOb and \SuperGATDPb using $\phi_{ij}$ of the last layer as a predictor. We measure the performance by AUC over multiple runs. Since link prediction performance depends on the mixing coefficient $\lambda_{E}$ in Eq.~\ref{eq:full_loss}, we adopt multiple $\lambda_{E} \in \{ 10^{-3}, 10^{-2}, \dots, 10^{3} \}$. We train with an incomplete set of edges, and test with the missing edges and the same number of negative samples. At the same time, node classification performance is measured with the same settings to see how learning edge presence affects node classification.

\paragraph{RQ3. Which graph attention should we use for given graphs?}

The above two research questions explore what different graph attention learns with or without supervision of edge presence. Then, which graph attention is effective among them for given graphs?
We hypothesize that different graph attention will have different abilities to model graphs under various homophily and average degree. We choose these two properties among various graph statistics because they determine the quality and quantity of labels in our self-supervised task. From the perspective of supervised learning of graph attention with edge labels, the learning result depends on \textit{how noisy labels are} (i.e., how low the homophily is) and \textit{how many labels exist} (i.e., how high the average degree is).
So, we generate 144 synthetic graphs (Section~\ref{sec:synthetic-data}) controlling 9 homophily (0.1 -- 0.9) and 16 average degree (1 -- 100) and perform the node classification task in the transductive setting with GCN, \GATGO, \SuperGATSD, and \SuperGATMX.

In RQ3, there are also practical reasons to use the average degree and homophily, out of many graph properties (e.g., diameter, degree sequence, degree distribution, average clustering coefficient). First, the graph property can be computed efficiently even for large graphs. Second, there should be an algorithm that can generate graphs by controlling the property of interest only. Third, the property should be a scalar value because if the synthetic graph space is too wide, it would be impossible to conduct an experiment with sufficient coverage. Average degree and homophily satisfy the above conditions and are suitable for our experiment, unlike some of the other graph properties.

\paragraph{RQ4. Does design choice based on RQ3 generalize to real-world datasets?}

Experiments on synthetic datasets provide an understanding of graph attention models' performance, but they are oversimplified versions of real-world graphs. Can design choice from synthetic datasets be generalized to real-world datasets, considering more complex structures and rich features in real-world graphs? To answer this question, we conduct experiments on 17 real-world datasets with the various average degree (1.8 -- 35.8) and homophily (0.16 -- 0.91), and compare them with synthetic graph experiments in RQ3.

\subsection{Datasets}

\paragraph{Real-world datasets}

We use a total of 17 real-world datasets (Cora, CiteSeer, PubMed, Cora-ML, Cora-Full, DBLP, ogbn-arxiv, CS, Physics, Photo, Computers, Wiki-CS, Four-Univ, Chameleon, Crocodile, Flickr, and PPI) in diverse domains (citation, co-authorship, co-purchase, web page, and biology) and scales (2k - 169k nodes). We try to use their original settings as much as possible. To verify research questions 1 and 2, we choose four classic benchmarks: Cora, CiteSeer, PubMed in the transductive setting, and PPI in the inductive setting. See appendix~\ref{sec:apdx_dataset} for detailed description, splits, statistics (including degree and homophily), and references.

\paragraph{Synthetic datasets}\label{sec:synthetic-data}

We generate random partition graphs of $n$ nodes per class and $c$ classes~\citep{fortunato2010community}, using \texttt{NetworkX} library~\citep{hagberg2008exploring}. A random partition graph is a graph of communities controlled by two probabilities $p_{in}$ and $p_{out}$. If the nodes have the same class labels, they are connected with $p_{in}$, and otherwise, they are connected with $p_{out}$. To generate a graph with an average degree of $d_{avg} = n \cdot \delta$, we choose $p_{in}$ and $p_{out}$ by $p_{in} + (c - 1) \cdot p_{out} = \delta$. The input features of nodes are sampled from overlapping multi-Gaussian distributions~\citep{abu2019mixhop}. We set $n$ to 500, $c$ to 10, and choose $d_{avg}$ between $1$ and $100$, $p_{in}$ from $\{ 0.1\delta, 0.2\delta, \dots, 0.9\delta \}$. We use 20 samples per class for training, 500 for validation and 1000 for test.

\subsection{Experimental Set-up}

We follow the experimental set-up of GAT with minor adjustments. All parameters are initialized by Glorot initialization~\citep{glorot2010understanding} and optimized by Adam~\citep{kingma2014adam}. We apply L2 regularization, dropout~\citep{srivastava2014dropout} to features and attention coefficients, and early stopping on validation loss and accuracy. We use ELU~\citep{clevert2016fast} as a non-linear activation $\rho$. Unless specified, we employ a two-layer SuperGAT with $F = 8$ features and $K = 8$ attention heads (total 64 features). All models are implemented in PyTorch~\citep{paszke2019pytorch} and PyTorch Geometric~\citep{Fey2019Fast}. See appendix \ref{sec:apdx_model} for detailed model and hyperparameter configurations.

\paragraph{Baselines}

For all datasets, we compare our model against representative graph neural models: graph convolutional network (GCN)~\citep{kipf2017semi}, GraphSAGE~\citep{hamilton2017inductive}, and graph attention network (GAT)~\citep{velickovic2018graph}. Furthermore, for Cora, CiteSeer, and PubMed, we choose recent graph neural architectures that learn aggregation coefficients (or discrete structures) over edges: constrained graph attention network (CGAT\footnote{Since CGAT uses node labels in the loss function, it is difficult to use it in semi-supervised learning. So, we modify its auxiliary loss for SSL. See appendix~\ref{sec:apdx_cgat} for details.})~\citep{wang2019improving}, graph learning-convolutional network (GLCN)~\citep{jiang2019semi}, learning discrete structure (LDS)~\citep{franceschi2019learning}, graph agreement model (GAM)~\citep{stretcu2019graph}, and NeuralSparse (NS in short)~\citep{zheng2020robust}. For the PPI, we use CGAT as an additional baseline.

\section{Results}\label{sec:results}

This section describes the experimental results which answer the research questions in Section~\ref{sec:experiments}. 
We include a qualitative analysis of attention, quantitative comparisons on node classification and link prediction, and recipes of graph attention design.
The results for sensitivity analysis of important hyper-parameters are in the appendix~\ref{sec:apdx_sensitivity}.

\paragraph{Does graph attention learn label-agreement?}

\textit{GO learns label-agreement better than DP.}

\begin{figure*}[t]
  \centering
  \includegraphics[height=2.55cm]{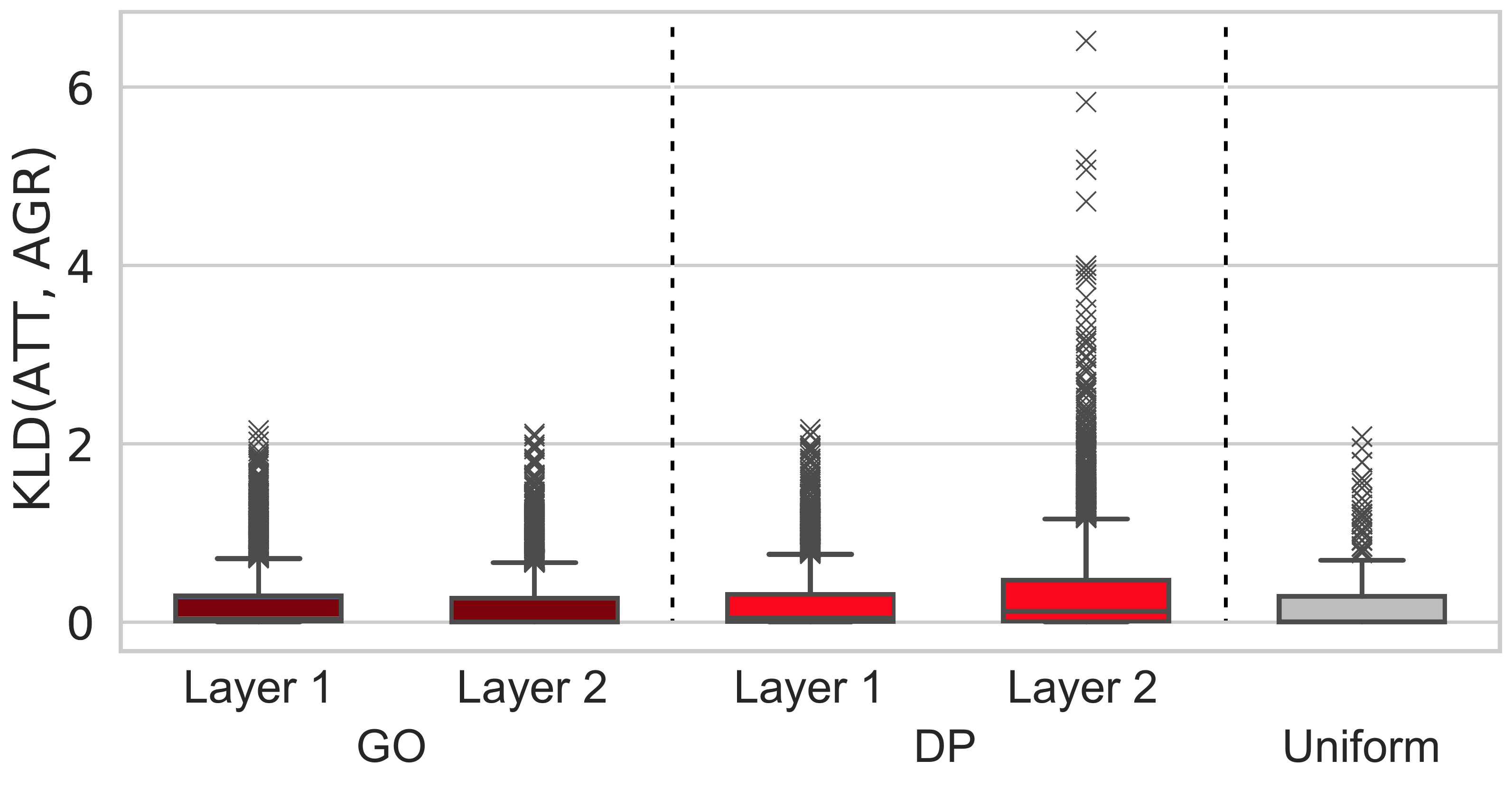}
  \includegraphics[height=2.55cm]{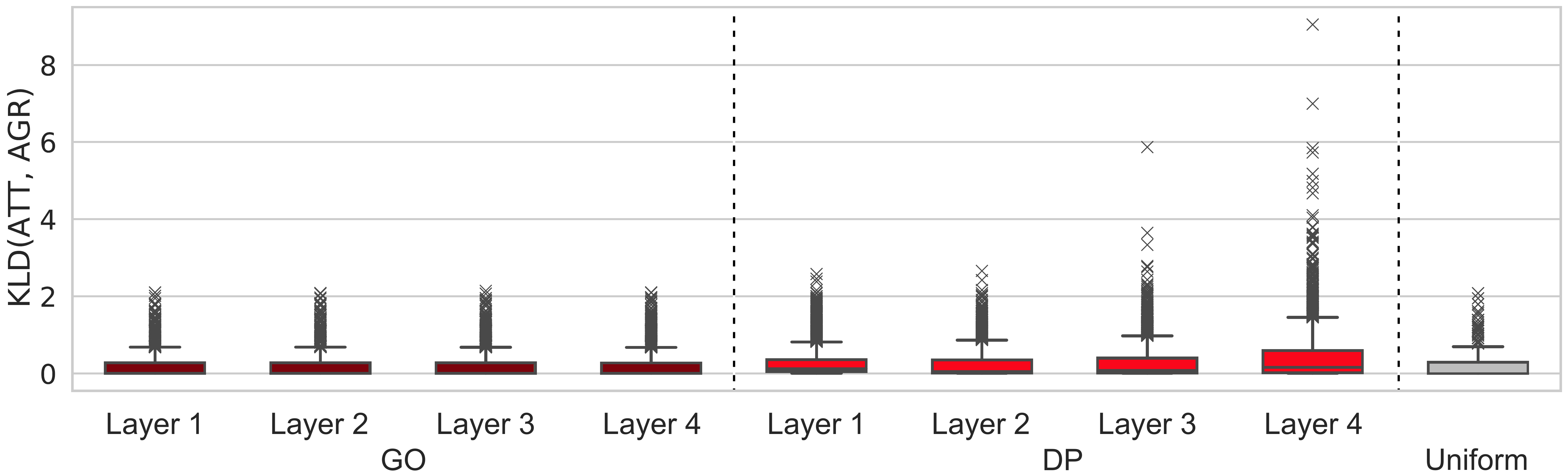}
  \vspace{-0.1cm}
  \caption{Distribution of KL divergence between normalized attention and label-agreement on all nodes and layers for Cora dataset (Left: two-layer GAT, Right: four-layer GAT).}
  \label{fig:rq2-result}
  \vspace{-0.2cm}
\end{figure*}

We draw box plots of KL divergence between attention and label agreement distributions of two-layer and four-layer GAT with GO and DP attention for Cora dataset in Figure~\ref{fig:rq2-result}. We see similar patterns in other datasets and place their plots in the appendix~\ref{sec:apdx_label_agreement}. At the rightmost of each sub-figure, we draw the KLD distribution when uniform attention is given to all neighborhoods. Note that the maximum value of KLD of each node is different since the degree of nodes is different. Also, the KLD distribution shows a long-tail shape like a degree distribution of real-world graphs.

There are three observations regarding distributions of KLD. First, we observe that the KLD distribution of GO attention shows a pattern similar to the uniform attention for all citation datasets. This implies that trained GO attention is similar to the uniform distribution, which is in line with previously reported results in the case of entropy\footnote{\url{https://docs.dgl.ai/en/latest/tutorials/models/1_gnn/9_gat.html}}~\citep{wang2019dgl}. Second, KLD values of DP attention tend to be larger than those of GO attention for the last layer, resulting in bigger long-tails. This mismatch between the learned distribution of DP attention and the label agreement distribution suggests that DP attention does not learn label-agreement in the neighborhood. Third, the deeper the model (more than two), the larger the KLD value of DP attention in the last layer. This is because the variance of DP attention increases as the layer gets deeper, as explained below in Proposition~\ref{prop:dp}.

\begin{proposition}\label{prop:dp}
For $l+1$th GAT layer, if $\mathbf{W}$ and $\va$ are independent and identically drawn from zero-mean uniform distribution with variance $\sigma_{w}^{2}$ and $\sigma_{a}^{2}$ respectively, assuming that parameters are independent to input features $\vh^{l}$ and elements of $\vh^{l}$ are independent to each other,
\begin{align}
\scale[0.9]{
\text{Var}[ e_{ij, \text{GO}}^{l+1} ] =  2F^{l+1} \sigma_{w}^{2} \sigma_{a}^{2} \mathbb{E} (\|\vh^{l}\|_{2}^{2})
\ \ \text{and}\ \ 
\text{Var}[ e_{ij, \text{DP}}^{l+1} ] \geq F^{l+1} \sigma_{w}^{4} \left(
    \frac{4}{5} \mathbb{E} \left(
        ((\vh_{i}^{l})^{\top} \vh_{j}^{l})^{2}
    \right)
    + \text{Var} ( (\vh_{i}^{l})^{\top} \vh_{j}^{l} )
\right)
}
\end{align}
\vspace{-0.4cm}
\end{proposition}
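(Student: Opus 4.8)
The plan is to handle the two formulas separately, since $e_{ij,\GO}^{l+1}$ is linear in the zero-mean parameters whereas $e_{ij,\DP}^{l+1}$ is quadratic and needs a fourth-moment computation. For the GO term I would read off the concatenation coordinate-wise, $e_{ij,\GO}^{l+1} = \sum_{k=1}^{F^{l+1}} a_k(\mW^{l+1}\vh_{i}^{l})_k + \sum_{k=1}^{F^{l+1}} a_{F^{l+1}+k}(\mW^{l+1}\vh_{j}^{l})_k$. Because $\va^{l+1}$ is zero-mean and independent of $\mW^{l+1}$ and the features, $\E[e_{ij,\GO}^{l+1}]=0$ and $\E[a_s a_t]=\sigma_{a}^{2}\delta_{st}$ kills every cross term, leaving $\Var[e_{ij,\GO}^{l+1}] = \sigma_{a}^{2}\big(\E\|\mW^{l+1}\vh_{i}^{l}\|_2^2 + \E\|\mW^{l+1}\vh_{j}^{l}\|_2^2\big)$. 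Expanding each coordinate and using that the entries in a row of $\mW^{l+1}$ are uncorrelated, zero-mean, and independent of the features gives $\E\|\mW^{l+1}\vh_{i}^{l}\|_2^2 = F^{l+1}\sigma_{w}^{2}\,\E\|\vh_{i}^{l}\|_2^2$, and summing the two identically distributed contributions yields the stated equality $2F^{l+1}\sigma_{w}^{2}\sigma_{a}^{2}\,\E\|\vh^{l}\|_2^2$.

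For the DP term I would write $e_{ij,\DP}^{l+1} = \sum_{k=1}^{F^{l+1}} X_k$ with $X_k = u_k v_k$, where $u_k=(\mW^{l+1}\vh_{i}^{l})_k$ and $v_k=(\mW^{l+1}\vh_{j}^{l})_k$. The key structural point is that $X_k$ depends only on row $k$ of $\mW^{l+1}$, so conditional on the features the $X_k$ are independent across $k$; hence conditional covariances vanish and the law of total variance reads $\Var[e_{ij,\DP}^{l+1}] = \E\big[\sum_k\Var[X_k\mid\vh_i^l,\vh_j^l]\big] + \Var\big[\E[e_{ij,\DP}^{l+1}\mid\vh_i^l,\vh_j^l]\big]$. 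Conditionally $\E[X_k\mid\vh_i^l,\vh_j^l] = \sigma_{w}^{2}(\vh_{i}^{l})^{\top}\vh_{j}^{l}$ for every $k$, which immediately gives the second summand $(F^{l+1})^{2}\sigma_{w}^{4}\,\Var[(\vh_{i}^{l})^{\top}\vh_{j}^{l}]$.

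The crux is the conditional second moment $\E[u_k^2 v_k^2\mid\vh_i^l,\vh_j^l]$, which expands into a quadruple sum over entries of row $k$ in which only index patterns of all-even multiplicity survive. Here the \emph{uniform} hypothesis enters through the kurtosis $\E[W^4]=\tfrac95\sigma_{w}^{4}$, alongside $\E[W^2]^2=\sigma_{w}^{4}$ for the two-distinct-index pairings. Bookkeeping the surviving patterns in terms of $A=\sum_p h_{i,p}^2 h_{j,p}^2$, $B=\|\vh_i^l\|_2^2\|\vh_j^l\|_2^2$, and $C=((\vh_i^l)^{\top}\vh_j^l)^2$ should give $\E[u_k^2v_k^2\mid\cdot]=\sigma_{w}^{4}(B+2C-\tfrac65 A)$, hence $\Var[X_k\mid\cdot]=\sigma_{w}^{4}(B+C-\tfrac65 A)$ after subtracting the squared conditional mean $\sigma_{w}^{4}C$, and therefore $\E\big[\sum_k\Var[X_k\mid\cdot]\big]=F^{l+1}\sigma_{w}^{4}\,\E[B+C-\tfrac65 A]$.

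To close, I would split $C=\tfrac15C+\tfrac45C$ and establish the pointwise inequality $5B+C-6A\ge 0$: the diagonal ($p=q$) terms cancel with coefficient $5+1-6=0$, and grouping each unordered off-diagonal pair and setting $a=h_{i,p}h_{j,q}$, $b=h_{i,q}h_{j,p}$ reduces the claim to $5(a^2+b^2)+2ab\ge 0$, which follows at once from $2|ab|\le a^2+b^2$. This yields $\E[B+C-\tfrac65 A]\ge\tfrac45\E[C]$, and combining with the total-variance decomposition and the crude bound $(F^{l+1})^{2}\ge F^{l+1}$ (applied to the nonnegative variance term) gives the stated lower bound $F^{l+1}\sigma_{w}^{4}\big(\tfrac45\E[((\vh_i^l)^{\top}\vh_j^l)^2]+\Var[(\vh_i^l)^{\top}\vh_j^l]\big)$. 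The main obstacle I anticipate is the careful enumeration of the surviving index patterns in the fourth-moment expansion and tracking the exact constants $\tfrac95,\tfrac65,\tfrac45$ so that they align with the final pairwise inequality; the remaining manipulations are routine.
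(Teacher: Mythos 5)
Your proof is correct and rests on the same essential computation as the paper's — the fourth‑moment expansion of $\E[u_k^2v_k^2]$ using the uniform kurtosis $\E[W^4]=\tfrac{9}{5}\sigma_w^4$, organized around the same three quantities $A=\sum_p (\vh_{i,p}^{l}\vh_{j,p}^{l})^2$, $B=\|\vh_i^l\|_2^2\|\vh_j^l\|_2^2$, $C=((\vh_i^l)^{\top}\vh_j^l)^2$ — but it is packaged differently in two respects. First, the paper computes $\Var[e_{ij,\DP}^{l+1}]$ by writing $\Var(\sum_k u_kv_k)=F^{l+1}\Var(u_1v_1)$ directly, which silently drops the unconditional cross‑$k$ covariances $\sigma_w^4\,\Var((\vh_i^l)^{\top}\vh_j^l)\ge 0$ (harmless for a lower bound, but not an equality as written since the summands are positively correlated through the shared features); your law‑of‑total‑variance decomposition handles this correctly, yielding the coefficient $(F^{l+1})^2$ on the $\Var((\vh_i^l)^{\top}\vh_j^l)$ term, which you then relax via $(F^{l+1})^2\ge F^{l+1}$. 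Your intermediate statement is therefore slightly sharper and the accounting is cleaner. Second, where the paper certifies nonnegativity of the residual by exhibiting it as the explicit sum of squares $\tfrac{1}{10}\sum_{s\ne t}\bigl((\vh_{i,s}^{l}\vh_{j,t}^{l}+\vh_{i,t}^{l}\vh_{j,s}^{l})^2+8(\vh_{i,s}^{l}\vh_{j,t}^{l})^2\bigr)$, you prove the equivalent pointwise inequality $5B+C-6A\ge 0$ by pairing off‑diagonal terms and applying $2|ab|\le a^2+b^2$; these are the same fact in two guises. The GO computation is essentially identical to the paper's.
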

The proof is given in the appendix~\ref{sec:apdx_proof}. While the variance of GO depends on the norm of features only, the variance of DP depends on the expectation of the square of input's dot-product and variance of input's dot-product. Stacking GAT layers, the more features of $i$ and $j$ correlate with each other, the larger the input's dot-product will be. After DP attention is normalized by softmax, which intensifies the larger values among them, normalized DP attention attends to only a small portion of the neighbors and learns a biased representation.

\paragraph{Is graph attention predictive for edge presence?}

\textit{DP predicts edge presence better than GO.}

\begin{figure*}[t]
  \centering
  \includegraphics[width=0.93\textwidth]{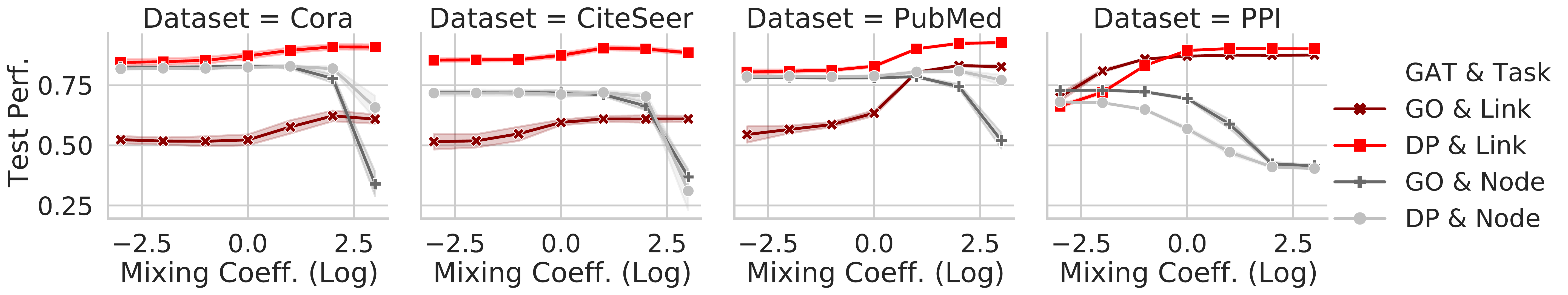}
  \vspace{-0.2cm}
  \caption{Test performance on node classification and link prediction for GO and DP attentions against the mixing coefficient $\lambda_{E}$. We report accuracy (Cora, CiteSeer, PubMed) and micro f1-score (PPI) for node classification, and AUC for link prediction.}
  \label{fig:rq1-result}
  \vspace{-0.25cm}
\end{figure*}

In Figure~\ref{fig:rq1-result}, we report the mean AUC over multiple runs (5 for PPI and 10 for others) for link prediction (red lines) and node classification (gray lines).
As the mixing coefficient $\lambda_{E}$ increases, the link prediction score increases in all datasets and attentions. This is a natural result considering that $\lambda_{E}$ is the weight factor of self-supervised graph attention loss ($\mathcal{L}_{E}$ in Equation~\ref{eq:full_loss}).
For three out of four datasets, DP attention outperforms GO for link prediction for all range of $\lambda_{E}$ in our experiment.
Surprisingly, even for small $\lambda_{E}$, DP attention shows around 80 AUC, much higher than the performance of GO attention.
PPI is an exception where GO attention shows higher performance for small $\lambda_{E}$ than DP, but the difference is slight.
The results of this experiment demonstrate that DP attention is more suitable than GO attention in encoding edges.

This figure also includes node classification performance. 
For all datasets except PubMed, we observe a trade-off between node classification and link prediction; that is, node classification performance decreases in \SuperGATGOb and \SuperGATDPb as $\lambda_{E}$ increases and thus link prediction performance increases. 
PubMed also shows a decrease in performance at the largest $\lambda_{E}$ we have tested. 
This implies that it is hard to learn the relational importance from edges by simply optimizing graph attention for link prediction.

\paragraph{Which graph attention should we use for given graphs?}

\textit{It depends on homophily and average degree of the graph.}

\begin{figure*}[t]
  \centering
  \includegraphics[height=3.1cm]{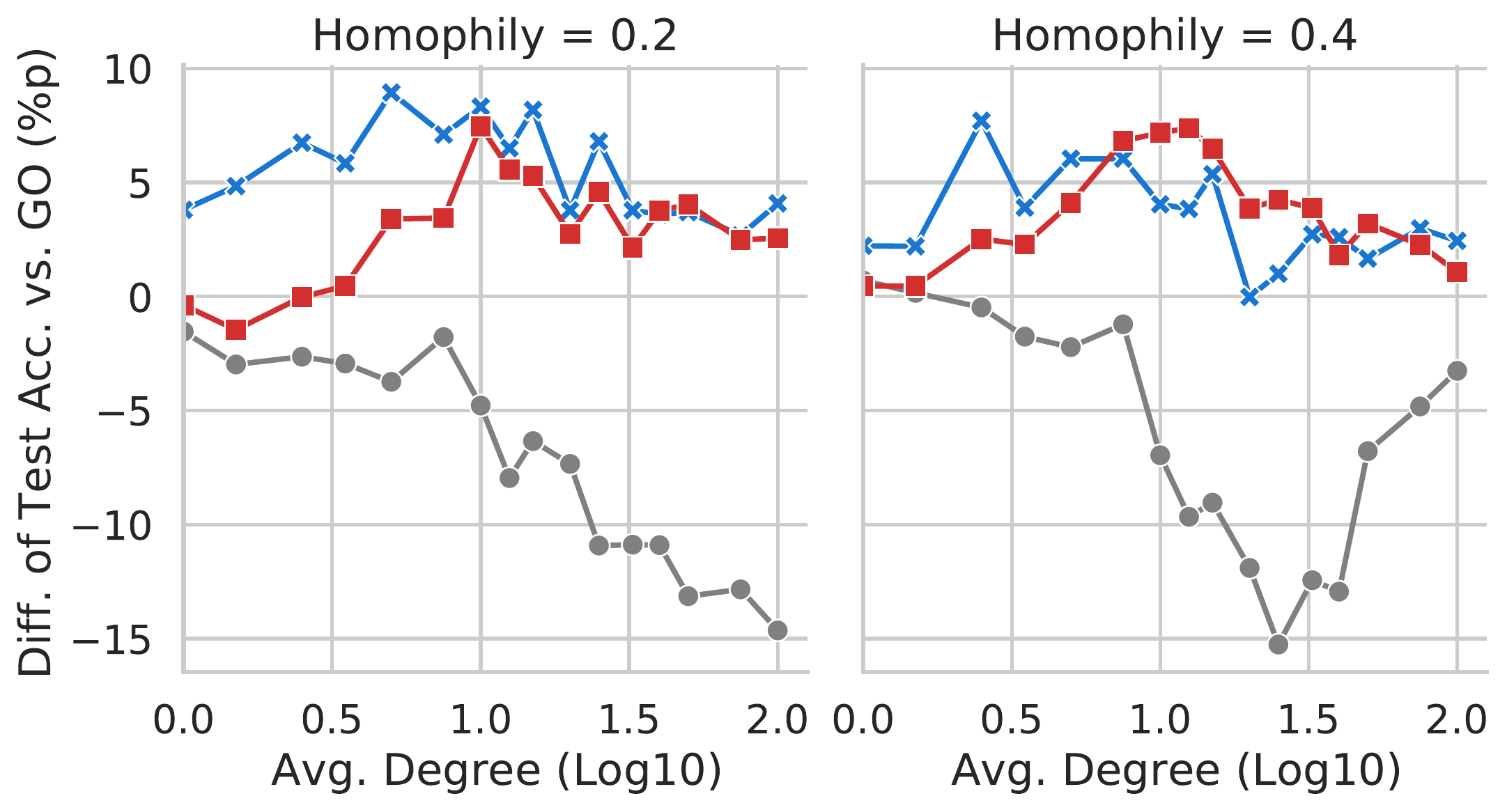}
  \includegraphics[height=3.1cm]{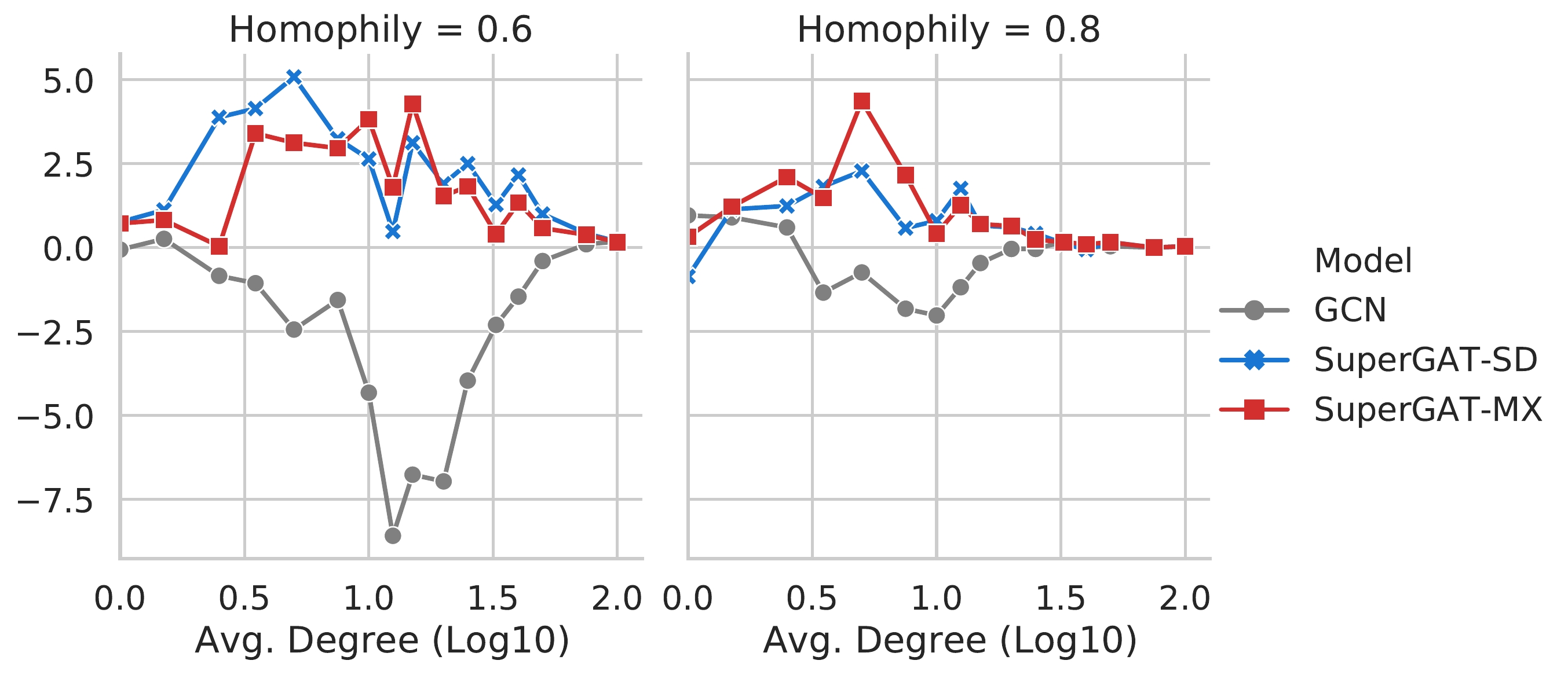}
  \vspace{-0.2cm}
  \caption{Mean test accuracy gains (of 5 runs) against \GATGOb on synthetic datasets, varying homophily and average degree of the input graph.}
  \label{fig:rq3-synthetic-result}
  \vspace{-0.45cm}
\end{figure*}

In Figure~\ref{fig:rq3-synthetic-result}, we draw the mean test accuracy gains (over 5 runs) against \GATGOb as the average degree increases from 1 to 100, for different values of homophily, on 64 synthetic graphs with GCN, \SuperGATSD, \SuperGATMX. See full results from appendix~\ref{sec:apdx_synthetic}. We define homophily $h$ as the average ratio of neighbors with the same label as the center node~\citep{pei2019geom}. That is $h = \textstyle \frac{1}{|V|} \sum_{i \in V} \left( \sum_{j \in \sN_{i}} \1_{l(i) = l(j)} \big/  |\sN_{i}| \right)$, where $l(i)$ is the label of node $i$. The expectation of homophily for random partition graphs is analytically $p_{in} / \delta$, and we just adopt this value to label the homophily of graphs in Figure~\ref{fig:rq3-synthetic-result}.

We make the following observations from this figure. First, if the homophily is low ($\leq 0.2$), \SuperGATSDb performs best among models because DP attention tends to focus on a small number of neighbors. This result empirically confirms what we analytically found in Proposition~\ref{prop:dp}. Second, even when homophily is low, the performance gain of SuperGAT against GAT increases as the average degree increases to a certain level (around 10), meaning relation modeling can benefit from self-supervision if there are sufficiently many edges providing supervision. This is in agreement with prior study of label noise for deep neural networks where they find that the absolute amount of data with correct labels affects the learning quality more than the ratio between data with noisy and correct labels~\citep{rolnick2017deep}. Third, if the average degree and homophily are high enough, there is no difference between all models, including GCNs. If there are more correct edges beyond a certain amount, we can learn fine representation without self-supervision. Most importantly, if the average degree is not too low or high and homophily is above $0.2$, \SuperGATMXb performs better than or similar to \SuperGATSD. This implies that we can take advantage of both GO attention to learn label-agreement and DP attention to learn edge presence by mixing GO and DP. Note that many of the real-world graphs belong to this range of graph characteristics.

The results on synthetic graphs imply that understanding of graph domains should be preceded to design graph attention. That is, by knowing the average degree and homophily of the graphs, we can choose the optimal graph attention in our design space.

\paragraph{Does design choice based on RQ3 generalize to real-world datasets?}

\textit{It does for 15 of 17 real-world datasets.}

\begin{table}[t]
\centering
\setlength{\tabcolsep}{3.1pt} 
\def\arraystretch{1.4} 
\caption{Summary of classification accuracies of GCN, GraphSAGE, GAT, \SuperGATSD, and \SuperGATMXb for real-world datasets (30 runs for ogbn-arxiv and Flickr, and 100 runs for others).}
\vspace{-0.2cm}
\label{tab:real-world-many}
\resizebox{\textwidth}{!}{%
\begin{tabular}{llllllllllllll}
\hline
Model &
  ogbn-arxiv &
  CS &
  Physics &
  Cora-ML &
  Cora-Full &
  DBLP &
  Cham. &
  Four-Univ &
  Wiki-CS &
  Photo &
  Comp. &
  Flickr &
  Croco. \\ \hline
GCN &
  $33.3_{\pm 1.2}$ &
  $91.5_{\pm 0.2}$ &
  $92.5_{\pm 0.2}$ &
  $85.0_{\pm 0.4}$ &
  $59.5_{\pm 0.2}$ &
  $77.8_{\pm 0.5}$ &
  $33.1_{\pm 0.9}$ &
  $74.8_{\pm 0.6}$ &
  $74.0_{\pm 1.0}$ &
  $91.6_{\pm 0.6}$ &
  $84.5_{\pm 1.4}$ &
  $51.2_{\pm 0.4}$ &
  $32.6_{\pm 0.4}$ \\
GraphSAGE &
  $54.6_{\pm 0.3}$ &
  $90.0_{\pm 0.1}$ &
  $92.2_{\pm 0.1}$ &
  $83.7_{\pm 0.4}$ &
  $59.2_{\pm 0.2}$ &
  $78.7_{\pm 0.6}$ &
  $41.0_{\pm 0.9}$ &
  $74.4_{\pm 0.6}$ &
  $77.5_{\pm 0.5 }$ &
  $90.4_{\pm 1.1}$ &
  $83.0_{\pm 1.4 }$ &
  $50.7_{\pm 0.2}$ &
  $53.0_{\pm 1.0}$ \\ \hline \hline
GAT &
  $54.1_{\pm 0.5}$ &
  $89.5_{\pm 0.2}$ &
  $91.2_{\pm 0.6}$ &
  $83.2_{\pm 0.6}$ &
  $58.7_{\pm 0.3}$ &
  $78.2_{\pm 1.5}$ &
  $40.8_{\pm 0.7}$ &
  $74.2_{\pm 0.7}$ &
  $77.6_{\pm 0.6}$ &
  \cellwg$\mathbf{91.8}_{\pm 0.6}$ &
  \cellwg$\mathbf{85.7}_{\pm 0.9}$ &
  \cellwg$\mathbf{50.9}_{\pm 0.2}$ &
  \cellwg$\mathbf{53.3}_{\pm 1.0}$ \\ \hline
\SuperGATSD &
  $54.5_{\pm 0.3}^{**}$ &
  $88.8_{\pm 0.4}^{\downarrow}$ &
  $91.6_{\pm 0.5}^{**}$ &
  $84.5_{\pm 0.4}^{**}$ &
  $55.8_{\pm 0.6}^{\downarrow}$ &
  $79.4_{\pm 0.8}^{**}$ &
  $41.6_{\pm 0.7}^{**}$ &
  \cellwb$\mathbf{76.2}_{\pm 0.8}^{**}$ &
  \cellwp$\mathbf{77.9}_{\pm 0.7}$ &
  \cellwg$86.8_{\pm 2.5}^{\downarrow}$ &
  \cellwg$82.2_{\pm 0.9}^{\downarrow}$ &
  \cellwg$45.1_{\pm 1.2}^{\downarrow}$ &
  \cellwg$\mathbf{53.3}_{\pm 0.9}$ \\
\SuperGATMX &
  \cellwr$\mathbf{55.1}_{\pm 0.2}^{**}$ &
  \cellwr$\mathbf{90.2}_{\pm 0.2}^{**}$ &
  \cellwr$\mathbf{91.9}_{\pm 0.5}^{**}$ &
  \cellwr$\mathbf{84.7}_{\pm 0.4}^{**}$ &
  \cellwr$\mathbf{59.6}_{\pm 0.2}^{**}$ &
  \cellwr$\mathbf{80.7}_{\pm 0.7}^{**}$ &
  \cellwr$\mathbf{42.0}_{\pm 0.8}^{**}$ &
  $75.3_{\pm 0.6}^{**}$ &
  \cellwp$\mathbf{77.9}_{\pm 0.5}^{*}$ &
  \cellwg$\mathbf{91.8}_{\pm 0.9}$ &
  \cellwg$\mathbf{85.7}_{\pm 1.1}$ &
  \cellwg$50.8_{\pm 0.2}^{\downarrow}$ &
  \cellwg$\mathbf{53.3}_{\pm 0.9}$ \\ \hline
\end{tabular}%
}
\vspace{-0.2cm}
\end{table}

    \begin{table}[t]
        \begin{minipage}[t]{0.6\textwidth}

\centering
\caption{Summary of classification accuracies with 100 random seeds for Cora, CiteSeer, and PubMed. We mark with daggers ($\dagger$) the reprinted results from the respective papers.}
\vspace{-0.2cm}
\label{tab:rq3-transductive}
{\small
\begin{tabular}{llll}
\hline
Model               & Cora                          & CiteSeer         & PubMed                \\ \hline
GCN$^\dagger$       & 81.5                          & 70.3             & 79.0                  \\
GraphSAGE           & 82.1 $\pm$ 0.6                & 71.9 $\pm$ 0.9   & 78.0 $\pm$ 0.7        \\
CGAT                & 81.4 $\pm$ 1.1                & 70.1 $\pm$ 0.9   & 78.1 $\pm$ 1.0        \\
GLCN$^\dagger$      & 85.5                          & 72.0             & 78.3                  \\
LDS$^\dagger$       & 84.1                          & 75               & -                     \\
GCN + GAM$^\dagger$ & 86.2                          & 73.5             & 86.0                  \\ 
GCN + NS$^\dagger$ & 83.7 $\pm$ 1.4         & 74.1 $\pm$ 1.4           & -                    \\ \hline \hline
GAT$^\dagger$    & 83.0 $\pm$ 0.7                & \cellwg72.5 $\pm$ 0.7   & 79.0 $\pm$ 0.4        \\ \hline
\SuperGATSD         & $82.7_{\pm 0.6}^{\downarrow}$ & \cellwg$72.5_{\pm 0.8}$ & $81.3_{\pm 0.5}^{**}$ \\
\SuperGATMX         & \cellwr$\mathbf{84.3}_{\pm 0.6}^{**}$ & \cellwg$\mathbf{72.6}_{\pm 0.8}$ & \cellwr$\mathbf{81.7}_{\pm 0.5}^{**}$ \\ \hline
\end{tabular}
}

        \end{minipage}
        \hfillx
        \begin{minipage}[t]{0.35\textwidth}

\centering
\caption{Summary of micro f1-scores with 30 random seeds for PPI.}
\vspace{-0.2cm}
\label{tab:rq3-inductive}
{\small
\begin{tabular}{ll}
\hline
Model       & PPI                           \\ \hline
GCN         & 61.5 $\pm$ 0.4                \\
GraphSAGE   & 59.0 $\pm$ 1.2                \\
CGAT        & 68.3 $\pm$ 1.7                \\ \hline \hline
GAT      & 72.2 $\pm$ 0.6                \\ \hline
\SuperGATSD & \cellwb$\mathbf{74.4}_{\pm 0.4}^{**}$         \\
\SuperGATMX & $67.2_{\pm 1.2}^{\downarrow}$ \\ \hline
\end{tabular}
}

\vspace{0.22cm}
\fbox{
{\scriptsize
\begin{tabular}{ll}
**            & \pvalue\ $<$ .0001            \\
*             & \pvalue\ $<$ .0005             \\ 
$\downarrow$  & Worse than \GATGO               \\
Color         & Best graph attention (See Fig.~\ref{fig:rq3-recipe})  \\
\end{tabular}
}}

        \end{minipage}
    \vspace{-0.2cm}
    \end{table}

\begin{figure*}[t]
  \centering
  \includegraphics[width=9.45cm]{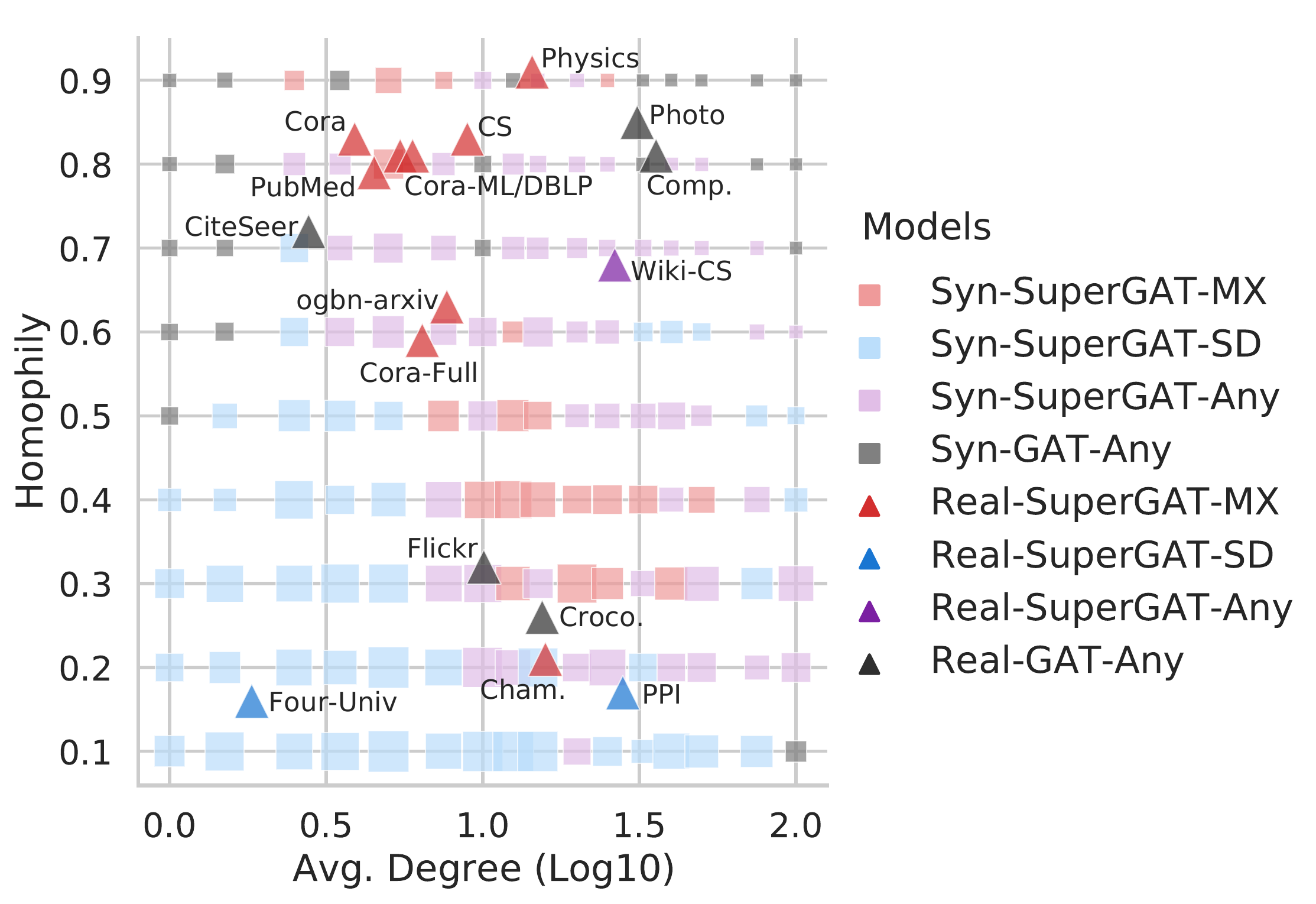}
  \vspace{-0.15cm}
  \caption{The best-performed graph attention design for synthetic and real-world graphs with various average degree and homophily.}
  \label{fig:rq3-recipe}
  \vspace{-0.2cm}
\end{figure*}

In Figure~\ref{fig:rq3-recipe}, we plot the best-performed graph attention for synthetic graphs with square points in the plane of average degree and homophily. The size is the performance gain of SuperGAT against GAT, and the color indicates the best model. If the difference is not statistically significant (\pvalue\ $\ge$ .05) between GAT and SuperGAT, and between \SuperGATMXb and \SuperGATSD, we mark as GAT-Any and SuperGAT-Any, respectively. We call this plot a recipe since it introduces the optimal attention to a specific region's graph.

Now we map the results of 17 real-world datasets in Figure~\ref{fig:rq3-recipe} according to their average degree and homophily. Average degree and homophily can be found in the appendix~\ref{sec:apdx_dataset}, and experimental results of graph attention models are summarized in Tables~\ref{tab:real-world-many}, \ref{tab:rq3-transductive} and \ref{tab:rq3-inductive}. We report the mean and standard deviation of performance over multiple seeds (30 for graphs with more than 50k nodes and 100 for others). We put unpaired t-test results of SuperGAT with \GATGOb with asterisks.

We find that the graph attention recipe based on synthetic experiments can generalize across real-world graphs. PPI and Four-Univ (\textcolor{strongblue}{$\blacktriangle$}) are surrounded by squares of \SuperGATSDb (\textcolor{middleblue}{$\blacksquare$}) at the bottom of the plane. Wiki-CS (\textcolor{strongpurple}{$\blacktriangle$}), located in the SuperGAT-Any region (\textcolor{middlepurple}{$\blacksquare$}), also show no difference in performance between SuperGATs. Nine datasets (\textcolor{strongred}{$\blacktriangle$}), which \SuperGATMXb shows the highest performance, are located in the MX regions (\textcolor{middlered}{$\blacksquare$}) or within the margin of two squares. Note that there are two MX regions: lower-middle average degree (2.5 - 7.5) and high homophily (0.8 - 0.9), and upper-middle average degree (7.5 - 50) and lower-middle homophily (0.3 - 0.5). There are five datasets with no significant performance change across graph attention ($\blacktriangle$). CiteSeer, Photo, and Computers are within a margin of one square from the GAT-Any region (\textcolor{middlegray}{$\blacksquare$}); however, Flickr and Crocodile are in the SuperGAT-Any region. To find out the cause of this irregularity, we examine the distribution of degree and per-node homophily (appendix~\ref{sec:apdx_dist_degree_and_homophily}). We observe a more complex mixture distribution of homophily and average degree in Flickr and Crocodile, and this seems to be equivalent to mixing graphs of different characteristics, resulting in inconsistent results with our attention design recipe.

\paragraph{Comparison with baselines}

For a total of 17 datasets, SuperGAT outperforms GCN for 13 datasets, GAT for 12 datasets, GraphSAGE for 16 datasets. Interestingly, for CS, Physics, Cora-ML, and Flickr, in which our model performs worse than GCN, GAT also cannot surpass GCN.
It is not yet known when the degree-normalized aggregation of GCN outperforms the attention-based aggregation, and more research is needed to figure out how to embed the degree information into graph attention.
Tables~\ref{tab:rq3-transductive} and \ref{tab:rq3-inductive} show performance comparisons between SuperGAT and recent GNNs for Cora, CiteSeer, PubMed, and PPI. Our model performs better for CiteSeer (0.6\%p) and PubMed (3.4\%p) than GLCN, which gives regularization to all relations in a graph. GCN + NS (NeuralSparse) performs better than our model for CiteSeer (1.5\%p) but not for Cora (0.6\%p). CGAT modified for semi-supervised learning shows lower performance than GAT. Although LDS and GAM which use iterative training show better performance except for LDS on Cora, these models require significantly more computation for the iterative training. For example, GCN + GAM compared to our model needs more $\times 34$ more training time for Cora, $\times 72$ for CiteSeer, and $\times 82$ for PubMed. See appendix~\ref{sec:apdx_wall_clock_time} and \ref{sec:apdx_wall_clock_time_result} for the experimental set-up and the result of wall-clock time analysis.

\section{Conclusion}
We proposed novel graph neural architecture designs to self-supervise graph attention following the input graph's characteristics. We first assessed what graph attention is learning and analyzed the effect of edge self-supervision to link prediction and node classification performance. This analysis showed two widely used attention mechanisms (original GAT and dot-product) have difficulty encoding label-agreement and edge presence simultaneously. To address this problem, we suggested several graph attention forms that balance these two factors and argued that graph attention should be designed depending on the input graph's average degree and homophily. Our experiments demonstrated that our graph attention recipe generalizes across various real-world datasets such that the models designed according to the recipe outperform other baseline models.

\subsubsection*{Acknowledgments}
This research was supported by the Engineering Research Center Program through the National Research Foundation of Korea (NRF) funded by the Korean Government MSIT (NRF-2018R1A5A1059921)

\clearpage
\bibliography{iclr2021_conference}
\bibliographystyle{iclr2021_conference}

\clearpage
\appendix

\section{Experimental Set-up}

\subsection{Real-world Dataset}\label{sec:apdx_dataset}
In this section, we describe details (including nodes, edges, features, labels, and splits) of real-world datasets. We report statistics of real-world datasets in Tables~\ref{tab:degree_homophily} and \ref{tab:real-world-stats}. For multi-label graphs (PPI), we extend the homophily to the average of the ratio of shared labels on neighbors over nodes, i.e., $h = \textstyle \frac{1}{|V|} \sum_{i \in V} \left( \sum_{j \in \sN_{i}} |\sC_i \cap \sC_j | \big/  (|\sN_{i}|\cdot|\sC|) \right)$, where $\sC_i$ is a set of labels for node $i$ and $\sC$ is a set of all labels.

\subsubsection{Citation Network}

We use a total of 7 citation network datasets. Nodes are documents, and edges are citations. The task for all citation network datasets is to classify each paper's topic.

\paragraph{Cora, CiteSeer, PubMed} 

We use three benchmark datasets for semi-supervised node classification tasks in the transductive setting~\citep{sen2008collective, yang2016revisiting}. The features of the nodes are bag-of-words representations of documents. We follow the train/validation/test split of previous work~\citep{kipf2017semi}. We use 20 samples per class for training, 500 samples for the validation, and 1000 samples for the test.

\paragraph{Cora-ML, Cora-Full, DBLP}

These are other citation network datasets from \citet{bojchevski2018deep}. Node features are bag-of-words representations of documents. For CoraFull with features more than 5000, we reduce the dimension to 500 by performing PCA. We use the split setting in \citet{shchur2018pitfalls}: 20 samples per class for training, 30 samples per class for validation, the rest for the test.

\paragraph{ogbn-arxiv}

The ogbn-arxiv is a recently proposed large-scale dataset of citation networks~\citep{hu2020open, wang2020microsoft}. Nodes represent arXiv papers, and edges indicate citations between papers, and node features are mean vectors of skip-gram word embeddings of their titles and abstracts. We use the public split by publication dates provided by the original paper.

\subsubsection{Co-author Network}

\paragraph{CS, Physics}

The CS and Physics are co-author networks in each domain~\citep{shchur2018pitfalls}. Nodes are authors, and edges mean whether two authors co-authored a paper. Node features are paper keywords from the author's papers, and we reduce the original dimension (6805 and 8415) to 500 using PCA. The split is the 20-per-class/30-per-class/rest from \citet{shchur2018pitfalls}. The goal of this task is to classify each author's respective field of study.

\subsubsection{Amazon Co-purchase}

\paragraph{Photo, Computers}

The Photo and Computers are parts of the Amazon co-purchase graph~\citep{mcauley2015image, shchur2018pitfalls}. Nodes are goods, and edges indicate whether two goods are frequently purchased together, and node features are a bag-of-words representation of product reviews. The split is the 20-per-class/30-per-class/rest from \citet{shchur2018pitfalls}. The task is to classify the categories of goods.

\subsubsection{Web Page Network}

\paragraph{Wiki-CS}

The Wiki-CS dataset is computer science related page networks in Wikipedia~\citep{mernyei2020wiki}. Nodes represent articles about computer science, and edges represent hyperlinks between articles. The features of nodes are mean vectors of GloVe word embeddings of articles. There are 20 standard splits, and we experiment with five random seeds for each split (total 100 runs). The task is to classify the main category of articles.

\paragraph{Chameleon, Crocodile}

These datasets are Wikipedia page networks about specific topics, Chameleon and Crocodile~\citep{rozemberczki2019multi}. Nodes are articles, and edges are mutual links between them. Node features are a bag-of-words representation with informative nouns in the article. The number of features is 13183, but we use a reduced dimension of 500 by PCA. The split is 20-per-class/30-per-class/rest from \citet{shchur2018pitfalls}. The original dataset is for the regression task to predict monthly traffic, but we group values into six bins and make it a classification problem.


\paragraph{Four-Univ}

The Four-Univ dataset is a web page networks from computer science departments of diverse universities~\citep{craven1998learning}. Nodes are web pages, edges are hyperlinks between them, and node features are TF-IDF vectors of web page's contents. There are five graphs consists of four universities (Cornell, Texas, Washington, and Wisconsin) and a miscellaneous graph from other universities. As the original authors suggested\footnote{\url{http://www.cs.cmu.edu/afs/cs.cmu.edu/project/theo-20/www/data/}}, we use three graphs of universities and a miscellaneous graph for training, another one graph for validation (Cornell), and the other one graph for the test (Texas). Classification labels are types of web pages (student, faculty, staff, department, course, and project).

\subsubsection{Flickr}

The Flickr dataset is a graph of images from Flickr~\citep{mcauley2012image, Zeng2020GraphSAINT}. Nodes are images, and edges indicate whether two images share common properties such as geographic location, gallery, and users commented. Node features are a bag-of-words representation of images. We use labels and split in in \citet{Zeng2020GraphSAINT}. For labels, they construct seven classes by manually merging 81 image tags.

\subsubsection{Protein-Protein Interaction}

The protein-protein interaction (PPI) dataset~\citep{zitnik2017predicting, hamilton2017inductive, subramanian2005gene} is a well-known benchmark in the inductive setting. A graph is given for human tissue, the nodes are proteins, the node's features are biological signatures like genes, and the edges illustrate proteins' interactions. The dataset consists of 20 training graphs, two validation graphs, and two test graphs. This dataset has multi-labels of gene ontology sets.

\begin{table}[t]
\centering
\caption{Average degree and homophily of real-world graphs.}
\label{tab:degree_homophily}
\begin{tabular}{lll}
\hline
Dataset    & Degree              & Homophily \\ \hline
Four-Univ  & 1.83 $\pm$ 1.71     & 0.16      \\
PPI        & 28.0 $\pm$ 39.26    & 0.17      \\
Chameleon  & 15.85 $\pm$ 18.20   & 0.21      \\
Crocodile  & 15.48 $\pm$ 15.97   & 0.26      \\
Flickr     & 10.08 $\pm$ 31.75   & 0.32      \\
Cora-Full  & 6.41 $\pm$ 8.79     & 0.59      \\
ogbn-arxiv & 7.68 $\pm$ 9.05     & 0.63      \\
Wiki-CS    & 26.40 $\pm$ 36.04   & 0.68      \\
CiteSeer   & 2.78 $\pm$ 3.39     & 0.72      \\
PubMed     & 4.50 $\pm$ 7.43     & 0.79      \\
Cora-ML    & 5.45 $\pm$ 8.24     & 0.81      \\
DBLP       & 5.97 $\pm$ 9.35     & 0.81      \\
Computers  & 35.76 $\pm$ 70.31   & 0.81      \\
Cora       & 3.90 $\pm$ 5.23     & 0.83      \\
CS         & 8.93 $\pm$ 9.11     & 0.83      \\
Photo      & 31.13 $\pm$ 47.27   & 0.85      \\
Physics    & 14.38 $\pm$ 15.57   & 0.91      \\ \hline
\end{tabular}
\end{table}

\begin{table}[t]
\centering
\caption{Statistics of the real-world datasets.}
\label{tab:real-world-stats}
\resizebox{\textwidth}{!}{%
\begin{tabular}{lllllllll}
\hline
Dataset   & \# Nodes & \# Edges & \# Features & \# Classes & Split & \# Training Nodes & \# Val. Nodes & \# Test Nodes \\ \hline
Four-Univ & 4518     & 3426     & 2000        & 6          & fixed & 4014 (3 Gs)       & 248 (1 G)     & 256 (1 G)     \\
PPI       & 56944    & 818716   & 50          & 121        & fixed & 44906 (20 Gs)     & 6514 (2 Gs)   & 5524 (2 Gs)   \\
Chameleon  & 2277   & 36101    & 500  & 6  & random & 120    & 180   & 1977  \\
Crocodile  & 11631  & 180020   & 500  & 6  & random & 120    & 180   & 11331 \\
Flickr     & 89250  & 449878   & 500  & 7  & fixed  & 44625  & 22312 & 22313 \\
Cora-Full  & 19793  & 63421    & 500  & 70 & random & 1395   & 2049  & 16349 \\
ogbn-arxiv & 169343 & 1166243  & 128  & 40 & fixed  & 90941  & 29799 & 48603 \\
Wiki-CS    & 11701  & 297110   & 300  & 10 & fixed  & 580    & 1769  & 5847  \\
CiteSeer   & 3327   & 4732     & 3703 & 6  & fixed  & 120    & 500   & 1000  \\
PubMed     & 19717  & 44338    & 500  & 3  & fixed  & 60     & 500   & 1000  \\
Cora-ML    & 2995   & 8158     & 2879 & 7  & random & 140    & 210   & 2645  \\
DBLP       & 17716  & 52867    & 1639 & 4  & random & 80     & 120   & 17516 \\
Computers  & 13752  & 245861   & 767  & 10 & random & 200    & 300   & 13252 \\
Cora       & 2708   & 5429     & 1433 & 7  & fixed  & 140    & 500   & 1000  \\
CS         & 18333  & 81894    & 500  & 15 & random & 300    & 450   & 17583 \\
Photo      & 7650   & 119081   & 745  & 8  & random & 160    & 240   & 7250  \\
Physics    & 34493  & 247962   & 500  & 5  & random & 100    & 150   & 34243 \\ \hline
\end{tabular}%
}
\end{table}

\subsection{Link Prediction}

For link prediction, we split 5\% and 10\% of edges for validation and test set, respectively. We fix the negative edges for the test set and sample negative edges for the training set at each iteration.

\subsection{Synthetic Dataset}\label{sec:apdx_dataset_syn}
To the best of our knowledge, our synthetic datasets are not used in recent literature. Therefore, we give some small examples of synthetic datasets to see qualitatively how the average degree and homophily vary according to $\delta$ and $p_{in}$. Specifically, we draw 2D t-SNE plot of node features and edges in Figure~\ref{fig:dataset-synthetic}. In this figure, we can observe that the average degree ($d_{avg} = n \cdot \delta$) increases as $\delta$ increases, and homophily ($h = p_{in} / \delta$) increases as $p_{in}$ increases. Note that these are raw input features sampled from the 2D Gaussian distribution, not learned node representation. We use the code from the prior work\footnote{\url{https://github.com/samihaija/mixhop/blob/master/data/synthetic/make_x.py}} ~\citep{abu2019mixhop} and apply normalization by standard score. We choose $\delta$ from $\{ 0.025, 0.2 \}$, $p_{in}$ from $\{ 0.1\delta, 0.5\delta, 0.9\delta \}$, and fix $n=100$ and $c=5$.

\begin{figure}[h]
  \centering
  \includegraphics[width=0.31\textwidth]{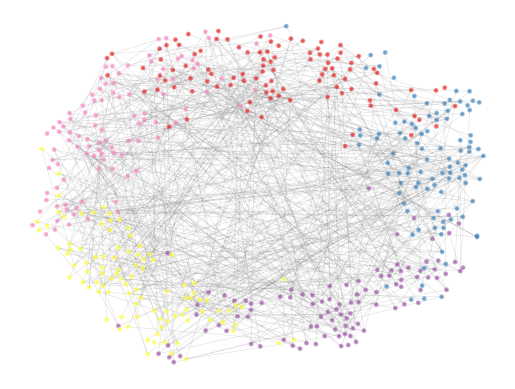}
  \includegraphics[width=0.31\textwidth]{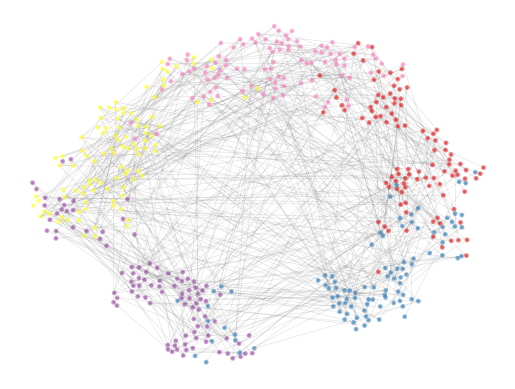}
  \includegraphics[width=0.31\textwidth]{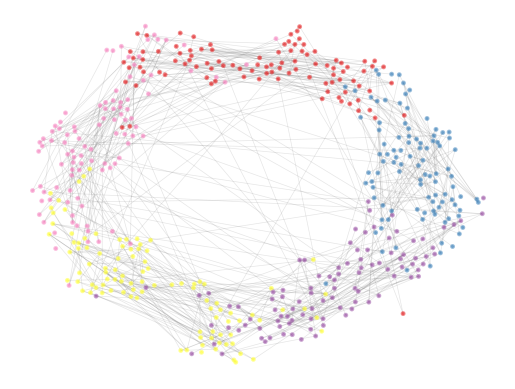}
  \includegraphics[width=0.31\textwidth]{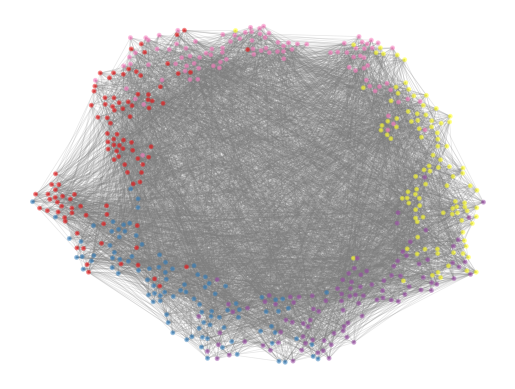}
  \includegraphics[width=0.31\textwidth]{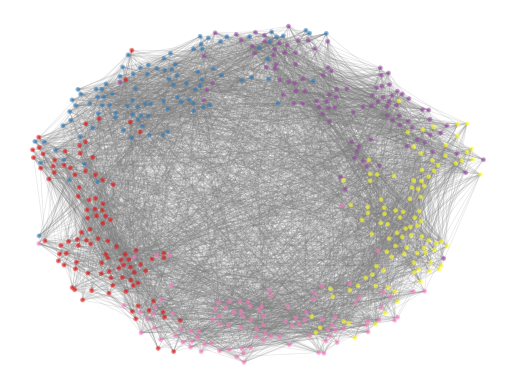}
  \includegraphics[width=0.31\textwidth]{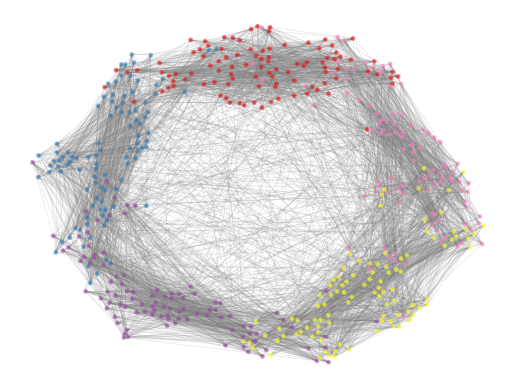}
  \caption{t-SNE plots of node features and edges for synthetic graph examples. Hyperparameters are $\delta \in \{ 0.025\ (\text{Top}), 0.2\ (\text{Bottom}) \}$ and $p_{in} \in \{ 0.1\delta\ (\text{Left}), 0.5\delta\ (\text{Center}), 0.9\delta\ (\text{Right}) \}$.}
  \label{fig:dataset-synthetic}
\end{figure}

\subsection{Distribution of Degree and Homophily of Datasets}\label{sec:apdx_dist_degree_and_homophily}
In Figure~\ref{fig:dist-dh-real}, we draw kernel density estimation plots of per-node homophily and degree of nodes in real-world datasets. We define per-node homophily as the ratio of neighbors with the same label as the center node, that is, $h_{i} = \sum_{j \in \sN_{i}} \1_{l(i) = l(j)} \big/  |\sN_{i}|$. Note that we define homophily as $h = \textstyle \frac{1}{|V|} \sum_{i \in V} h_i$ in Section~\ref{sec:results}.

We focus more on the part outside of where the degree is 1 (0 with log scale), and the per-node homophily is 0. These are leaf nodes incorrectly connected and does not significantly affect the learning overall graph representation. In most datasets, only the largest mode exists, or there are some small modes around it. However, in Flickr and Crocodile, we can observe that the interval between modes is wide. More specifically, Crocodile's modes are in the area of (high degree, low per-node homophily) and (low degree, high per-node homophily), and Flickr's modes cover most homophily at a specific degree. Note that we can regard a mixture of distribution as a mixture of different sub-graphs. We argue that this is the reason our recipe does not fit for these two datasets.

\begin{figure}[t]
  \centering
  \includegraphics[width=0.75\textwidth]{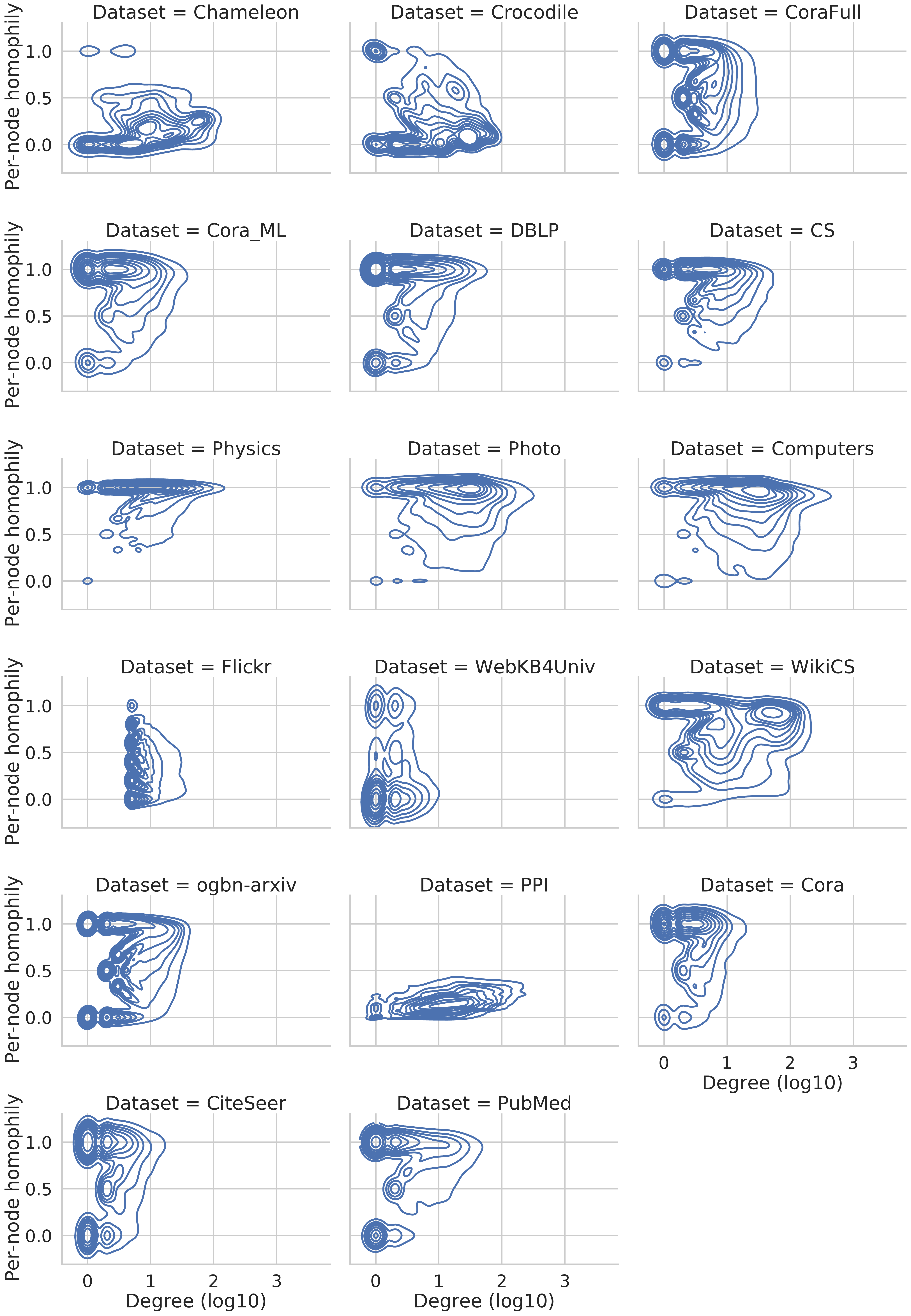}
  \caption{Kernel density estimate plot of distribution of degree and per-node homophily in real-world graphs.}
  \label{fig:dist-dh-real}
\end{figure}

\subsection{Model \& Hyperparameter Configurations}\label{sec:apdx_model}
\paragraph{Model}

Since we experiment with numbers of datasets, we maintain almost the same configurations across datasets. We do not use other methods such as residual connections, deeper layers, batch normalization, edge augmentation, and more hidden features, although we have confirmed from previous studies that these techniques contribute to performance improvement. For example, prior work has shown f1-score close to 100 for PPI. To clearly see the difference between the various graph attention designs, we intentionally keep a simple model configuration.

\begin{table}[t]
\centering
\caption{Hyperparameters for experiments on real-world datasets.}
\label{tab:hyperparameter}
\begin{tabular}{llllll}
\hline
Dataset  & Model       & $\lambda_{2}$   & $\lambda_{E}$  & $p_{e}$ & $p_{n}$ \\ \hline
Cora     & \SuperGATSD & 0.007829935945  & 10.88266937    & 0.8     & 0.5     \\
         & \SuperGATMX & 0.008228864973  & 11.34657453    & 0.8     & 0.5     \\ \hline
CiteSeer & \SuperGATSD & 0.04823808657   & 0.09073992828  & 0.8     & 0.3     \\
         & \SuperGATMX & 0.04161321832   & 0.01308169273  & 0.8     & 0.5     \\ \hline
PubMed   & \SuperGATSD & 0.0002030927563 & 18.82560333    & 0.6     & 0.7     \\
         & \SuperGATMX & 2.19E-04        & 10.4520518     & 0.6     & 0.5     \\ \hline
PPI      & \SuperGATSD & 3.39E-07        & 0.001034351842 & 1       & 0.5     \\
         & \SuperGATMX & 1.00E-07        & 1.79E-06       & 1       & 0.5     \\ \hline
\end{tabular}
\end{table}

\paragraph{Hyperparameter}

For real-world datasets, we tune two hyperparameters (mixing coefficients $\lambda_{2}$ and $\lambda_{E}$) by Bayesian optimization for the mean performance of 3 random seeds. We choose negative sampling ratio $p_{n}$ from $\{ 0.3, 0.5, 0.7, 0.9 \}$, and edge sampling ratio $p_{e}$ from $\{ 0.6, 0.8, 1.0 \}$. We fix dropout probability to 0.0 for PPI, 0.2 for ogbn-arxiv, 0.6 for others. We set learning rate to 0.05 (ogbn-arxiv), 0.01 (PubMed, PPI, Wiki-CS, Photo, Computers, CS, Physics, Crocodile, Cora-Full, DBLP), 0.005 (Cora, CiteSeer, Cora-ML, Chameleon), 0.001 (Four-Univ). For ogbn-arxiv, we set the number of features per head to 16 and the number of heads in the last layer to one; otherwise, we use eight features per head and eight heads in the last layer.

For synthetic datasets, we choose $\lambda_{E}$ from $\{ 10^{-5}, 10^{-4}, 10^{-3}, 10^{-2}, 10^{-1}, 1, 10, 10^{2} \}$ and $\lambda_{2}$ from $\{ 10^{-7}, 10^{-5}, 10^{-3} \}$. We fix learning rate to 0.01, dropout probability to 0.2, $p_{n}$ to 0.5, and, $p_{e}$ to 0.8 for all synthetic graphs.

Table~\ref{tab:hyperparameter} describes hyperparameters for SuperGAT on four real-world datasets. For other datasets and experiments, please see the code (\texttt{./SuperGAT/args.yaml}).

\subsection{CGAT Implementation}\label{sec:apdx_cgat}
\begin{table}[]
\centering
\caption{Summary of classification accuracies with 10 random seeds for Cora, CiteSeer, and PubMed in the full-supervised setting. We mark with asterisks the reprinted results from the respective papers.}
\label{tab:full-supervised}
\begin{tabular}{llll}
\hline
Model & Cora           & CiteSeer       & PubMed         \\ \hline
GAT*  & 87.2 $\pm$ 0.3 & 77.3 $\pm$ 0.3 & 87.0 $\pm$ 0.3 \\
CGAT* & 88.2 $\pm$ 0.3 & 78.9 $\pm$ 0.2 & 87.4 $\pm$ 0.3 \\
CGAT (Our Impl.: 2-layer w/ 64-features) & 88.9 $\pm$ 0.3 & 78.9 $\pm$ 0.2 & 86.9 $\pm$ 0.2 \\
\SuperGATMXb (2-layer w/ 64-features)    & 88.7 $\pm$ 0.2 & 79.1 $\pm$ 0.2 & 87.0 $\pm$ 0.1 \\ \hline
\end{tabular}
\end{table}

CGAT~\citep{wang2019improving} has two auxiliary losses: graph structure based constraint $\mathcal{L}_{g}$ and class boundary constraint $\mathcal{L}_{b}$. We borrow their notation for this section: $\mathcal{V}$ as a set of nodes, $\sN_{i}$ as a set of one-hop neighbors, $\sN_{i}^{+}$ as a set of neighbors that share labels, $\sN_{i}^{-}$ as a set of neighbors that do not share labels, $\zeta_{\cdot}$ as a margin between attention values, and $\phi\left(v_{i}, v_{k}\right)$ as unnormalized attention value.
\begin{align}
\label{eq:loss_g} \mathcal{L}_{g} &= \sum_{i \in \mathcal{V}} \sum_{j \in \sN_{i} \backslash \sN_{i}^{-}} \sum_{k \in\left(\mathcal{V} \backslash \sN_{i}\right)} \max \left(0, \phi\left(v_{i}, v_{k}\right)+\zeta_{g}-\phi\left(v_{i}, v_{j}\right)\right) \\
\label{eq:loss_b} \mathcal{L}_{b} &= \sum_{i \in \mathcal{V}} \sum_{j \in \sN_{i}^{+}} \sum_{k \in \sN_{i}^{-}} \max \left(0, \phi\left(v_{i}, v_{k}\right)+\zeta_{b}-\phi\left(v_{i}, v_{j}\right)\right)
\end{align}
Since label information is included in these two losses, they are difficult to use in semi-supervised settings that provide few labeled samples. In fact, in the CGAT paper, they conduct experiments in full-supervised settings; that is, they use all nodes in training except validation and test nodes.

So, we only use $\mathcal{L}_g$ modified for semi-supervised learning.
\begin{align}
\mathcal{L}_{g}^{\text{SSL}} &= \sum_{i \in \mathcal{V}} \sum_{j \in \sN_{i}} \sum_{k \in\left(\mathcal{V} \backslash \sN_{i}\right)} \max \left(0, \phi\left(v_{i}, v_{k}\right)+\zeta_{g}-\phi\left(v_{i}, v_{j}\right)\right)
\end{align}

With $\mathcal{L}_{c}$, the multi-class cross-entropy on node labels, CGAT's optimization objective is
\begin{align}\label{eq:cgat_original_loss}
\mathcal{L}=\mathcal{L}_{c}+\lambda_{g} \mathcal{L}_{g}+\lambda_{b} \mathcal{L}_{b},
\end{align}
and our modified CGAT's loss is
\begin{align}\label{eq:cgat_modified_loss}
\mathcal{L}=\mathcal{L}_{c}+\lambda_{g} \mathcal{L}_{g}^{\text{SSL}}.
\end{align}

In addition to losses, CGAT proposes top-k softmax and node importance based negative sampling (NINS). Top-k softmax picks up nodes with top-k attention values among neighbors. NINS adopts importance sampling when choosing negative sample nodes. 

Since the code for CGAT has not been released, we implement our own version. In all experiments in our paper, we use only the modified losses (Equation~\ref{eq:cgat_modified_loss}) and top-k softmax due to the training and implementation complexity of NINS. For PPI, even if we do not assume a semi-supervised setting, we use the same loss because we could not accurately implement multi-label cases for Equation~\ref{eq:loss_g} and \ref{eq:loss_b} with only CGAT's description.

To verify the functionality of our implementation, we report the results of a full-supervised setting with the original loss (Equation~\ref{eq:cgat_original_loss})  like CGAT paper, in Table~\ref{tab:full-supervised}. Our implementation of CGAT shows almost the same performance reported in the original paper. In addition, SuperGAT and CGAT showed almost similar performance in a full-supervised setting. Note that the original paper employs two hidden layers with hidden dimensions as 32 for Cora, 64 for CiteSeer, and three hidden layers with hidden dimensions 32 for PubMed, where models in our experiments are all two-layer with 64 features.

\subsection{Wall-clock Time Experimental Set-up}\label{sec:apdx_wall_clock_time}
To demonstrate our model's efficiency, we measure the mean wall-clock time of the entire training process of three runs using a single GPU (GeForce GTX 1080Ti). We compare our model with GAT~\citep{velickovic2018graph} and GAM~\citep{stretcu2019graph}. GAT is the basic model using a simpler attention mechanism than ours, and GAM is the state-of-the-art model using co-training with the auxiliary model.

For GAT and SuperGAT, we use our implementation (including hyperparameter settings) in PyTorch~\citep{paszke2019pytorch}. For GAM, we adopt the code in TensorFlow~\citep{tensorflow2015-whitepaper} from the authors~\footnote{\url{https://github.com/tensorflow/neural-structured-learning/tree/master/research/gam}} and choose GCN + GAM model, which showed the best performance. We retain the default settings in the code but use the hyperparameters reported in the paper, if possible. With this setting, GCN + GAM on PubMed is not finished after 24 hours; therefore, we manually early-stop the training at the best accuracy.

\clearpage
\section{Results}

\subsection{Proof of Proposition}\label{sec:apdx_proof}

\setcounter{proposition}{0}
\begin{proposition}\label{prop:dp_appendix}
For $l+1$th GAT layer, if $\mW$ and $\va$ are independent and identically drawn from zero-mean uniform distribution with variance $\sigma_{w}^{2}$ and $\sigma_{a}^{2}$ respectively, assuming that parameters are independent to input features $\vh^{l}$ and elements of $\vh^{l}$ are independent to each other,
\begin{align}
\scale[0.91]{
\text{Var}[ e_{ij, \text{GO}}^{l+1} ] =  2F^{l+1} \sigma_{w}^{2} \sigma_{a}^{2} \mathbb{E} (\|\vh^{l}\|_{2}^{2})
\ \ \text{and}\ \ 
\text{Var}[ e_{ij, \text{DP}}^{l+1} ] \geq F^{l+1} \sigma_{w}^{4} \left(
    \frac{4}{5} \mathbb{E} \left(
        ((\vh_{i}^{l})^{\top} \vh_{j}^{l})^{2}
    \right)
    + \text{Var} ( (\vh_{i}^{l})^{\top} \vh_{j}^{l} )
\right)
}
\end{align}
\end{proposition}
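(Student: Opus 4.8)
The plan is to handle the two attention forms separately, and for the dot-product case to condition on the input features so that the only remaining randomness lives in the weight matrix $\mW$, whose i.i.d.\ zero-mean entries make the moment computation tractable. Throughout write $F = F^{l+1}$ for the output dimension and $q = (\vh_i^l)^\top \vh_j^l$.

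For the GO term (the routine half), I split $\va^{l+1} = [\va^{(1)} \,\|\, \va^{(2)}]$ into its two $F$-blocks, so that $e_{ij,\GO}^{l+1} = \sum_{k=1}^{F} a^{(1)}_k (\mW^{l+1}\vh_i^l)_k + a^{(2)}_k (\mW^{l+1}\vh_j^l)_k$. Since $\va$ is zero-mean and independent of $\mW$ and $\vh^l$, the mean vanishes and the two blocks do not cross-correlate, leaving $\text{Var}[e_{ij,\GO}^{l+1}] = \sigma_a^2 \sum_{k} \big(\mathbb{E}[(\mW^{l+1}\vh_i^l)_k^2] + \mathbb{E}[(\mW^{l+1}\vh_j^l)_k^2]\big)$. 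Using $\mathbb{E}[W_{km}W_{km'}] = \sigma_w^2 \delta_{mm'}$ together with $\mW \perp \vh^l$ gives $\mathbb{E}[(\mW^{l+1}\vh_i^l)_k^2] = \sigma_w^2 \mathbb{E}\|\vh_i^l\|_2^2$; note the off-diagonal feature correlations are annihilated by the weight independence, not by the feature assumption. Summing $F$ identical terms and using that the two endpoints are identically distributed yields $2F\sigma_w^2\sigma_a^2\,\mathbb{E}\|\vh^l\|_2^2$.

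For the DP term I would write $e_{ij,\DP}^{l+1} = \sum_{k=1}^{F} X_k$ with $X_k = (\mW^{l+1}\vh_i^l)_k (\mW^{l+1}\vh_j^l)_k$, which depends only on row $k$ of $\mW^{l+1}$. Because the rows are independent, conditioned on the features the $X_k$ are i.i.d., so the law of total variance gives the clean split $\text{Var}[e_{ij,\DP}^{l+1}] = F\,\mathbb{E}\big[\text{Var}(X_1 \mid \vh^l)\big] + F^2\,\text{Var}\big(\mathbb{E}[X_1 \mid \vh^l]\big)$. The conditional mean is $\mathbb{E}[X_1\mid\vh^l] = \sigma_w^2 q$, so the second term equals $F^2\sigma_w^4\,\text{Var}(q)$, which already covers the $\text{Var}(q)$ piece of the claim since $F \ge 1$. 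For the conditional variance I expand $X_1^2$ as a degree-four monomial in a single weight row and take expectations using the fourth-moment structure of i.i.d.\ zero-mean entries: only fully paired index patterns survive, the all-equal pattern carrying the kurtosis factor $\mathbb{E}[w^4] = \tfrac{9}{5}\sigma_w^4$ — this is precisely where the uniform assumption enters. Collecting the three two-pair patterns gives $\text{Var}(X_1\mid\vh^l) = \sigma_w^4\big(q^2 - \tfrac{6}{5}R + \|\vh_i^l\|_2^2\|\vh_j^l\|_2^2\big)$, where $R = \sum_m (\vh_i^l)_m^2 (\vh_j^l)_m^2$.

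It then remains to show $\text{Var}(X_1\mid\vh^l) \ge \tfrac{4}{5}\sigma_w^4 q^2$, i.e.\ the deterministic inequality $q^2 - 6R + 5\|\vh_i^l\|_2^2\|\vh_j^l\|_2^2 \ge 0$, which combined with $F^2 \ge F$ and $\text{Var}(q)\ge 0$ delivers the stated bound after taking the outer expectation. I would prove this termwise: setting $c_m = (\vh_i^l)_m(\vh_j^l)_m$, the diagonal contributions cancel exactly (since $c_m^2 = (\vh_i^l)_m^2(\vh_j^l)_m^2$), and each unordered off-diagonal pair reduces to $2 c_m c_{m'} + 5\big((\vh_i^l)_m^2(\vh_j^l)_{m'}^2 + (\vh_i^l)_{m'}^2(\vh_j^l)_m^2\big)$, which is nonnegative because AM-GM gives $(\vh_i^l)_m^2(\vh_j^l)_{m'}^2 + (\vh_i^l)_{m'}^2(\vh_j^l)_m^2 \ge 2|c_m c_{m'}| \ge -2 c_m c_{m'}$. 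The main obstacle is combining the fourth-moment bookkeeping with this final inequality; the subtle point is that the inequality needs no distributional assumption on the features — the AM-GM pairing makes it hold for every realization, which is exactly what lets the crude $F^2 \ge F$ relaxation still land on the stated constant $\tfrac{4}{5}$.
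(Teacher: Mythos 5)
Your proof is correct, and the computational core is the same as the paper's: the same second/fourth moments of the uniform weights (in particular $\mathbb{E}[w^4]=\tfrac{9}{5}\sigma_w^4$), the same conditional variance $\sigma_w^4\big(q^2-\tfrac{6}{5}R+\|\vh_i^l\|_2^2\|\vh_j^l\|_2^2\big)$, and the same final deterministic inequality $q^2-6R+5\|\vh_i^l\|_2^2\|\vh_j^l\|_2^2\ge 0$, which the paper certifies by rewriting the excess as the explicit sum of squares $\tfrac{1}{10}\sum_{s\neq t}\big((\vh_{i,s}^l\vh_{j,t}^l+\vh_{i,t}^l\vh_{j,s}^l)^2+8(\vh_{i,s}^l\vh_{j,t}^l)^2\big)$ while you obtain it by AM--GM over unordered index pairs; these two certificates are interchangeable. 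The one genuine difference is your use of the law of total variance conditioned on $\vh^l$: the paper instead asserts $\mathrm{Var}\big(\sum_r \vh_{i,r}'\vh_{j,r}'\big)=F^{l+1}\mathrm{Var}(h_i'h_j')$ as an equality, which silently drops the cross-covariances $\mathrm{Cov}(X_k,X_{k'})=\sigma_w^4\,\mathrm{Var}(q)\ge 0$ induced by the shared random features ($k\neq k'$). Your decomposition $F\,\mathbb{E}[\mathrm{Var}(X_1\mid\vh^l)]+F^2\sigma_w^4\,\mathrm{Var}(q)$ accounts for these terms exactly and then relaxes $F^2\ge F$, so it recovers the stated bound while repairing that small slip; since the neglected covariances are nonnegative, the paper's final inequality is unaffected, but your route is the cleaner one.
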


\begin{proof}

Let $\vh' = \mW \vh^{l}$, then $\vh_{i, k}' = \sum_{r=1}^{F^{l + 1}} \mW_{kr} \vh_{i, r}^{l}$.

Note that,
\begin{gather}
    e_{ij, \text{GO}}^{l+1} = \va^{\top} [ \vh_{i}' \| \vh_{j}' ]\quad \text{and}\quad 
    e_{ij, \text{DP}}^{l+1} = \vh_{i}'^{\top} \vh_{j}'
\end{gather}

First, we compute $\mathbb{E} (a^{2})$ and $\mathbb{E} (h'^{2})$.
\begin{align}
\mathbb{E} (a^{2}) = \text{Var} (a) + \mathbb{E} (a)^{2} = \sigma_{a}^{2}
\end{align}
\begin{align}
\mathbb{E} (h_{k}'^{2})
&= \textstyle \mathbb{E} \left( \left( 
    \sum_{r=1}^{F^{l+1}} \mW_{kr} \vh_{\cdot, r}^{l}
\right)^{2} \right) \\
&= \textstyle \mathbb{E} \left(
    \sum_{r=1}^{F^{l+1}} \mW_{kr}^{2} (\vh_{\cdot, r}^{l})^{2}
\right) \\
&= \textstyle \mathbb{E} \left(
    \mW_{k, \cdot}^{2}
\right)
\mathbb{E} \left(
    \sum_{r=1}^{F^{l+1}}  (\vh_{\cdot, r}^{l})^{2}
\right) \\
&= \sigma_{w}^{2} \mathbb{E} \left(
    \| \vh^{l} \|_{2}^{2}
\right)
\end{align}

For the variance of $e_{ij, \text{GO}}^{l+1}$,
\begin{align}
\text{Var} (e_{ij, \text{GO}}^{l+1})
&= \text{Var} (\va^{\top} [ \vh_{i}' \| \vh_{j}' ]) \\
&= \textstyle \text{Var} \left( \sum_{r=1}^{F^{l+1}} (
    a_{r} \vh_{i, r}' + a_{r + F^{l+1}} \vh_{j, r}'
) \right) \\
&= 2F^{l+1} \text{Var} (
    a h'
) \\
&= 2F^{l+1} \left(
    \mathbb{E} \left(
        a^{2}
    \right)
    \mathbb{E} \left(
        h'^{2}
    \right)
    - \mathbb{E} \left(
        a
    \right)^{2}
    \mathbb{E} \left(
        h'
    \right)^{2}
\right) \\
&= 2F^{l+1} \mathbb{E} \left( a^{2} \right) \mathbb{E} \left( h'^{2} \right) \\
&= 2F^{l+1} \sigma_{a}^{2} \sigma_{w}^{2} \mathbb{E} \left(
    \| \vh^{l} \|_{2}^{2}
\right)
\end{align}

Now we compute $\mathbb{E} ( h_{i}' h_{j}' )$ and $\mathbb{E} ( h_{i}'^{2} h_{j}'^{2} )$,
\begin{align}
\mathbb{E} ( \vh_{i, k}'  \vh_{j, k}' )
&= \textstyle \mathbb{E} \left(
    \left(
        \sum_{r=1}^{F^{l + 1}} \mW_{kr} \vh_{i, r}^{l}
    \right) \left(
        \sum_{r=1}^{F^{l + 1}} \mW_{kr} \vh_{j, r}^{l}
    \right)
\right) \\
&= \textstyle \mathbb{E} \left(
    \sum_{r=1}^{F^{l + 1}} \mW_{kr}^{2} \vh_{i, r}^{l} \vh_{j, r}^{l}
\right) \\
&= \textstyle \mathbb{E} (
    \mW_{k, \cdot}^{2}
) \mathbb{E} \left(
    \sum_{r=1}^{F^{l + 1}} \vh_{i, r}^{l} \vh_{j, r}^{l}
\right) \\
&= \sigma_{w}^{2} \mathbb{E} ( (\vh_{i}^{l})^{\top} \vh_{j}^{l} )
\end{align}
\begin{align}
&\mathbb{E} ( \vh_{i, k}'^{2}  \vh_{j, k}'^{2} ) \\
&= \textstyle \mathbb{E} \left(
    \left(
        \sum_{r=1}^{F^{l + 1}} \mW_{kr} \vh_{i, r}^{l}
    \right)^{2} \left(
        \sum_{r=1}^{F^{l + 1}} \mW_{kr} \vh_{j, r}^{l}
    \right)^{2}
\right) \\
&= \mathbb{E} ( \mW_{k, \cdot}^{4} ) \mathbb{E} \left(
    \sum_{r=1}^{F^{l+1}} (\vh_{i, r}^{l} \vh_{j, r}^{l})^{2}
\right)
+ \mathbb{E} ( \mW_{k, \cdot}^{2} )^{2} 
\mathbb{E} \left(
    \sum_{s \neq t}^{F^{l+1}} (\vh_{i, s}^{l} \vh_{j, t}^{l})^{2}
    + 2(\vh_{i, s}^{l} \vh_{j, s}^{l})(\vh_{i, t}^{l} \vh_{j, t}^{l})
\right) \\
&= \mathbb{E} ( \mW_{k, \cdot}^{4} ) \mathbb{E} \left(
    \sum_{r=1}^{F^{l+1}} (\vh_{i, r}^{l} \vh_{j, r}^{l})^{2}
\right)
+ \mathbb{E} ( \mW_{k, \cdot}^{2} )^{2} 
\mathbb{E} \left(
    \sum_{r} (\vh_{i, r}^{l})^{2}\sum_{r} (\vh_{j, r}^{l})^{2} - \sum_{r} (\vh_{i, r}^{l} \vh_{j, r}^{l})^{2}
\right)
\\ &\quad + 2\mathbb{E} ( \mW_{k, \cdot}^{2} )^{2} 
\mathbb{E} \left(
    \left( \sum_{r} \vh_{i, r}^{l} \vh_{j, r}^{l} \right)^{2}
    - \sum_{r} \left(  \vh_{i, r}^{l} \vh_{j, r}^{l} \right)^{2}
\right) \\
&= \frac{9}{5}\sigma_{w}^{4} \mathbb{E} \left(
    \sum_{r=1}^{F^{l+1}} (\vh_{i, r}^{l} \vh_{j, r}^{l})^{2}
\right)
+ \sigma_{w}^{4} \mathbb{E} \left(
    \| \vh_{i}^{l} \|_{2}^{2} \| \vh_{j}^{l} \|_{2}^{2}
    - 3\sum_{r=1}^{F^{l+1}} (\vh_{i, r}^{l} \vh_{j, r}^{l})^{2}
\right)
+ 2\sigma_{w}^{4} \mathbb{E} \left(
    ((\vh_{i}^{l})^{\top} \vh_{j}^{l})^{2}
\right) \\
&= \sigma_{w}^{4} \mathbb{E} \left(
    \| \vh_{i}^{l} \|_{2}^{2} \| \vh_{j}^{l} \|_{2}^{2}
    -\frac{6}{5}\sum_{r=1}^{F^{l+1}} (\vh_{i, r}^{l} \vh_{j, r}^{l})^{2}
+ 2 ((\vh_{i}^{l})^{\top} \vh_{j}^{l})^{2}
\right) \\
&= \sigma_{w}^{4} \mathbb{E} \left(
    \frac{4}{5} ((\vh_{i}^{l})^{\top} \vh_{j}^{l})^{2}
    + \frac{1}{10} \sum_{s \neq t}  \left(
        (\vh_{i, s}^{l} \vh_{j, t}^{l} + \vh_{i, t}^{l} \vh_{j, s}^{l})^{2} + 8 (\vh_{i, s}^{l} \vh_{j, t}^{l})^{2}
    \right)
\right) + \sigma_{w}^{4} \mathbb{E} \left(
    ((\vh_{i}^{l})^{\top} \vh_{j}^{l})^{2}
\right)
\end{align}

Note that for the zero-mean uniform distribution $\mathcal{U}(-u, u)$ with variance $\sigma_{w}^{2}$,
\begin{align}
\mathbb{E} ( \mW_{\cdot, \cdot}^{2} )
&= \text{Var} (\mW_{\cdot, \cdot}) + \mathbb{E} (\mW_{\cdot, \cdot})^{2} = \sigma_{w}^{2}
\end{align}
\begin{align}
\sigma_{w}^{2} &= \frac{1}{12} (u - (-u))^{2} = \frac{1}{3} u^2 \\
\mathbb{E} ( \mW_{\cdot, \cdot}^{4} )
&= \frac{1}{5} \sum_{r=0}^{4} (-u)^{r} u^{4-r}
= \frac{1}{5} u^4 = \frac{9}{5} \sigma_{w}^{4}
\end{align}

For the variance of $e_{ij, \text{DP}}^{l+1}$,
\begin{align}
&\text{Var} \left( e_{ij, \text{DP}}^{l+1} \right) \\
&= \text{Var} \left( \vh_{i}'^{\top} \vh_{j}' \right) \\
&= \textstyle \text{Var} \left(
    \sum_{r=1}^{F^{l+1}} \vh_{i, r}' \vh_{j, r}'
\right) \\
&= F^{l+1} \text{Var} \left(
    h_{i}' h_{j}'
\right) \\
&= F^{l+1} \left(
    \mathbb{E} ((h_{i}' h_{j}')^{2}) - \mathbb{E} (h_{i}' h_{j}')^{2}
\right) \\
&= F^{l+1} \sigma_{w}^{4} \mathbb{E} \left(
    \frac{4}{5} ((\vh_{i}^{l})^{\top} \vh_{j}^{l})^{2}
    + \frac{1}{10} \sum_{s \neq t}  \left(
        (\vh_{i, s}^{l} \vh_{j, t}^{l} + \vh_{i, t}^{l} \vh_{j, s}^{l})^{2} + 8 (\vh_{i, s}^{l} \vh_{j, t}^{l})^{2}
    \right)
\right) 
\\ &\quad + F^{l+1} \sigma_{w}^{4}  \left(
\mathbb{E} \left(
    ((\vh_{i}^{l})^{\top} \vh_{j}^{l})^{2}
\right)
- \mathbb{E} ( (\vh_{i}^{l})^{\top} \vh_{j}^{l} )^{2}
\right) \\
&\geq F^{l+1} \sigma_{w}^{4} \left(
    \frac{4}{5} \mathbb{E} \left(
        ((\vh_{i}^{l})^{\top} \vh_{j}^{l})^{2}
    \right)
    + \text{Var} ( (\vh_{i}^{l})^{\top} \vh_{j}^{l} )
\right)
\end{align}

\end{proof}

\subsection{Full Result of Synthetic Graph Experiments}\label{sec:apdx_synthetic}
In Table~\ref{tab:syn-full}, we report all results of synthetic graph experiments. We experiment on a total of 144 synthetic graphs controlling 9 homophily (0.1, 0.2, ..., 0.9) and 16 average degree (1, 1.5, 2.5, 3.5, 5, 7.5, 10, 12.5, 15, 20, 25, 32.5, 40, 50, 75, 100).

\begin{table}[t]
\centering
\def\arraystretch{1.06} 
\caption{Summary of classification accuracies (of 5 runs) for synthetic datasets. }
\label{tab:syn-full}
\resizebox{\textwidth}{!}{%
\begin{tabular}{llllllllll}
\hline
Avg. degree & \multicolumn{9}{c}{Homophily}
   \\ \hline
GCN &
  0.1 &
  0.2 &
  0.3 &
  0.4 &
  0.5 &
  0.6 &
  0.7 &
  0.8 &
  0.9 \\ \hline
1 &
  32.7 $\pm$ 1.1 &
  35.1 $\pm$ 0.4 &
  34.7 $\pm$ 0.4 &
  40.8 $\pm$ 0.4 &
  43.4 $\pm$ 0.3 &
  46.2 $\pm$ 0.5 &
  49.7 $\pm$ 0.5 &
  54.8 $\pm$ 0.9 &
  56.2 $\pm$ 0.3 \\
1.5 &
  28.7 $\pm$ 0.5 &
  30.0 $\pm$ 0.6 &
  34.6 $\pm$ 0.4 &
  40.8 $\pm$ 0.4 &
  41.7 $\pm$ 0.4 &
  46.9 $\pm$ 0.5 &
  52.1 $\pm$ 0.6 &
  54.5 $\pm$ 0.5 &
  63.6 $\pm$ 0.5 \\
2.5 &
  24.6 $\pm$ 0.2 &
  27.9 $\pm$ 0.9 &
  31.0 $\pm$ 0.3 &
  39.7 $\pm$ 0.6 &
  44.0 $\pm$ 0.6 &
  51.4 $\pm$ 0.8 &
  56.6 $\pm$ 0.5 &
  65.2 $\pm$ 0.6 &
  74.5 $\pm$ 0.4 \\
3.5 &
  25.8 $\pm$ 0.8 &
  30.5 $\pm$ 0.8 &
  31.9 $\pm$ 0.7 &
  38.4 $\pm$ 0.4 &
  45.0 $\pm$ 1.0 &
  49.8 $\pm$ 0.5 &
  60.8 $\pm$ 0.7 &
  70.3 $\pm$ 0.3 &
  79.5 $\pm$ 0.3 \\
5 &
  25.0 $\pm$ 0.3 &
  26.3 $\pm$ 0.5 &
  33.0 $\pm$ 1.1 &
  39.8 $\pm$ 0.7 &
  50.2 $\pm$ 0.5 &
  58.6 $\pm$ 1.1 &
  68.8 $\pm$ 0.3 &
  78.0 $\pm$ 1.9 &
  88.5 $\pm$ 0.2 \\
7.5 &
  26.1 $\pm$ 0.5 &
  29.8 $\pm$ 0.6 &
  34.0 $\pm$ 0.6 &
  43.8 $\pm$ 0.9 &
  52.0 $\pm$ 0.4 &
  70.7 $\pm$ 3.2 &
  74.6 $\pm$ 2.4 &
  88.6 $\pm$ 0.8 &
  95.0 $\pm$ 0.2 \\
10 &
  25.4 $\pm$ 0.6 &
  29.2 $\pm$ 0.5 &
  37.1 $\pm$ 0.5 &
  47.6 $\pm$ 1.0 &
  58.1 $\pm$ 0.9 &
  73.1 $\pm$ 2.8 &
  85.3 $\pm$ 2.6 &
  92.7 $\pm$ 1.1 &
  97.9 $\pm$ 0.5 \\
12.5 &
  24.0 $\pm$ 0.4 &
  27.6 $\pm$ 0.5 &
  39.6 $\pm$ 0.8 &
  48.9 $\pm$ 0.4 &
  65.8 $\pm$ 3.1 &
  77.8 $\pm$ 2.4 &
  88.5 $\pm$ 1.5 &
  95.0 $\pm$ 0.9 &
  99.2 $\pm$ 0.0 \\
15 &
  22.1 $\pm$ 0.5 &
  28.2 $\pm$ 0.6 &
  44.0 $\pm$ 0.7 &
  53.0 $\pm$ 0.5 &
  69.9 $\pm$ 4.7 &
  79.1 $\pm$ 2.0 &
  92.2 $\pm$ 1.8 &
  98.0 $\pm$ 0.4 &
  99.6 $\pm$ 0.3 \\
20 &
  24.3 $\pm$ 0.6 &
  30.9 $\pm$ 0.7 &
  41.9 $\pm$ 0.9 &
  58.1 $\pm$ 1.2 &
  75.0 $\pm$ 2.9 &
  86.7 $\pm$ 1.1 &
  96.1 $\pm$ 0.3 &
  98.9 $\pm$ 0.1 &
  99.8 $\pm$ 0.1 \\
25 &
  26.6 $\pm$ 1.0 &
  31.1 $\pm$ 0.3 &
  44.9 $\pm$ 0.2 &
  62.4 $\pm$ 0.7 &
  80.1 $\pm$ 2.6 &
  91.0 $\pm$ 1.5 &
  97.5 $\pm$ 0.5 &
  99.5 $\pm$ 0.2 &
  100.0 $\pm$ 0.0 \\
32.5 &
  26.3 $\pm$ 0.7 &
  33.8 $\pm$ 0.7 &
  51.8 $\pm$ 0.8 &
  67.5 $\pm$ 2.9 &
  83.5 $\pm$ 1.9 &
  95.7 $\pm$ 0.3 &
  98.4 $\pm$ 0.2 &
  100.0 $\pm$ 0.0 &
  100.0 $\pm$ 0.0 \\
40 &
  23.7 $\pm$ 0.5 &
  34.2 $\pm$ 0.5 &
  53.4 $\pm$ 0.4 &
  72.8 $\pm$ 1.5 &
  87.6 $\pm$ 0.9 &
  96.1 $\pm$ 0.6 &
  99.7 $\pm$ 0.1 &
  99.9 $\pm$ 0.0 &
  100.0 $\pm$ 0.0 \\
50 &
  25.0 $\pm$ 1.1 &
  36.4 $\pm$ 1.0 &
  55.1 $\pm$ 0.5 &
  82.7 $\pm$ 3.0 &
  91.2 $\pm$ 0.7 &
  98.6 $\pm$ 0.4 &
  99.8 $\pm$ 0.0 &
  99.9 $\pm$ 0.0 &
  100.0 $\pm$ 0.0 \\
75 &
  25.9 $\pm$ 0.6 &
  40.6 $\pm$ 0.6 &
  68.0 $\pm$ 1.0 &
  87.9 $\pm$ 2.6 &
  97.1 $\pm$ 0.6 &
  99.6 $\pm$ 0.1 &
  100.0 $\pm$ 0.0 &
  100.0 $\pm$ 0.0 &
  100.0 $\pm$ 0.0 \\
100 &
  25.1 $\pm$ 0.5 &
  42.0 $\pm$ 0.9 &
  72.4 $\pm$ 1.4 &
  92.6 $\pm$ 0.4 &
  98.6 $\pm$ 0.2 &
  100.0 $\pm$ 0.0 &
  100.0 $\pm$ 0.0 &
  100.0 $\pm$ 0.0 &
  100.0 $\pm$ 0.0 \\ \hline
\GATGO &
  0.1 &
  0.2 &
  0.3 &
  0.4 &
  0.5 &
  0.6 &
  0.7 &
  0.8 &
  0.9 \\ \hline
1 &
  33.7 $\pm$ 0.9 &
  36.7 $\pm$ 0.8 &
  35.2 $\pm$ 1.2 &
  40.1 $\pm$ 0.7 &
  43.0 $\pm$ 0.8 &
  46.3 $\pm$ 0.6 &
  49.4 $\pm$ 0.5 &
  53.8 $\pm$ 1.4 &
  56.1 $\pm$ 0.5 \\
1.5 &
  28.8 $\pm$ 1.0 &
  33.0 $\pm$ 0.5 &
  34.8 $\pm$ 0.9 &
  40.7 $\pm$ 0.4 &
  42.2 $\pm$ 0.9 &
  46.7 $\pm$ 1.0 &
  52.2 $\pm$ 0.8 &
  53.6 $\pm$ 1.4 &
  62.8 $\pm$ 0.5 \\
2.5 &
  28.8 $\pm$ 1.2 &
  30.6 $\pm$ 0.6 &
  32.3 $\pm$ 0.5 &
  40.1 $\pm$ 0.8 &
  44.0 $\pm$ 1.3 &
  52.2 $\pm$ 0.6 &
  55.5 $\pm$ 1.4 &
  64.6 $\pm$ 1.3 &
  73.1 $\pm$ 0.6 \\
3.5 &
  28.0 $\pm$ 0.4 &
  33.4 $\pm$ 1.6 &
  33.5 $\pm$ 0.6 &
  40.1 $\pm$ 1.0 &
  46.2 $\pm$ 0.6 &
  50.8 $\pm$ 0.6 &
  62.5 $\pm$ 0.6 &
  71.7 $\pm$ 0.6 &
  78.0 $\pm$ 0.5 \\
5 &
  28.0 $\pm$ 1.3 &
  30.0 $\pm$ 0.8 &
  36.9 $\pm$ 0.6 &
  42.1 $\pm$ 0.4 &
  53.6 $\pm$ 0.7 &
  61.0 $\pm$ 1.1 &
  70.4 $\pm$ 0.5 &
  78.7 $\pm$ 1.2 &
  87.5 $\pm$ 0.6 \\
7.5 &
  30.0 $\pm$ 0.9 &
  31.6 $\pm$ 2.0 &
  40.4 $\pm$ 0.5 &
  45.0 $\pm$ 1.0 &
  59.5 $\pm$ 0.6 &
  72.3 $\pm$ 1.0 &
  79.4 $\pm$ 0.6 &
  90.4 $\pm$ 0.5 &
  95.0 $\pm$ 0.5 \\
10 &
  31.0 $\pm$ 1.4 &
  34.0 $\pm$ 1.2 &
  43.1 $\pm$ 0.6 &
  54.6 $\pm$ 0.5 &
  66.1 $\pm$ 0.6 &
  77.4 $\pm$ 1.5 &
  90.6 $\pm$ 1.4 &
  94.7 $\pm$ 0.7 &
  98.4 $\pm$ 0.5 \\
12.5 &
  29.8 $\pm$ 1.4 &
  35.5 $\pm$ 1.7 &
  47.3 $\pm$ 0.9 &
  58.6 $\pm$ 1.7 &
  72.1 $\pm$ 1.4 &
  86.3 $\pm$ 0.5 &
  90.7 $\pm$ 0.7 &
  96.2 $\pm$ 0.3 &
  98.9 $\pm$ 0.2 \\
15 &
  31.9 $\pm$ 1.6 &
  34.6 $\pm$ 1.6 &
  49.5 $\pm$ 0.9 &
  62.0 $\pm$ 0.7 &
  77.2 $\pm$ 1.5 &
  85.9 $\pm$ 0.3 &
  94.4 $\pm$ 0.9 &
  98.5 $\pm$ 0.5 &
  99.4 $\pm$ 0.1 \\
20 &
  34.4 $\pm$ 1.8 &
  38.3 $\pm$ 1.6 &
  54.1 $\pm$ 1.9 &
  70.0 $\pm$ 1.1 &
  83.9 $\pm$ 1.0 &
  93.7 $\pm$ 0.1 &
  97.5 $\pm$ 0.3 &
  99.0 $\pm$ 0.3 &
  99.7 $\pm$ 0.0 \\
25 &
  35.8 $\pm$ 2.0 &
  42.0 $\pm$ 2.4 &
  57.7 $\pm$ 0.6 &
  77.7 $\pm$ 0.9 &
  87.9 $\pm$ 1.2 &
  95.0 $\pm$ 0.7 &
  98.7 $\pm$ 0.6 &
  99.6 $\pm$ 0.3 &
  99.9 $\pm$ 0.1 \\
32.5 &
  37.4 $\pm$ 1.2 &
  44.7 $\pm$ 1.1 &
  66.5 $\pm$ 1.9 &
  79.9 $\pm$ 1.2 &
  91.4 $\pm$ 1.4 &
  98.0 $\pm$ 0.5 &
  99.0 $\pm$ 0.3 &
  99.8 $\pm$ 0.1 &
  100.0 $\pm$ 0.0 \\
40 &
  37.5 $\pm$ 2.0 &
  45.1 $\pm$ 0.9 &
  66.5 $\pm$ 1.1 &
  85.7 $\pm$ 1.5 &
  93.5 $\pm$ 1.0 &
  97.6 $\pm$ 0.6 &
  99.5 $\pm$ 0.1 &
  99.9 $\pm$ 0.1 &
  99.9 $\pm$ 0.1 \\
50 &
  38.7 $\pm$ 1.8 &
  49.5 $\pm$ 2.3 &
  68.9 $\pm$ 2.5 &
  89.5 $\pm$ 0.8 &
  96.0 $\pm$ 0.8 &
  99.0 $\pm$ 0.4 &
  99.7 $\pm$ 0.2 &
  99.8 $\pm$ 0.1 &
  100.0 $\pm$ 0.0 \\
75 &
  39.6 $\pm$ 3.0 &
  53.5 $\pm$ 1.7 &
  77.6 $\pm$ 2.6 &
  92.8 $\pm$ 2.3 &
  98.0 $\pm$ 1.2 &
  99.5 $\pm$ 0.3 &
  99.8 $\pm$ 0.2 &
  100.0 $\pm$ 0.0 &
  100.0 $\pm$ 0.0 \\
100 &
  41.3 $\pm$ 1.2 &
  56.6 $\pm$ 1.4 &
  81.1 $\pm$ 3.5 &
  95.9 $\pm$ 1.2 &
  99.0 $\pm$ 0.4 &
  99.8 $\pm$ 0.1 &
  99.9 $\pm$ 0.1 &
  100.0 $\pm$ 0.1 &
  100.0 $\pm$ 0.0 \\ \hline
\SuperGATSD &
  0.1 &
  0.2 &
  0.3 &
  0.4 &
  0.5 &
  0.6 &
  0.7 &
  0.8 &
  0.9 \\ \hline
1 &
  38.5 $\pm$ 1.0 &
  40.5 $\pm$ 1.5 &
  39.5 $\pm$ 1.6 &
  42.3 $\pm$ 0.9 &
  44.0 $\pm$ 0.5 &
  47.1 $\pm$ 0.5 &
  50.1 $\pm$ 1.0 &
  53.0 $\pm$ 0.3 &
  55.1 $\pm$ 1.0 \\
1.5 &
  36.7 $\pm$ 1.2 &
  37.8 $\pm$ 1.8 &
  42.0 $\pm$ 0.4 &
  42.9 $\pm$ 0.8 &
  45.0 $\pm$ 0.7 &
  47.8 $\pm$ 0.5 &
  52.9 $\pm$ 0.7 &
  54.7 $\pm$ 1.2 &
  63.0 $\pm$ 0.4 \\
2.5 &
  35.7 $\pm$ 1.8 &
  37.3 $\pm$ 1.6 &
  39.2 $\pm$ 1.4 &
  47.8 $\pm$ 1.3 &
  49.0 $\pm$ 0.3 &
  56.1 $\pm$ 0.5 &
  59.4 $\pm$ 0.4 &
  65.9 $\pm$ 0.8 &
  70.6 $\pm$ 1.2 \\
3.5 &
  35.7 $\pm$ 1.0 &
  39.2 $\pm$ 1.9 &
  41.4 $\pm$ 0.5 &
  44.0 $\pm$ 0.7 &
  51.0 $\pm$ 0.3 &
  55.0 $\pm$ 1.2 &
  65.1 $\pm$ 1.1 &
  73.5 $\pm$ 1.0 &
  78.1 $\pm$ 0.6 \\
5 &
  37.0 $\pm$ 1.5 &
  39.0 $\pm$ 1.5 &
  45.0 $\pm$ 1.7 &
  48.1 $\pm$ 1.0 &
  57.5 $\pm$ 0.6 &
  66.1 $\pm$ 0.9 &
  74.4 $\pm$ 0.2 &
  81.0 $\pm$ 1.0 &
  88.3 $\pm$ 0.8 \\
7.5 &
  36.8 $\pm$ 1.5 &
  38.7 $\pm$ 0.7 &
  47.3 $\pm$ 0.5 &
  51.1 $\pm$ 0.8 &
  61.2 $\pm$ 0.6 &
  75.5 $\pm$ 0.5 &
  82.3 $\pm$ 0.9 &
  91.0 $\pm$ 0.2 &
  95.2 $\pm$ 0.3 \\
10 &
  39.4 $\pm$ 1.1 &
  42.3 $\pm$ 0.7 &
  50.5 $\pm$ 1.2 &
  58.6 $\pm$ 0.3 &
  70.0 $\pm$ 0.5 &
  80.1 $\pm$ 0.5 &
  90.8 $\pm$ 0.3 &
  95.5 $\pm$ 0.6 &
  98.7 $\pm$ 0.3 \\
12.5 &
  38.5 $\pm$ 1.0 &
  42.0 $\pm$ 0.7 &
  50.5 $\pm$ 0.6 &
  62.4 $\pm$ 0.5 &
  73.6 $\pm$ 0.4 &
  86.8 $\pm$ 0.4 &
  92.9 $\pm$ 0.4 &
  98.0 $\pm$ 0.6 &
  99.3 $\pm$ 0.3 \\
15 &
  40.4 $\pm$ 1.1 &
  42.8 $\pm$ 0.7 &
  53.6 $\pm$ 0.3 &
  67.4 $\pm$ 0.4 &
  79.7 $\pm$ 0.3 &
  89.0 $\pm$ 1.0 &
  96.3 $\pm$ 0.3 &
  99.1 $\pm$ 0.2 &
  99.6 $\pm$ 0.3 \\
20 &
  37.8 $\pm$ 0.8 &
  42.1 $\pm$ 0.9 &
  56.9 $\pm$ 0.9 &
  70.0 $\pm$ 0.5 &
  86.1 $\pm$ 0.8 &
  95.6 $\pm$ 0.3 &
  99.1 $\pm$ 0.1 &
  99.6 $\pm$ 0.2 &
  99.7 $\pm$ 0.1 \\
25 &
  40.0 $\pm$ 1.0 &
  48.8 $\pm$ 1.2 &
  59.5 $\pm$ 0.5 &
  78.7 $\pm$ 0.2 &
  90.7 $\pm$ 0.2 &
  97.5 $\pm$ 0.1 &
  99.4 $\pm$ 0.1 &
  100.0 $\pm$ 0.0 &
  99.9 $\pm$ 0.1 \\
32.5 &
  39.7 $\pm$ 1.1 &
  48.5 $\pm$ 0.7 &
  69.4 $\pm$ 0.4 &
  82.7 $\pm$ 0.5 &
  94.3 $\pm$ 0.2 &
  99.3 $\pm$ 0.1 &
  99.7 $\pm$ 0.1 &
  99.9 $\pm$ 0.1 &
  99.9 $\pm$ 0.0 \\
40 &
  44.2 $\pm$ 1.1 &
  48.7 $\pm$ 1.3 &
  69.2 $\pm$ 0.7 &
  88.3 $\pm$ 0.4 &
  97.1 $\pm$ 0.1 &
  99.7 $\pm$ 0.1 &
  99.9 $\pm$ 0.0 &
  99.8 $\pm$ 0.2 &
  100.0 $\pm$ 0.0 \\
50 &
  44.3 $\pm$ 0.7 &
  53.2 $\pm$ 0.6 &
  73.4 $\pm$ 1.2 &
  91.2 $\pm$ 0.4 &
  97.7 $\pm$ 0.2 &
  100.0 $\pm$ 0.0 &
  100.0 $\pm$ 0.0 &
  100.0 $\pm$ 0.0 &
  100.0 $\pm$ 0.0 \\
75 &
  44.8 $\pm$ 0.7 &
  56.1 $\pm$ 0.9 &
  82.7 $\pm$ 0.6 &
  95.7 $\pm$ 0.2 &
  99.8 $\pm$ 0.2 &
  99.9 $\pm$ 0.1 &
  100.0 $\pm$ 0.0 &
  100.0 $\pm$ 0.0 &
  100.0 $\pm$ 0.0 \\
100 &
  43.1 $\pm$ 1.3 &
  60.7 $\pm$ 0.7 &
  87.4 $\pm$ 0.3 &
  98.3 $\pm$ 0.1 &
  99.9 $\pm$ 0.0 &
  100.0 $\pm$ 0.0 &
  100.0 $\pm$ 0.0 &
  100.0 $\pm$ 0.0 &
  100.0 $\pm$ 0.0 \\ \hline
\SuperGATMX &
  0.1 &
  0.2 &
  0.3 &
  0.4 &
  0.5 &
  0.6 &
  0.7 &
  0.8 &
  0.9 \\ \hline
1 &
  33.4 $\pm$ 0.7 &
  36.3 $\pm$ 0.8 &
  35.4 $\pm$ 0.7 &
  40.5 $\pm$ 0.3 &
  43.4 $\pm$ 0.6 &
  47.0 $\pm$ 0.8 &
  50.0 $\pm$ 0.5 &
  54.2 $\pm$ 1.1 &
  56.4 $\pm$ 0.4 \\
1.5 &
  28.8 $\pm$ 0.7 &
  31.5 $\pm$ 1.0 &
  36.0 $\pm$ 0.8 &
  41.1 $\pm$ 1.1 &
  41.3 $\pm$ 0.6 &
  47.5 $\pm$ 0.8 &
  52.3 $\pm$ 0.7 &
  54.8 $\pm$ 0.5 &
  63.4 $\pm$ 0.3 \\
2.5 &
  26.6 $\pm$ 0.9 &
  30.5 $\pm$ 1.3 &
  33.6 $\pm$ 0.8 &
  42.7 $\pm$ 1.5 &
  44.7 $\pm$ 0.5 &
  52.2 $\pm$ 0.5 &
  56.5 $\pm$ 0.8 &
  66.7 $\pm$ 0.8 &
  74.5 $\pm$ 0.7 \\
3.5 &
  27.7 $\pm$ 0.9 &
  33.9 $\pm$ 1.7 &
  37.9 $\pm$ 0.7 &
  42.4 $\pm$ 0.6 &
  49.7 $\pm$ 0.8 &
  54.2 $\pm$ 1.1 &
  65.3 $\pm$ 1.1 &
  73.2 $\pm$ 0.7 &
  79.4 $\pm$ 1.5 \\
5 &
  28.4 $\pm$ 2.1 &
  33.4 $\pm$ 0.8 &
  40.9 $\pm$ 0.8 &
  46.2 $\pm$ 0.4 &
  56.4 $\pm$ 0.2 &
  64.1 $\pm$ 1.8 &
  74.6 $\pm$ 1.4 &
  83.1 $\pm$ 1.0 &
  90.6 $\pm$ 0.9 \\
7.5 &
  31.1 $\pm$ 0.7 &
  35.0 $\pm$ 0.9 &
  46.5 $\pm$ 1.1 &
  51.9 $\pm$ 0.7 &
  64.3 $\pm$ 1.5 &
  75.2 $\pm$ 1.6 &
  82.1 $\pm$ 1.3 &
  92.6 $\pm$ 1.8 &
  95.8 $\pm$ 0.2 \\
10 &
  32.6 $\pm$ 1.7 &
  41.5 $\pm$ 1.4 &
  48.9 $\pm$ 1.2 &
  61.7 $\pm$ 1.1 &
  70.4 $\pm$ 1.3 &
  81.3 $\pm$ 1.1 &
  91.3 $\pm$ 0.7 &
  95.1 $\pm$ 0.8 &
  99.2 $\pm$ 0.3 \\
12.5 &
  34.2 $\pm$ 3.1 &
  41.1 $\pm$ 0.8 &
  53.3 $\pm$ 1.0 &
  65.9 $\pm$ 1.1 &
  77.2 $\pm$ 1.4 &
  88.1 $\pm$ 0.7 &
  92.6 $\pm$ 2.6 &
  97.5 $\pm$ 0.9 &
  99.3 $\pm$ 0.2 \\
15 &
  36.4 $\pm$ 1.3 &
  39.9 $\pm$ 2.1 &
  53.8 $\pm$ 1.5 &
  68.5 $\pm$ 0.8 &
  81.0 $\pm$ 0.4 &
  90.2 $\pm$ 1.0 &
  96.1 $\pm$ 0.9 &
  99.2 $\pm$ 0.2 &
  99.6 $\pm$ 0.1 \\
20 &
  35.2 $\pm$ 3.1 &
  41.0 $\pm$ 2.1 &
  62.3 $\pm$ 1.2 &
  73.9 $\pm$ 0.9 &
  86.3 $\pm$ 0.6 &
  95.2 $\pm$ 1.3 &
  98.6 $\pm$ 0.7 &
  99.6 $\pm$ 0.1 &
  99.9 $\pm$ 0.2 \\
25 &
  36.8 $\pm$ 1.9 &
  46.6 $\pm$ 1.7 &
  62.7 $\pm$ 1.1 &
  81.9 $\pm$ 1.7 &
  90.7 $\pm$ 0.8 &
  96.8 $\pm$ 1.5 &
  99.3 $\pm$ 0.3 &
  99.8 $\pm$ 0.2 &
  100.0 $\pm$ 0.0 \\
32.5 &
  36.7 $\pm$ 0.8 &
  46.8 $\pm$ 0.5 &
  68.9 $\pm$ 0.6 &
  83.8 $\pm$ 0.7 &
  93.7 $\pm$ 1.0 &
  98.4 $\pm$ 0.7 &
  99.8 $\pm$ 0.1 &
  100.0 $\pm$ 0.0 &
  100.0 $\pm$ 0.0 \\
40 &
  39.8 $\pm$ 2.4 &
  48.8 $\pm$ 1.7 &
  72.1 $\pm$ 0.8 &
  87.5 $\pm$ 1.7 &
  96.6 $\pm$ 0.8 &
  98.9 $\pm$ 0.3 &
  99.9 $\pm$ 0.1 &
  100.0 $\pm$ 0.0 &
  100.0 $\pm$ 0.0 \\
50 &
  42.1 $\pm$ 1.3 &
  53.6 $\pm$ 2.3 &
  75.1 $\pm$ 1.9 &
  92.7 $\pm$ 1.0 &
  97.1 $\pm$ 0.9 &
  99.6 $\pm$ 0.2 &
  100.0 $\pm$ 0.0 &
  100.0 $\pm$ 0.0 &
  100.0 $\pm$ 0.0 \\
75 &
  41.9 $\pm$ 1.1 &
  55.9 $\pm$ 2.4 &
  80.8 $\pm$ 1.2 &
  95.0 $\pm$ 1.1 &
  99.5 $\pm$ 0.2 &
  99.9 $\pm$ 0.1 &
  100.0 $\pm$ 0.0 &
  100.0 $\pm$ 0.0 &
  100.0 $\pm$ 0.0 \\
100 &
  39.6 $\pm$ 2.0 &
  59.2 $\pm$ 1.8 &
  84.2 $\pm$ 3.1 &
  96.9 $\pm$ 0.7 &
  99.7 $\pm$ 0.1 &
  100.0 $\pm$ 0.0 &
  100.0 $\pm$ 0.0 &
  100.0 $\pm$ 0.0 &
  100.0 $\pm$ 0.0 \\ \hline
\end{tabular}%
}
\end{table}

\subsection{Label-agreement Study for Other Datasets and Deeper Models}\label{sec:apdx_label_agreement}
In Figure~\ref{fig:label-agreement}, we draw box plots of KL divergence between attention distribution and label agreement distribution for all nodes and layers of two-layer GATs and four-layer GATs. As shown in the paper, we can see that DP attention does not capture label-agreement rather than GO attention. Also, the degree of this phenomenon becomes stronger as the layer goes down.

\begin{figure*}[t]
  \centering
  \includegraphics[height=2.6cm]{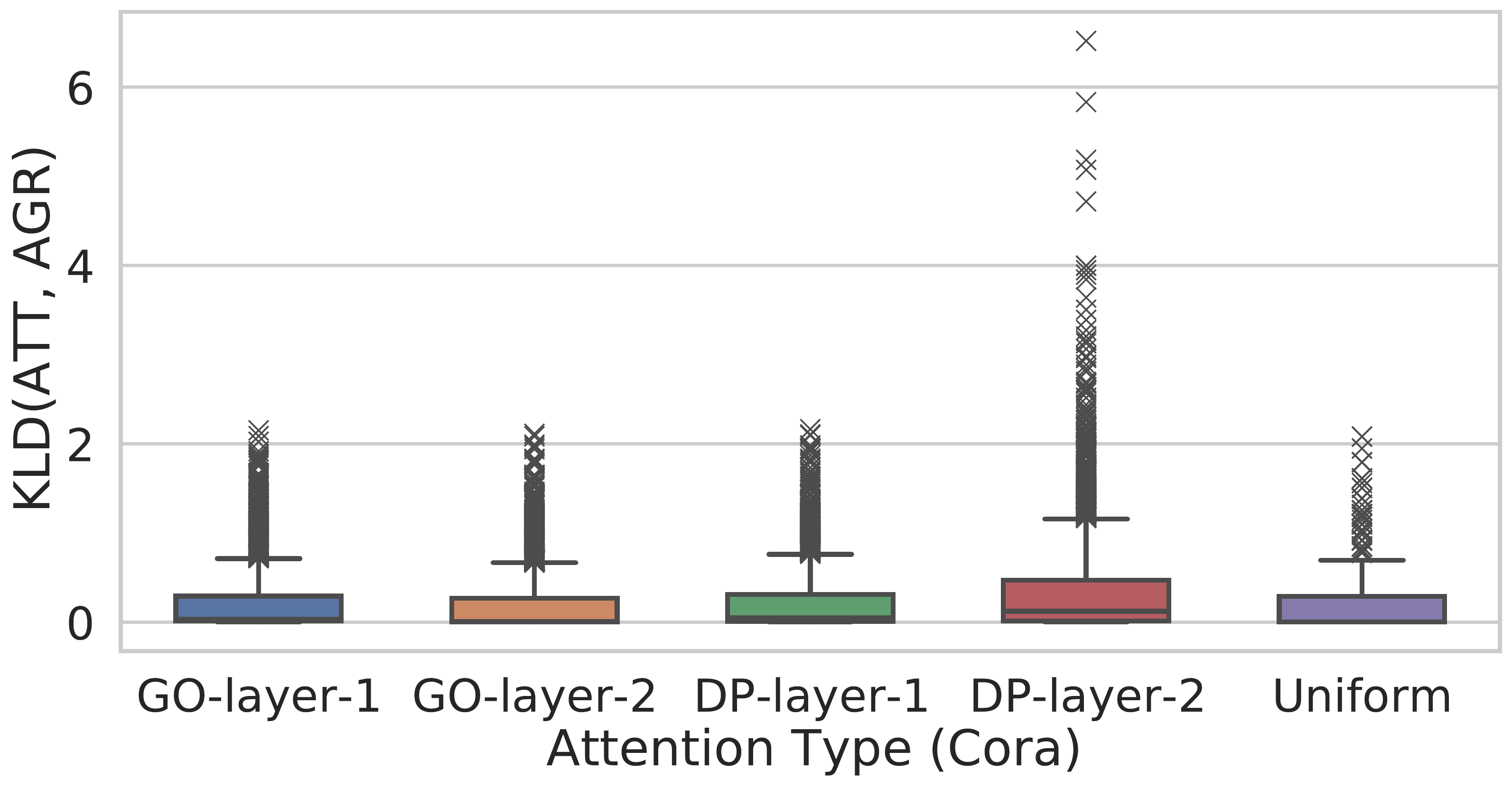}
  \includegraphics[height=2.6cm]{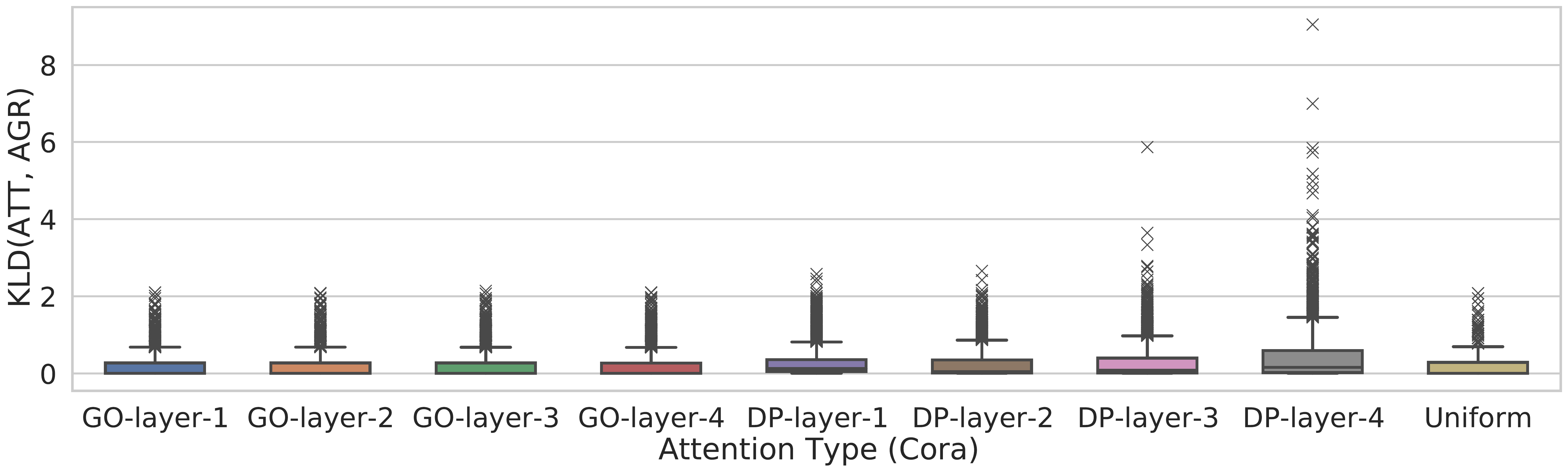}
  \includegraphics[height=2.6cm]{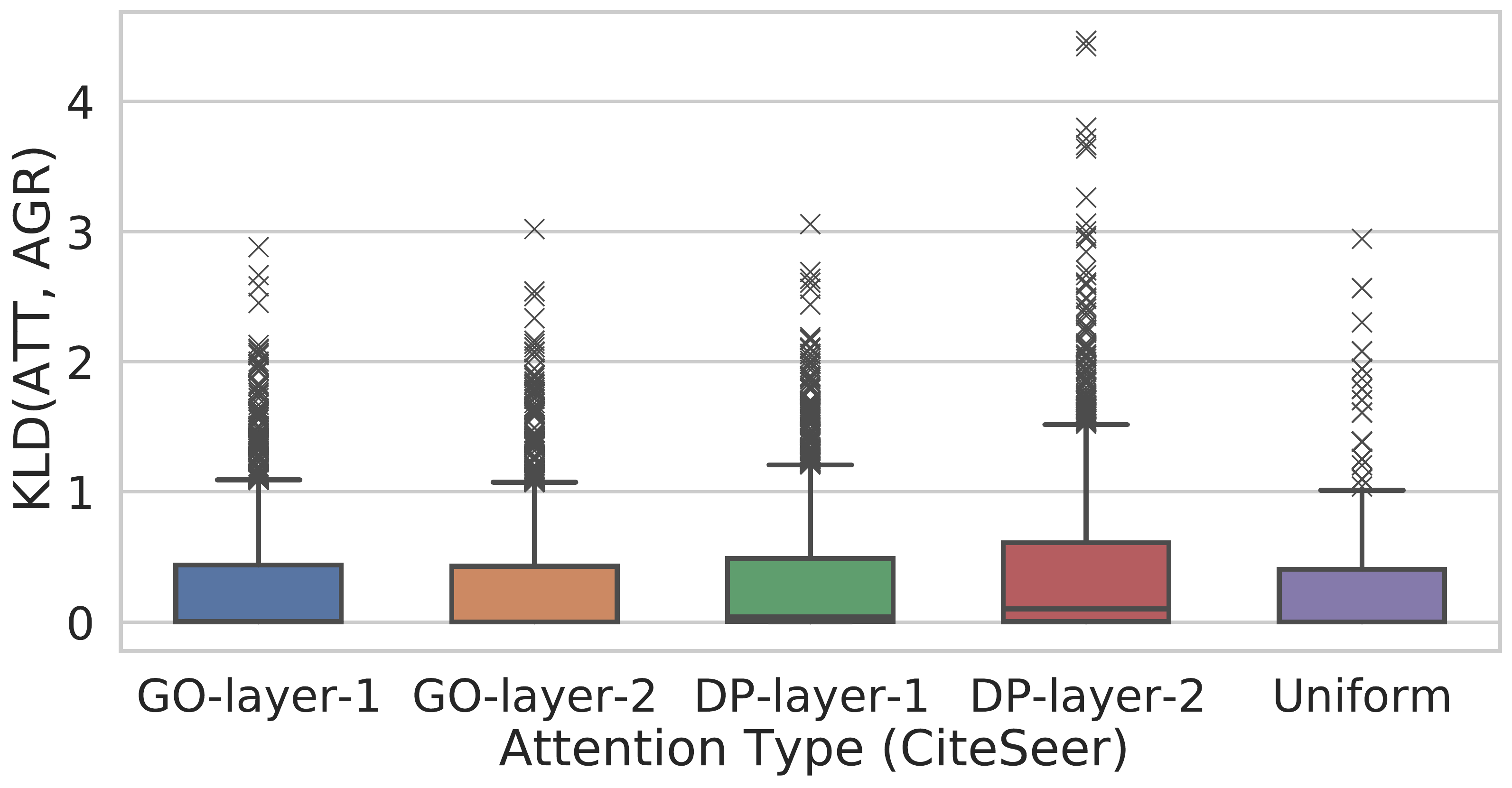}
  \includegraphics[height=2.6cm]{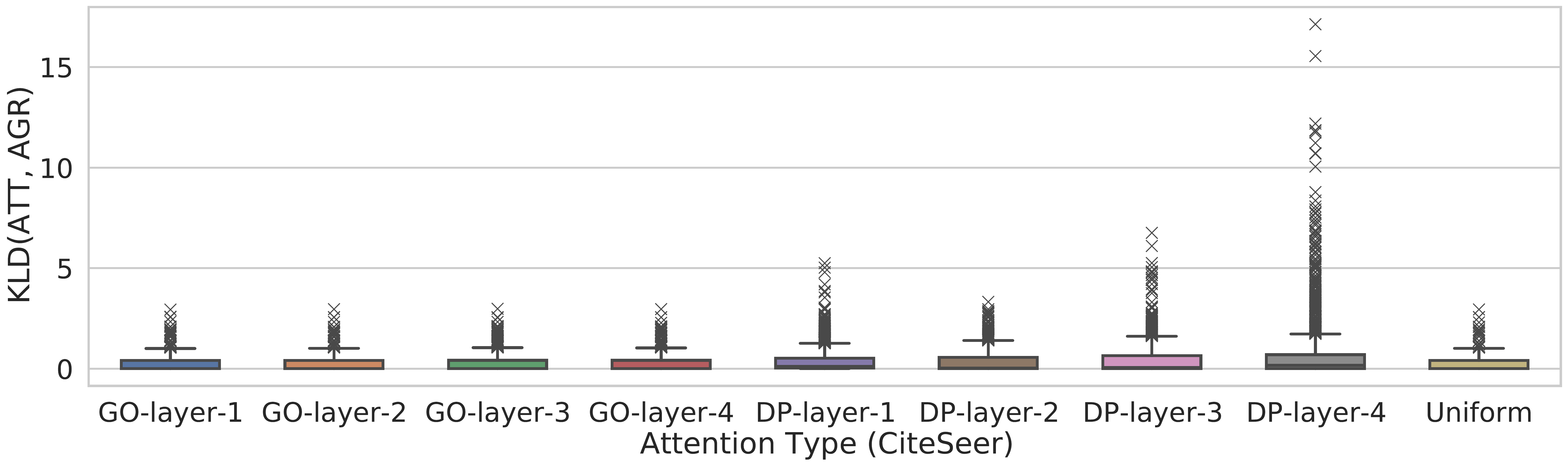}
  \includegraphics[height=2.6cm]{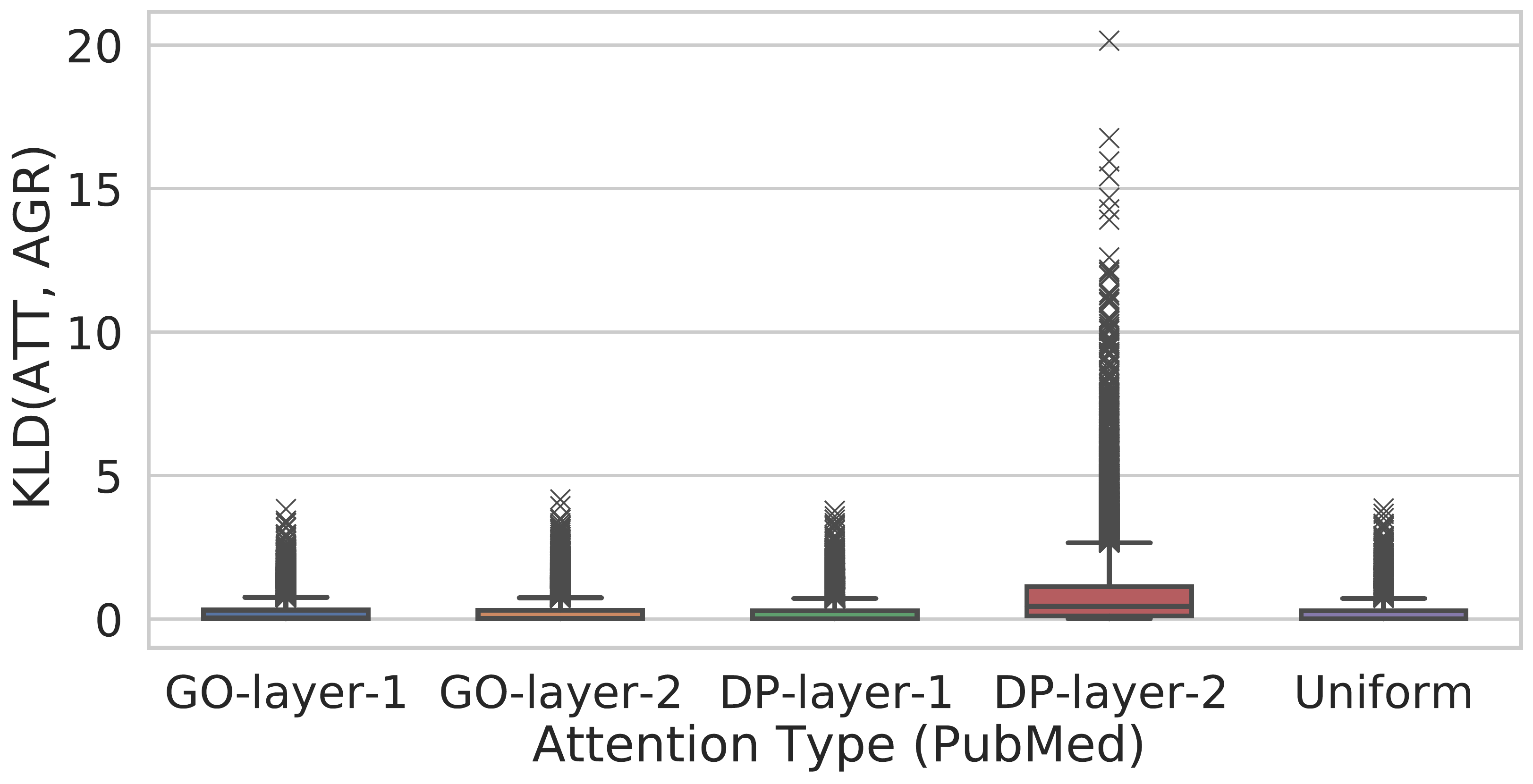}
  \includegraphics[height=2.6cm]{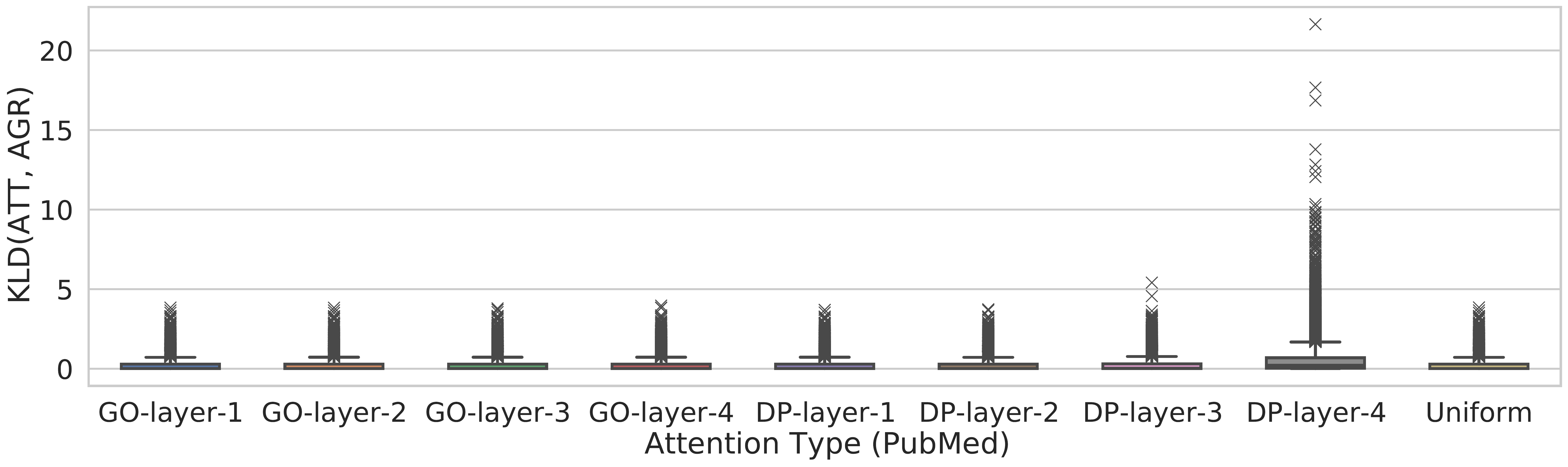}
  \includegraphics[height=2.5cm]{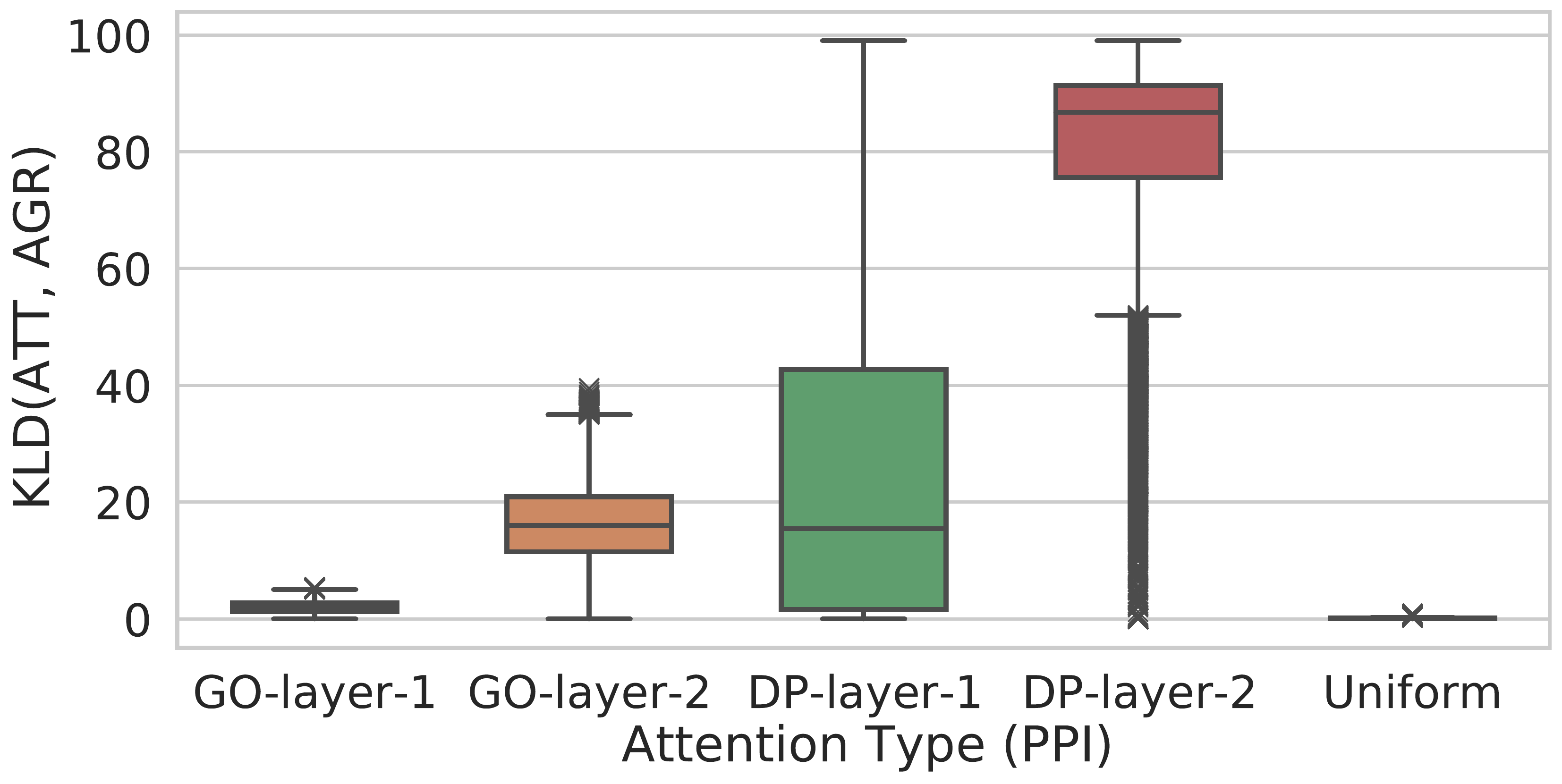}
  \includegraphics[height=2.5cm]{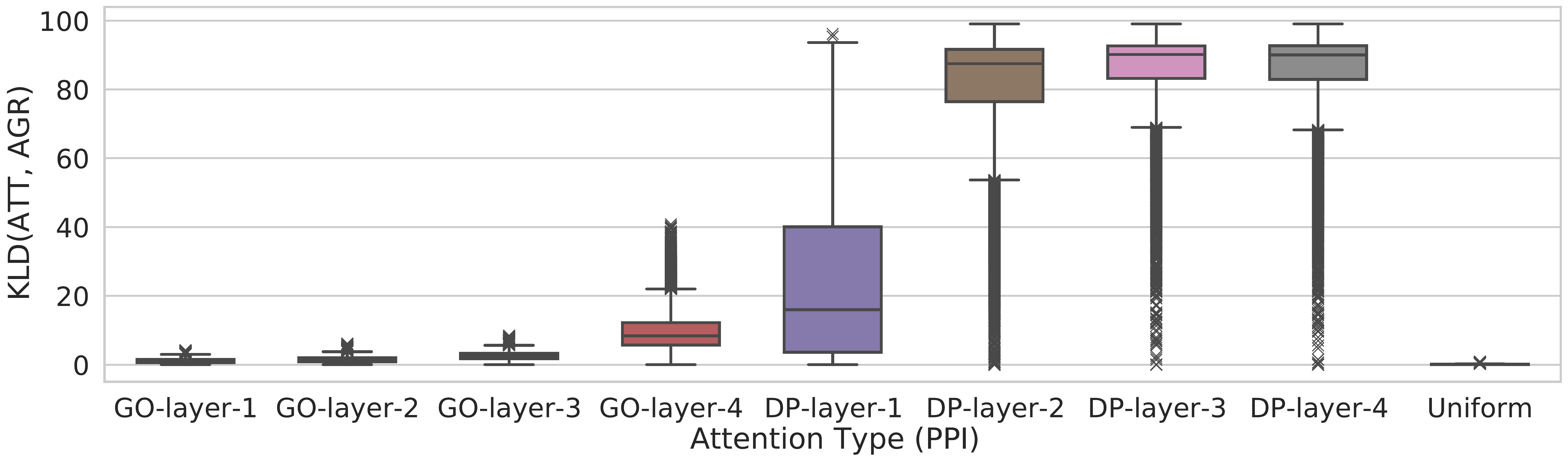}
  \caption{Distribution of KL divergence between normalized attention and label-agreement on all nodes and layers for Cora, CiteSeer, PubMed, and PPI (Left: two-layer GAT, Right: four-layer GAT).}
  \label{fig:label-agreement}
\end{figure*}

\subsection{Wall-clock Time Result}\label{sec:apdx_wall_clock_time_result}

In Table~\ref{tab:training-time}, we report the mean wall-clock time (over three runs) of the training of GAT, GAM, and \SuperGATMX. In SuperGAT, we find that negative sampling of edges is the bottleneck of training. So, we additionally implement \SuperGATMX + MPNS, which employs multi-processing when sampling negative edges. There are three observations in this experiment. GCN + GAM is highly time-intensive in the training stage ($\times 53.9$ -- $\times 328.1$ versus GAT) for all datasets. Compared to GAT, our model needs $\times 2.7$ more training time for Cora and $\times 7.2$ for PubMed, and we reduce the time by applying multi-processing to negative sampling ($\times 1.8$ for Cora and $\times 4.0$ for PubMed). For CiteSeer, we can see that \SuperGATMXb ends faster than GAT because of faster convergence and fewer epochs.

\begin{table}[]
\centering
\caption{Mean wall-clock time (seconds) of three runs of the training process on real-world datasets.}
\label{tab:training-time}
\begin{tabular}{llll}
\hline
Model            & Cora              & CiteSeer           & PubMed              \\ \hline
GAT              & 11.3 $\pm$ 2.7    & 20.4 $\pm$ 6.7     & 21.1 $\pm$ 2.0      \\
GCN + GAM        & 709.3 $\pm$ 235.9 & 1099.3 $\pm$ 812.5 & 6923.3 $\pm$ 7042.0 \\
\SuperGATMX      & 30.8 $\pm$ 0.5    & 19.3 $\pm$ 1.1     & 151.4 $\pm$ 11.4    \\
\SuperGATMX + MPNS & 20.8 $\pm$ 0.2    & 15.2 $\pm$ 0.1     & 84.6 $\pm$ 0.9      \\ \hline
\end{tabular}
\end{table}

\subsection{Sensitivity Analysis of Hyper-parameters}\label{sec:apdx_sensitivity}
We analyze sensitivity of mixing coefficient of losses $\lambda_E$, negative sampling ratio $p_n$, and edge sampling ratio $p_e$. We plot mean node classification performance (over 5 runs) against each hyper-parameter in Figure~\ref{fig:sensitivity-att-lambda}, \ref{fig:sensitivity-neg-sample-ratio}, and \ref{fig:sensitivity-edge-sampling-ratio} respectively. We use the best model for each dataset: \SuperGATMXb for citation networks and \SuperGATSDb for PPI.

For $\lambda_E$, there is a specific range that maximizes test performance in all datasets. Performance on PPI is the largest when $\lambda_E$ is $10^{-3}$, but the difference is relatively small comparing to others. We observe that there is an optimal level of the edge supervision for each dataset, and using too large $\lambda_E$ degrades node classification performance.

For $p_n$, using too many negative samples has been shown to decrease performance. The optimal number of negative samples is different for each dataset, and all are less than the number of positive samples ($p_n < 1.0$). Note that as $p_n$ increases, the required GPU memory also increases. When $p_n=5.0$, the model and data for PPI could not be accommodated by one single GPU (GeForce GTX 1080Ti).

When $p_e$ changes, the performance also changes, but the pattern is different by datasets. For Cora and PubMed, the performance against $p_e$ shows the convex curve. Performance for CiteSeer generally decreases as $p_e$ increases, but there are intervals the performance change of which is nearly zero. In the case of PPI, there are no noticeable changes against $p_e$.

\begin{figure}[t]
  \centering
  \includegraphics[width=0.245\textwidth]{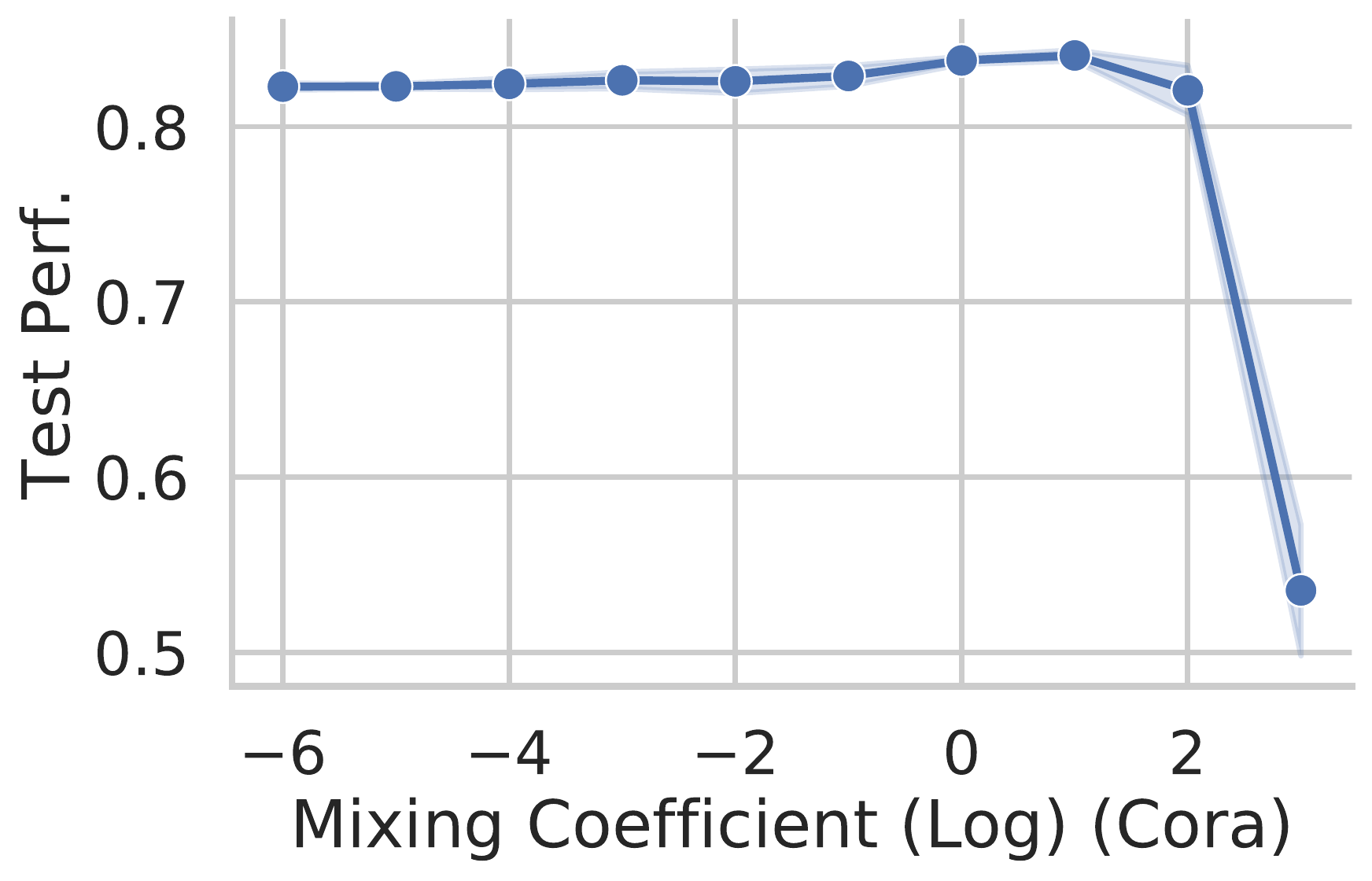}
  \includegraphics[width=0.245\textwidth]{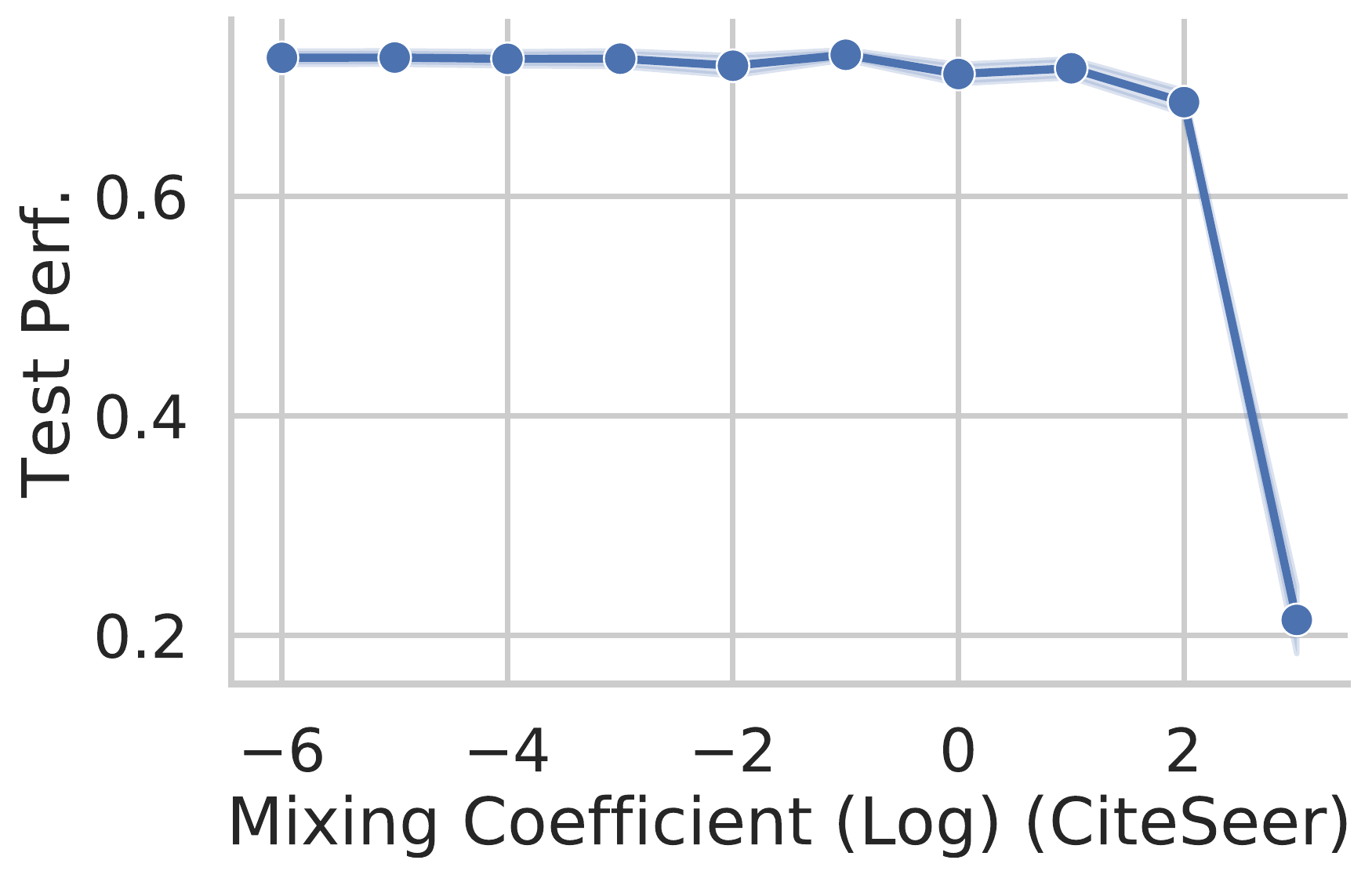}
  \includegraphics[width=0.245\textwidth]{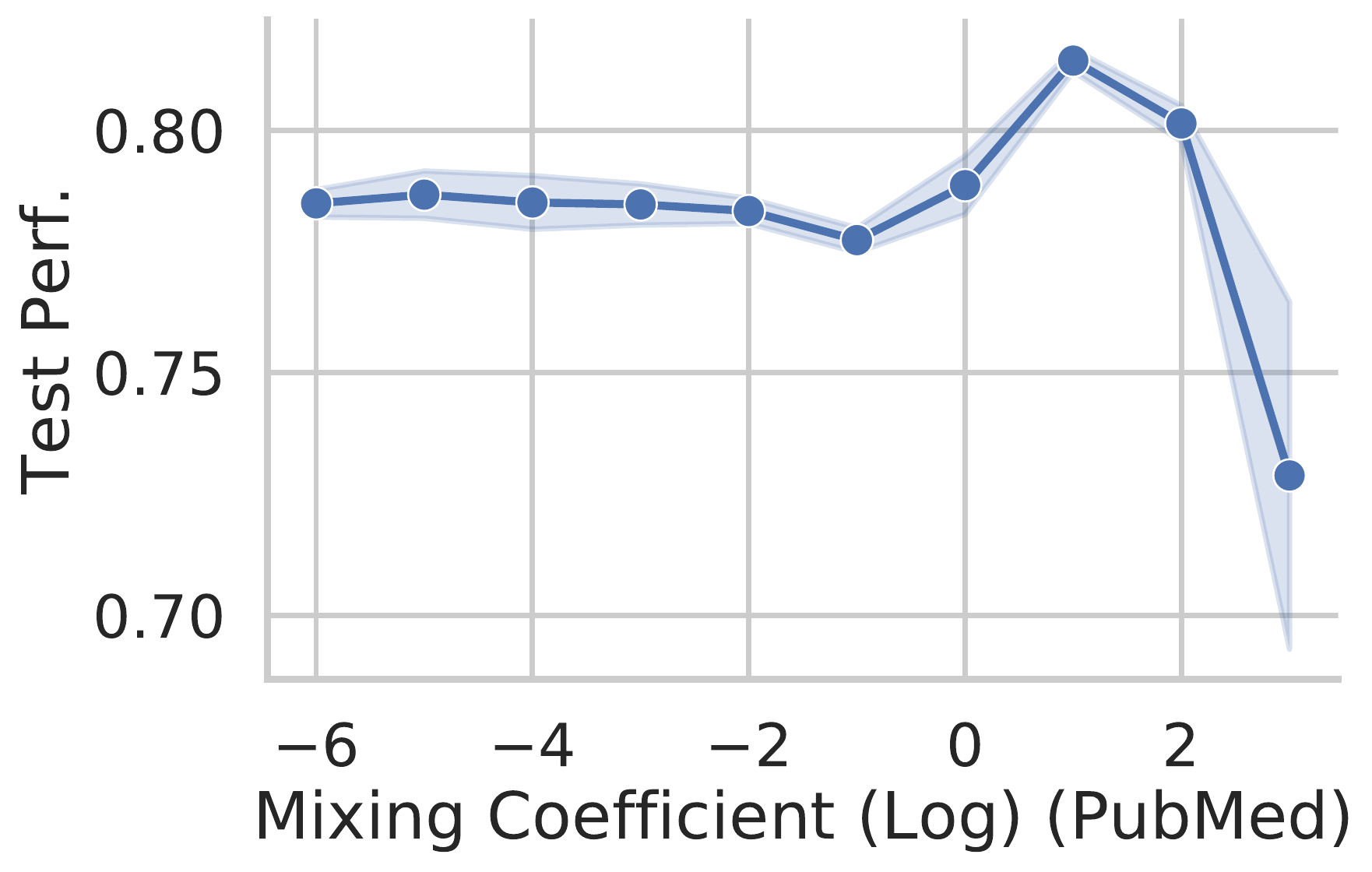}
  \includegraphics[width=0.245\textwidth]{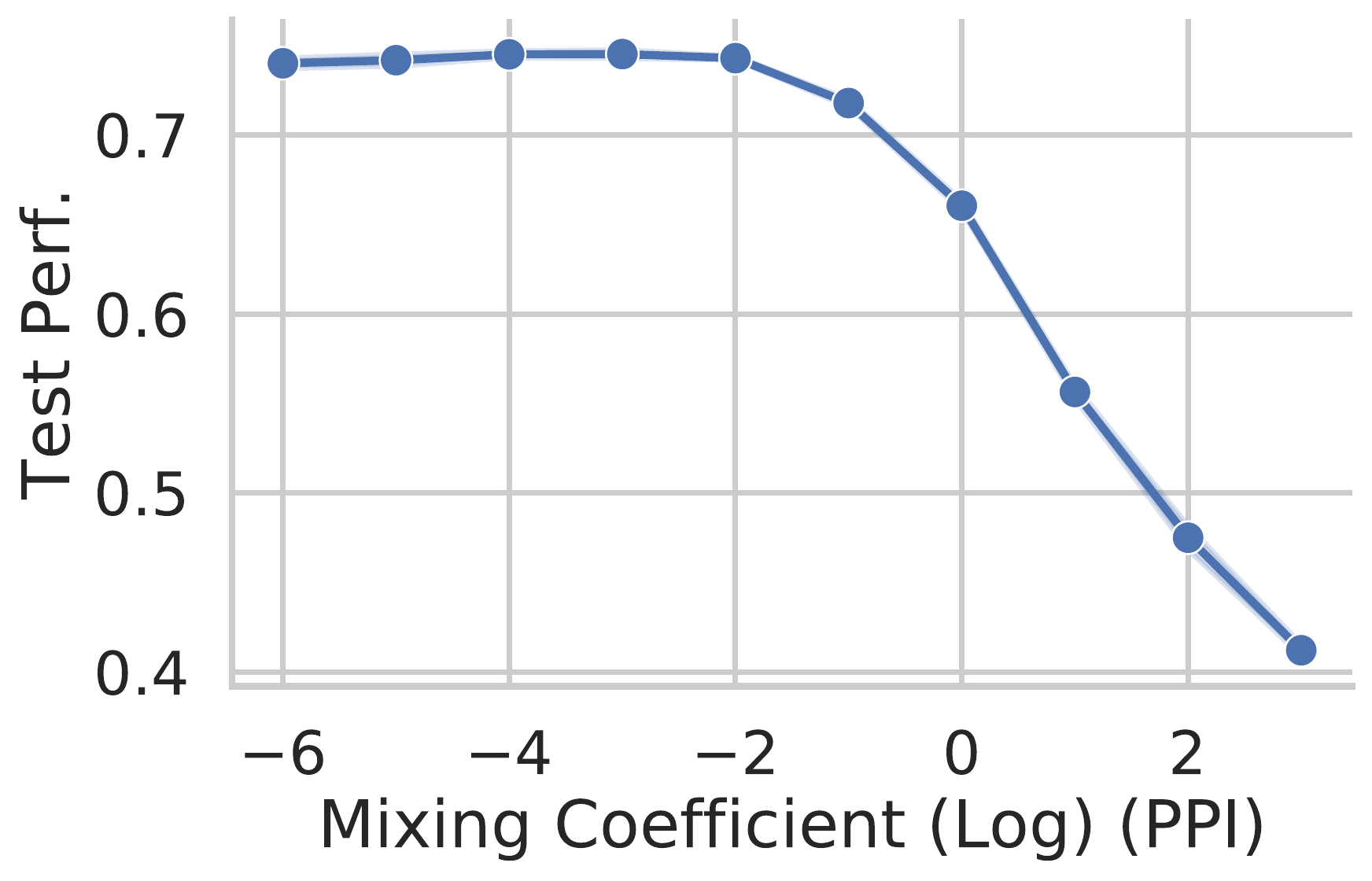}
  \caption{Test performance on node classification against the mixing coefficient $\lambda_{E}$ for \SuperGATMXb (Cora, CiteSeer, PubMed) and \SuperGATSDb (PPI).}
  \label{fig:sensitivity-att-lambda}
  \includegraphics[width=0.245\textwidth]{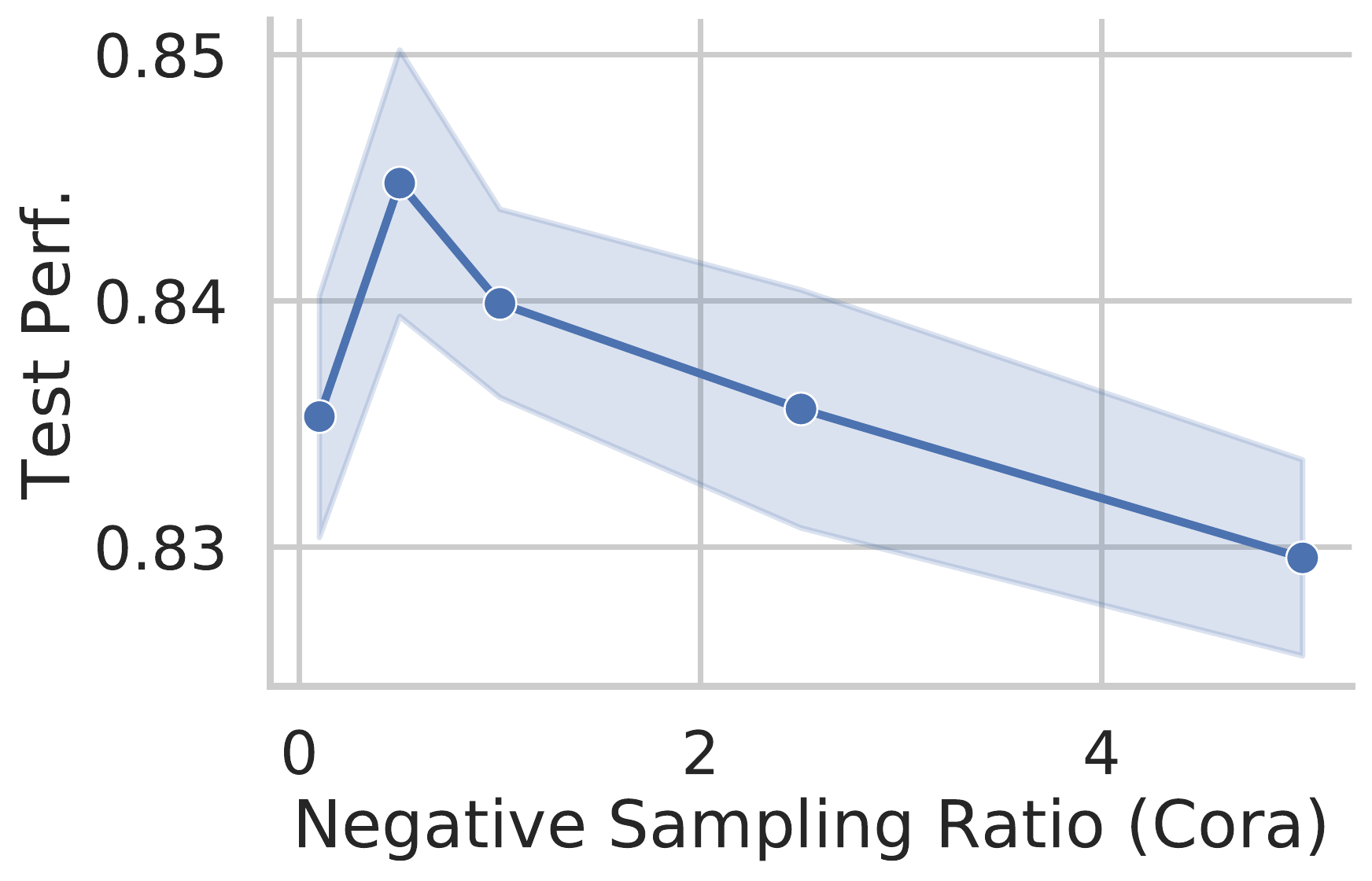}
  \includegraphics[width=0.245\textwidth]{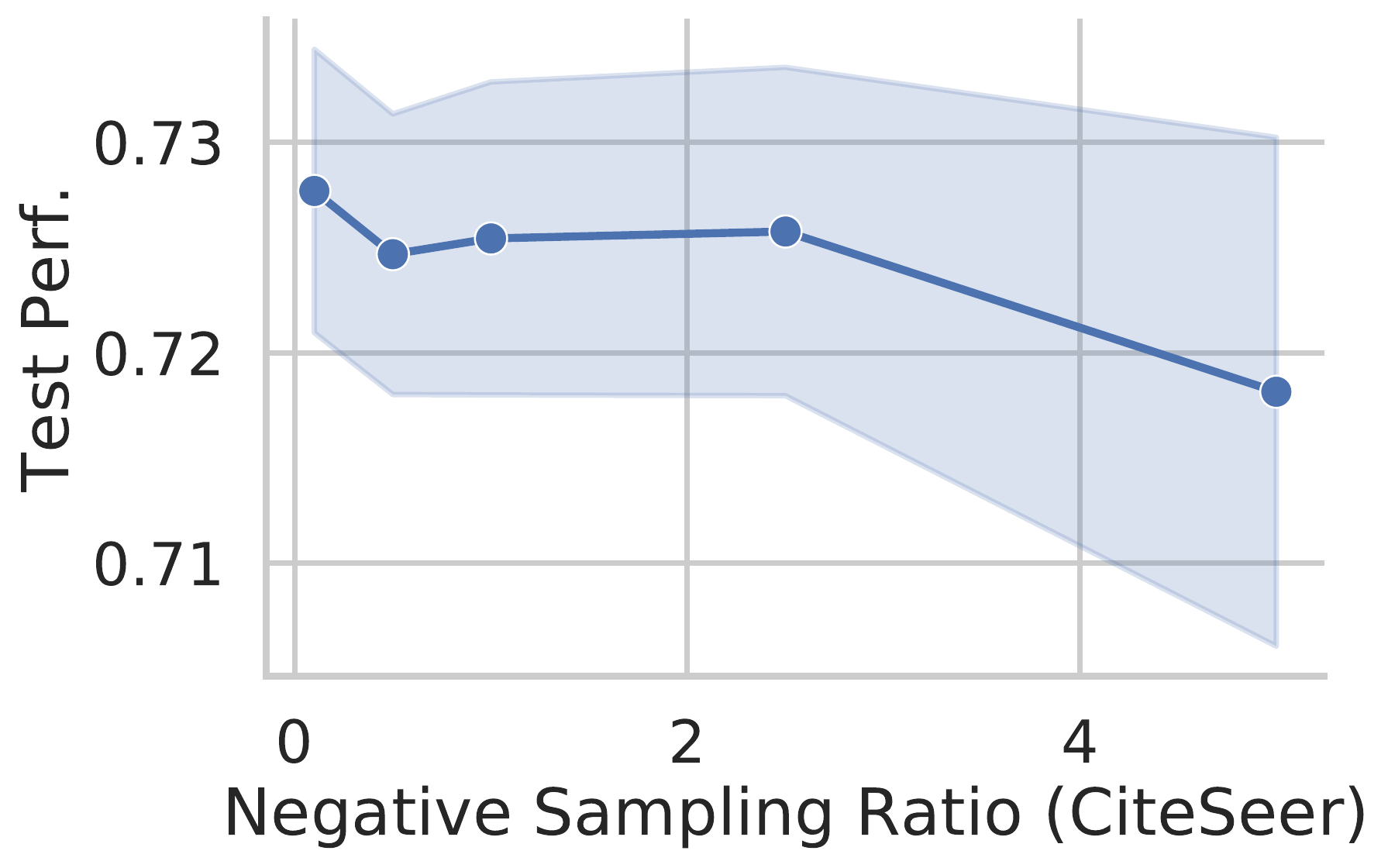}
  \includegraphics[width=0.245\textwidth]{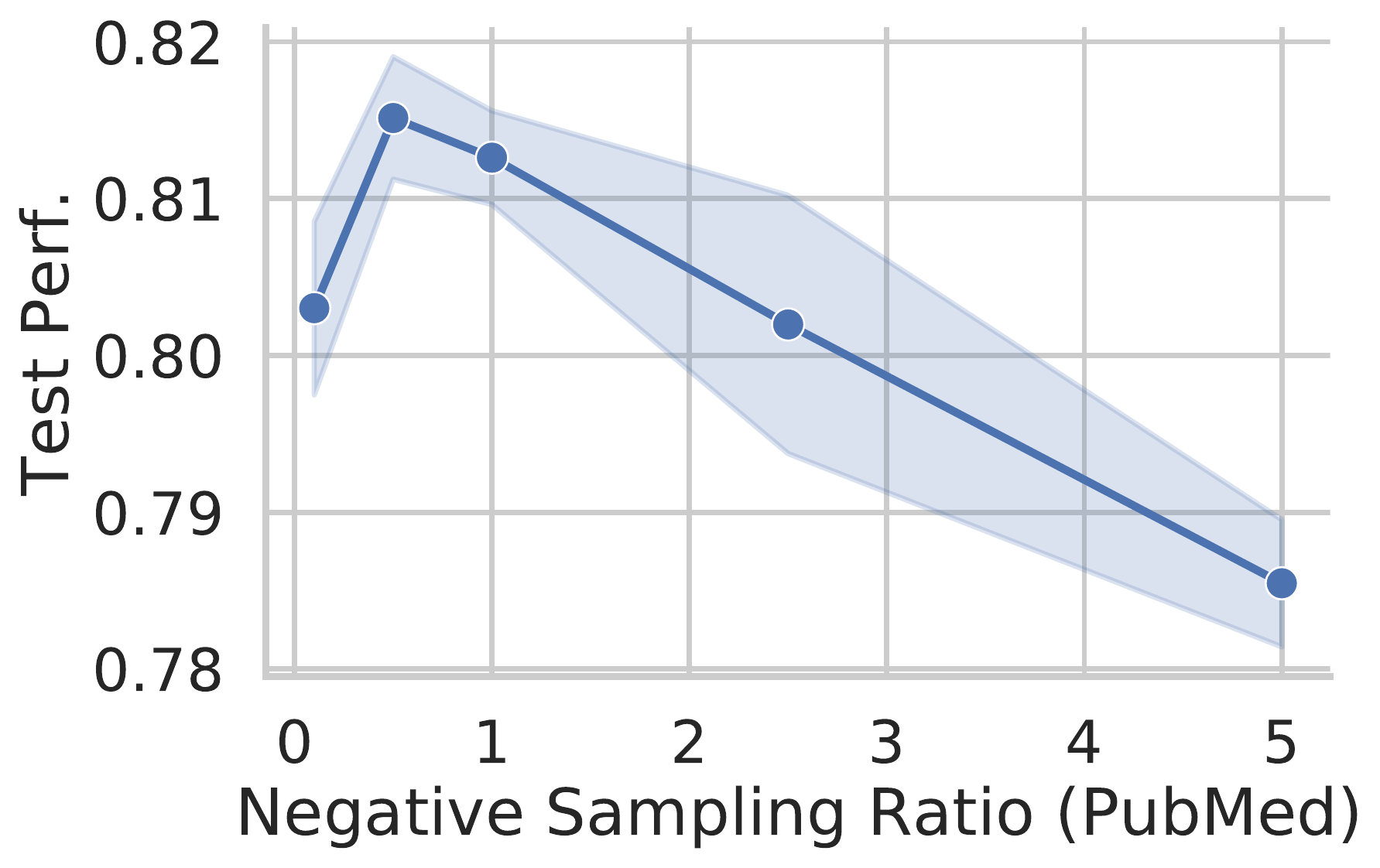}
  \includegraphics[width=0.245\textwidth]{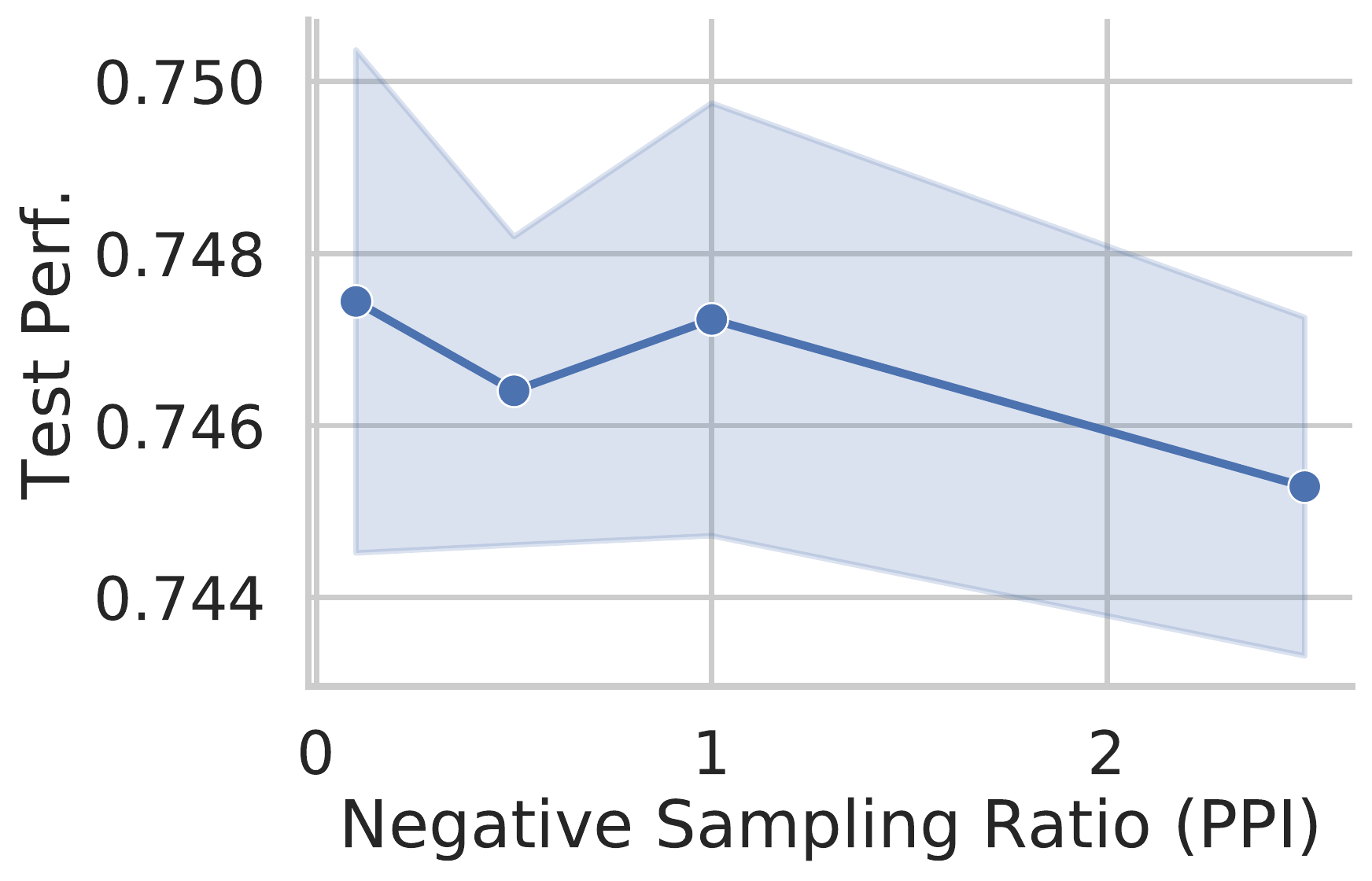}
  \caption{Test performance on node classification against the negative sampling ratio $p_n$ for \SuperGATMXb (Cora, CiteSeer, PubMed) and \SuperGATSDb (PPI).}
  \label{fig:sensitivity-neg-sample-ratio}
  \includegraphics[width=0.245\textwidth]{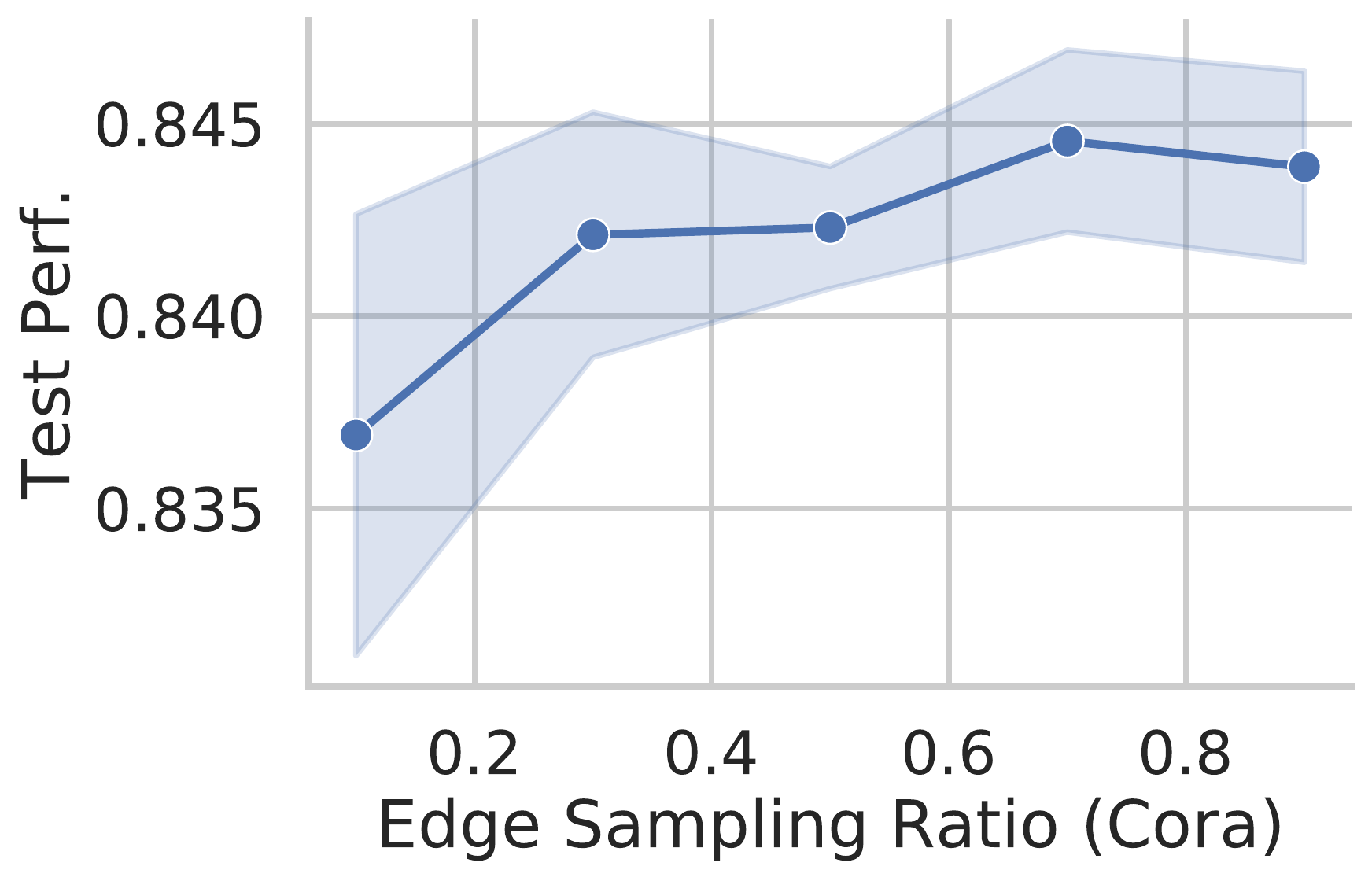}
  \includegraphics[width=0.245\textwidth]{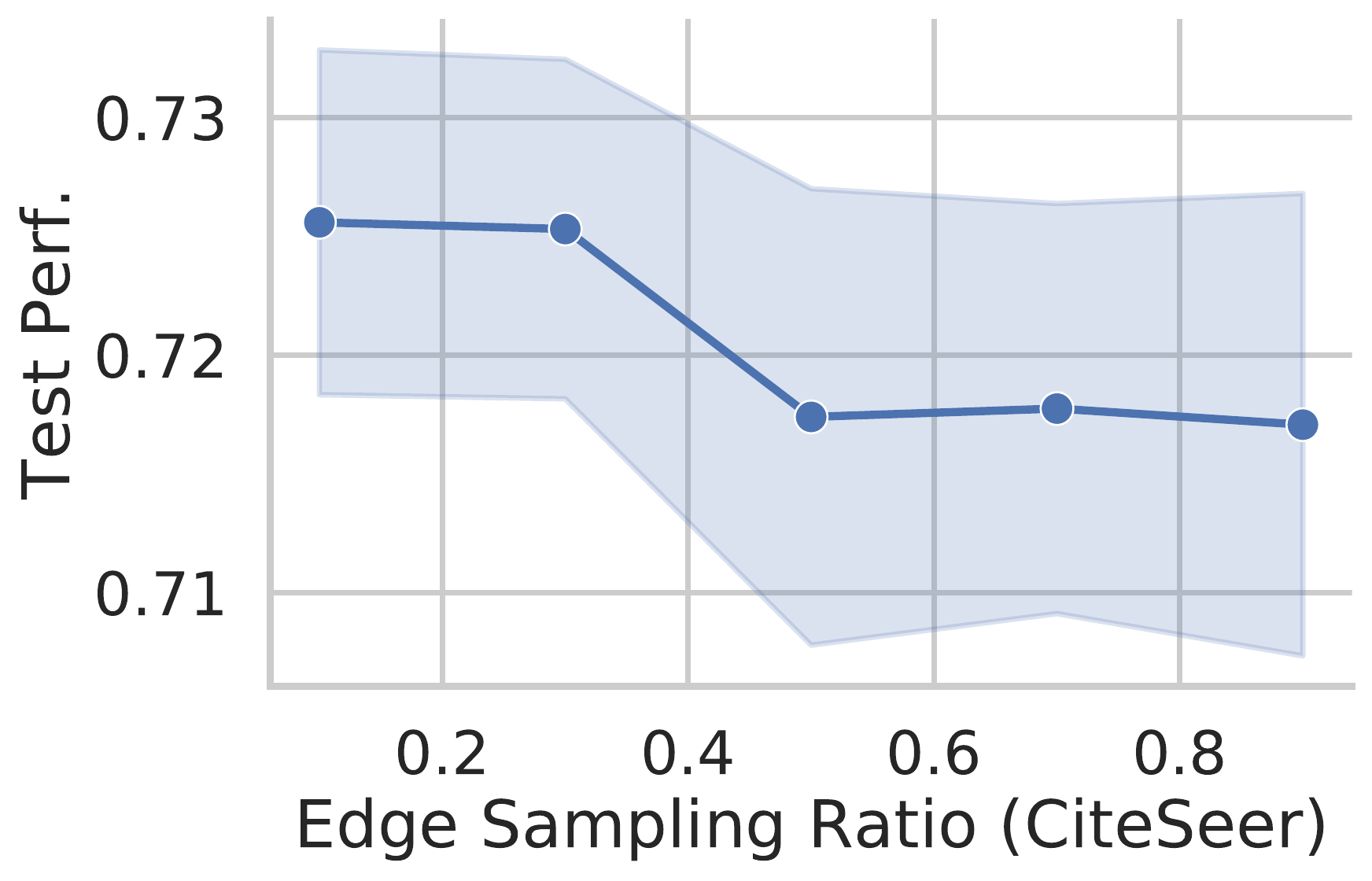}
  \includegraphics[width=0.245\textwidth]{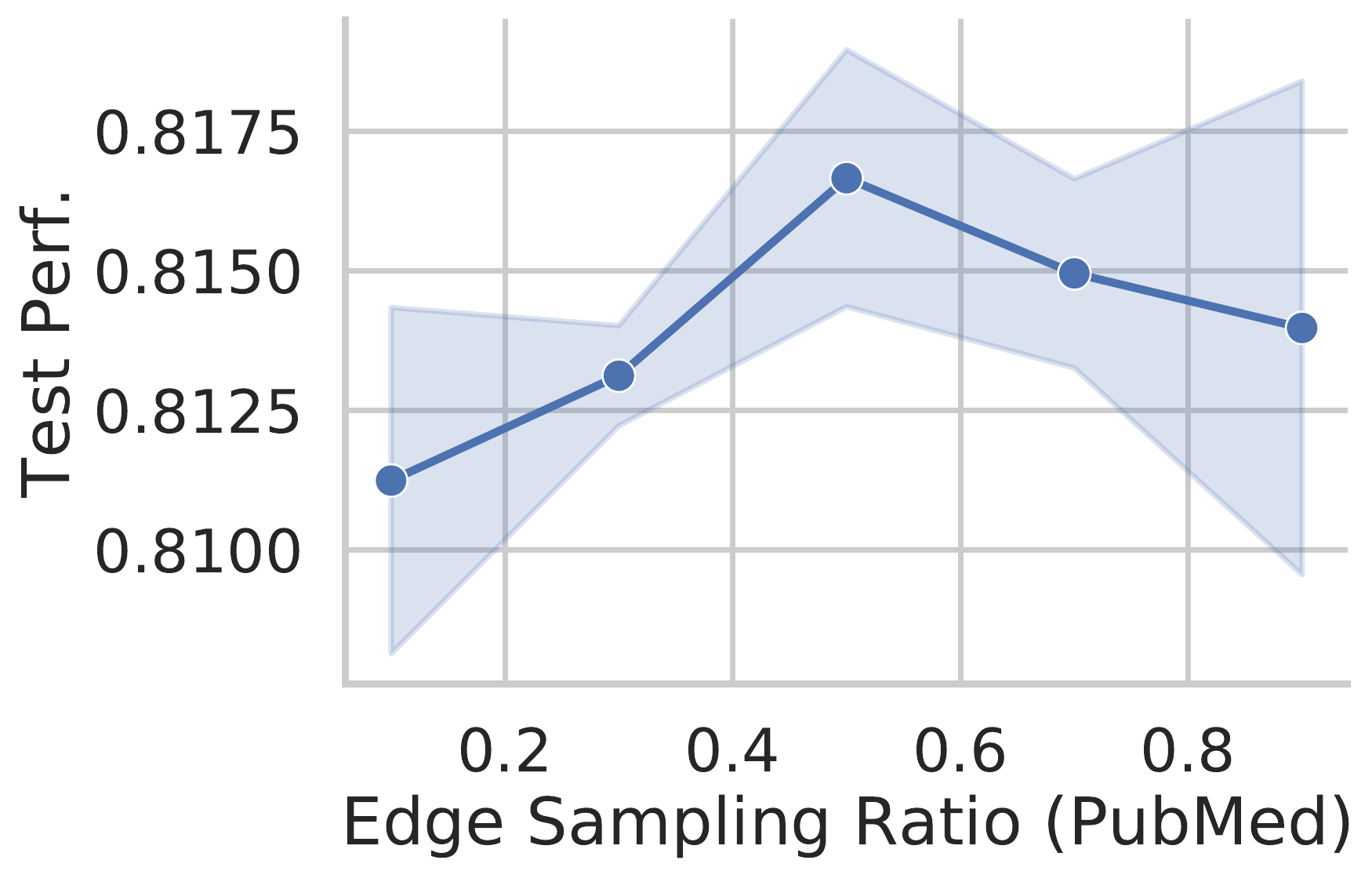}
  \includegraphics[width=0.245\textwidth]{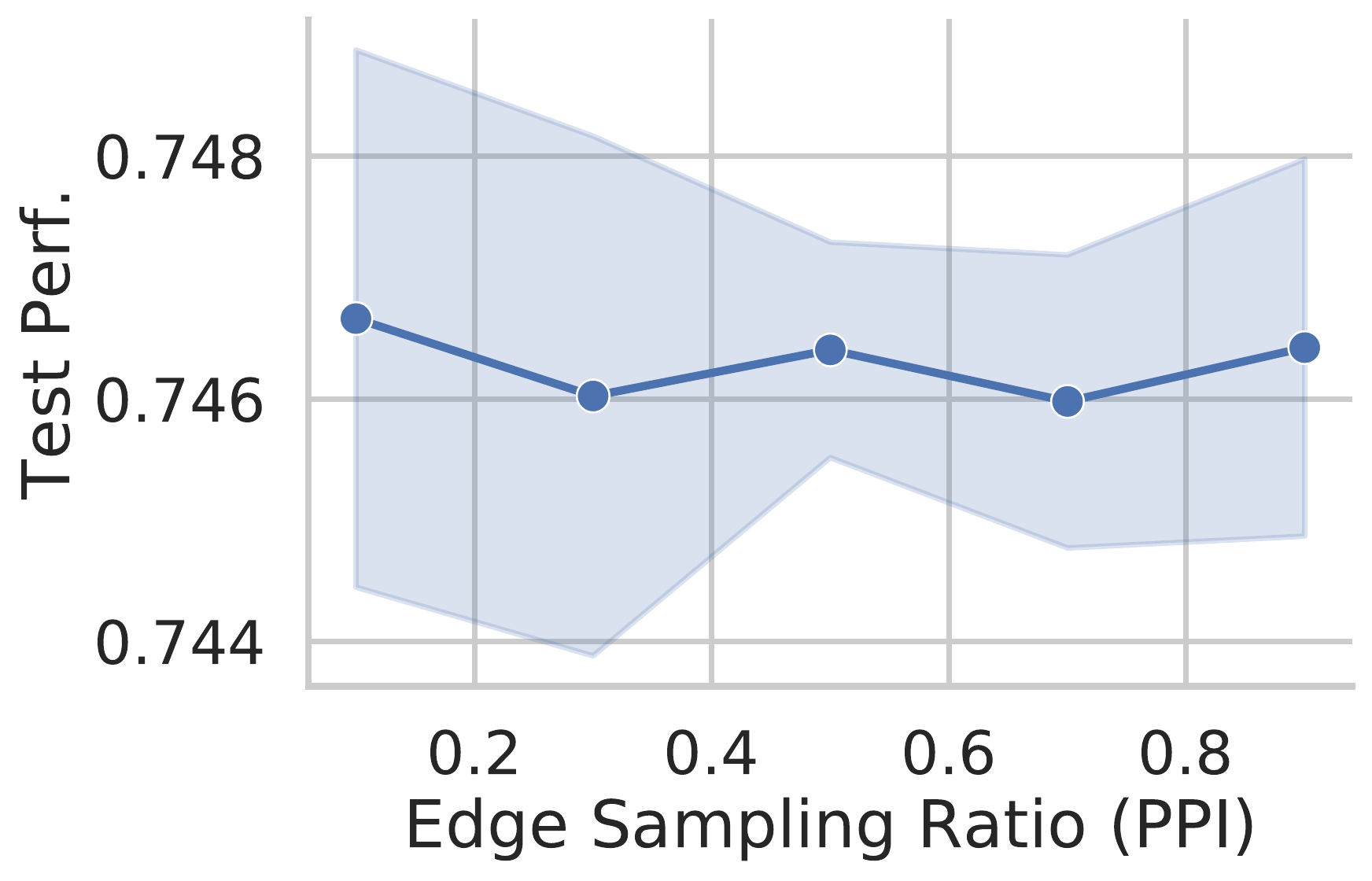}
  \caption{Test performance on node classification against the edge sampling ratio $p_e$ for \SuperGATMXb (Cora, CiteSeer, PubMed) and \SuperGATSDb (PPI).}
  \label{fig:sensitivity-edge-sampling-ratio}
\end{figure}

\end{document}